\def\eqref#1{equation~\ref{#1}}
\def\1{\bm{1}}
\def\vg{{\bm{g}}}
\def\vw{{\bm{w}}}
\def\vx{{\bm{x}}}
\def\mX{{\bm{X}}}
\DeclareMathAlphabet{\mathsfit}{\encodingdefault}{\sfdefault}{m}{sl}
\SetMathAlphabet{\mathsfit}{bold}{\encodingdefault}{\sfdefault}{bx}{n}
\def\gL{{\mathcal{L}}}
\def\gR{{\mathcal{R}}}
\def\gW{{\mathcal{W}}}
\def\gX{{\mathcal{X}}}
\def\gY{{\mathcal{Y}}}
\newcommand{\E}{\mathbb{E}}
\renewcommand{\bm}[1]{\boldsymbol{#1}}
\newcommand{\mc}[1]{\mathcal{#1}}
\newcommand{\mbb}[1]{\mathbb{#1}}
\theoremstyle{plain}
\newtheorem{theorem}{Theorem}[section]
\newtheorem{lemma}[theorem]{Lemma}
\newtheorem{assumption}[theorem]{Assumption}
\theoremstyle{remark}
\newtheorem{remark}[theorem]{Remark}
\definecolor{darkblue}{RGB}{25, 25, 112}
\definecolor{accent}{RGB}{255, 87, 51}
\definecolor{subtle}{RGB}{117, 117, 117}
\title{Low-rank Momentum Factorization for Memory Efficient Training}
\author{\name Pouria Mahdavinia \email pxm5426@psu.edu \\
      \addr Department of Computer Science and Engineering\\
      The Pennsylvania State University 
      \AND
      \name Mehrdad Mahdavi \email mzm616@psu.edu \\
      \addr Department of Computer Science and Engineering\\
      The Pennsylvania State University }
\begin{document}

\maketitle
\begin{abstract}
Fine-tuning large foundation models presents significant memory challenges due to stateful optimizers like AdamW, often requiring several times more GPU memory than inference. While memory-efficient methods like parameter-efficient fine-tuning (e.g., LoRA) and optimizer state compression exist, recent approaches like GaLore bridge these by using low-rank gradient projections and subspace moment accumulation. However, such methods may struggle with fixed subspaces or computationally costly offline resampling (e.g., requiring full-matrix SVDs). We propose Momentum Factorized SGD (MoFaSGD), which maintains a dynamically updated low-rank SVD representation of the first-order momentum, closely approximating its full-rank counterpart throughout training. This factorization enables a memory-efficient fine-tuning method that adaptively updates the optimization subspace at each iteration. Crucially, MoFaSGD leverages the computed low-rank momentum factors to perform efficient spectrally normalized updates, offering an alternative to subspace moment accumulation. We establish theoretical convergence guarantees for MoFaSGD, proving it achieves an optimal rate for non-convex stochastic optimization under standard assumptions. Empirically, we demonstrate MoFaSGD's effectiveness on large language model alignment benchmarks, achieving a competitive trade-off between memory reduction (comparable to LoRA) and performance compared to state-of-the-art low-rank optimization methods. Our implementation is available at \url{https://github.com/pmahdavi/MoFaSGD}.
\end{abstract}
\section{Introduction}
Advancements in AI have been propelled by initial scaling laws, which boost pre-training performance via ever-larger models and datasets \citep{kaplan2020scaling}. However, adapting these large foundation models to downstream tasks, such as instruction or preference tuning \citep{ouyang2022training}, often requires several times more GPU memory than inference. This memory burden largely stems from storing optimizer states (e.g., momentum terms) required by ubiquitous methods like AdamW. Various strategies have emerged to alleviate these costs.

Parameter-Efficient Fine-Tuning (PEFT) methods provide an architectural solution, introducing a small set of new, trainable parameters called adapters. For example, the popular LoRA technique \citep{hu2021lora} restricts model updates to these low-rank adapters, drastically reducing memory overhead by training only a small fraction of parameters. Variants such as DoRA \citep{liu2024dora}, AdaLoRA \citep{zhang2023adalora}, VeRA \citep{kopiczko2023vera} and rsLoRA \citep{kalajdzievski2023rank} have further refined this approach to low-rank parameterization. Other approaches focus on compressing optimizer states directly; methods like AdaFactor \citep{shazeer2018adafactor}, SM3 \citep{anil2019memory}, and quantization techniques \citep{microadam_modoranu_2024,feinberg2024sketchy} factorize or reduce the precision of moments. Alternatively, stateless methods like signSGD \citep{bernstein2018signsgd} and SWAN \citep{swan_ma_2024} avoid momentum accumulation by carefully normalizing gradients each iteration. 

A complementary line of research, \textbf{low-rank subspace optimization}, projects gradients onto a smaller, dynamic subspace to reduce memory. This allows full-parameter updates while operating in a lower-dimensional space. Key examples include GaLore \citep{zhao2024galore}, Flora \citep{hao2024flora}, and ReLoRA \citep{lialin2023relora}. However, applying these dynamic subspace techniques effectively presents difficulties.

\textbf{Challenges in Online Subspace Updates.} While promising, dynamically updating the optimization subspace online, as in GaLore, faces key obstacles. First, managing optimizer states (particularly momentum) during abrupt subspace changes is non-trivial; existing methods may reset states \citep{lialin2023relora}, transform them \citep{hao2024flora}, or leave them unchanged \citep{hao2024flora}. Second, determining the new subspace, often based on gradients, can incur high computational costs (e.g., full SVD on large gradient matrices \citep{zhao2024galore}), hindering scalability. These limitations highlight the need for more efficient, iteration-level subspace adaptation strategies.

\subsubsection*{Contributions} Motivated by the success of gradient-based techniques for memory-efficient fine-tuning, the advantages of dynamic projection subspaces over static ones, and the potential for online subspace updates to better track full-rank optimization dynamics, we address three research questions:
\begin{itemize}
    \item[\textbf{(Q1)}] Can we mitigate the challenges of online subspace resampling through a computationally efficient update strategy while rigorously tracking projection residuals?
    \item[\textbf{(Q2)}] Can we directly estimate full-rank adaptive optimizer trajectories using low-rank factors, instead of relying on compressed accumulation of subspace moments?
    \item[\textbf{(Q3)}] Can we match the theoretical per-iteration complexity of standard adaptive optimizers while maintaining LoRA-level memory savings?
\end{itemize}
We answer these questions affirmatively by introducing MoFaSGD, which maintains a dynamically updated low-rank SVD factorization of the first-order momentum. We define this factorization as:
\begin{equation}
\label{eq:momentum_factorization}
\hat{\bm M}_{t} = \bm U_{t+1} \operatorname{Diag}( \bm \sigma_{t+1}) \bm V_{t+1}^{\top} \approx \beta \hat{\bm M}_{t-1} + (1-\beta) \bm G_{t} \quad \text{(First-Momentum Factorization)}
\end{equation}
Here, $\hat{\bm{M}}_{t}$ denotes the approximate first-order momentum at iteration $t$, $\bm{G}_t$ is the gradient at iteration $t$, and $\beta$ is the momentum decay factor. The low-rank SVD factors $\bm{U}_{t+1} \in \mathbb{R}^{m \times r}$ (left singular vectors), $\bm{\sigma}_{t+1} \in \mathbb{R}^{r}$ (singular values), and $\bm{V}_{t+1} \in \mathbb{R}^{n \times r}$ (right singular vectors) serve as the optimizer state in our algorithm.

Motivated by our observation that the exponential moving average (EMA) of gradients, defined as $\sum_{i=1}^t \beta^{t-i} \bm G_i$, often exhibit low-rank structure, MoFaSGD applies a specific low-rank approximation strategy: maintaining an online, low-rank SVD factorization of the \emph{first-order momentum}. This differs from state compression methods that typically target second moments \citep{shazeer2018adafactor} and avoids the complex factored approximations of curvature used in methods like Shampoo or KFAC \citep{gupta2018shampoo, martens2015optimizing}. The factorization is updated efficiently at each iteration using tangent space projections, bypassing costly offline resampling \citep{zhao2024galore}. Crucially, MoFaSGD leverages the \emph{same} continuously updated low-rank momentum factors ($\bm U_{t+1}, \bm V_{t+1}$) to perform spectral normalization ($\bm W_{t+1} \leftarrow \bm W_t - \eta \bm U_{t+1} \bm V_{t+1}^\top$). This integrated approach, inspired by methods like Shampoo \citep{gupta2018shampoo} and Muon \citep{jordan2024muon}, provides adaptive step directions without requiring separate, computationally intensive matrix operations (e.g., SVD, matrix roots, or Newton-Schulz iterations) at each step.

MoFaSGD achieves LoRA-like memory savings through a unified low-rank momentum representation, while also enabling efficient, adaptive full-parameter updates via spectral normalization. This positions our method as a low-rank variant of Muon. Unlike MoFaSGD, Muon maintains full-rank momentum buffers ($\mathcal{O}(mn)$ memory) and applies full-rank updates after normalization.

\section{Related Work}

To improve memory efficiency for training large-scale neural networks, various strategies have emerged, including parameter-efficient fine-tuning, subspace optimization, optimizer state compression, stateless approaches, and partial update techniques. Moreover, second-order and preconditioning methods such as Shampoo~\citep{gupta2018shampoo} have been gaining popularity due to their faster convergence compared to AdamW, albeit with even higher memory requirements, as studied in detail in~\citet{kasimbeg2025accelerating}.

Below, we detail relevant methods like subspace optimization, optimizer state compression, stateless approaches, and gradient spectral normalization; other approaches are deferred to Appendix ~\ref{sec:app-rw}.

\subsubsection*{Subspace Optimization} Rather than introducing additional parameters through PEFT adapters, subspace optimization methods focus on reducing the memory footprint of gradients and optimizer states directly by projecting gradients onto a low-rank subspace and performing moment accumulation in the subspace as well \citep{zhao2024galore,gur2018gradient, gressmann2020improving, yang2023spectral, vogels2020practical}. GaLore \citep{zhao2024galore} projects gradients onto a subspace defined by either the left or right singular vectors of the gradients, computed via SVD, and accumulates optimizer states within this subspace, thereby reducing the required optimizer memory. Flora \citep{hao2024flora} periodically resamples the subspace projection matrix using a multivariate Gaussian distribution for low-rank gradient projections. 

AdaRankGrad \citep{refael2024adarankgrad} dynamically adjusts gradient rank during training. LDAdam \citep{ldadam_robert_2024} and APOLLO \citep{zhu2024apollo} maintain optimizer states in low-dimensional representations, adopting similar strategy to subspace moment accumulation as in GaLore~\citep{zhao2024galore}. Subspace management is a key challenge. Computing it can require costly operations like full SVD~\citep{zhao2024galore}, and its update frequency creates a trade-off: infrequent updates can lead to stale information, while frequent updates can disrupt optimizer state accumulation. Managing optimizer states across subspace changes also requires careful strategies \citep{lialin2023relora, hao2024flora}.

\noindent\textbf{Optimizer State Compression.} This approach directly reduces the memory footprint of existing optimizer states, primarily the second moments used in adaptive methods. AdaFactor \citep{shazeer2018adafactor} and SM3 \citep{anil2019memory} achieve this through factorization techniques, reducing memory complexity from $\mathcal{O}(mn)$ to $\mathcal{O}(m+n)$. Quantization methods like 8-bit Adam \citep{dettmers20218} and 4-bit variants \citep{li2023memory} reduce the precision of stored moments, achieving near full-precision performance with significant memory savings. CAME \citep{luo2023came} and Adapprox \citep{zhao2024adapprox} refine these compression techniques for potentially better accuracy. 

\noindent\textbf{Stateless and Gradient Normalization Methods.} 
Eliminating optimizer states entirely provides maximal memory efficiency. signSGD~\citep{bernstein2018signsgd} achieves this by using only the sign of the gradient. Lion~\citep{chen2024symbolic} similarly reduces the memory footprint of AdamW by discarding the second-moment estimate. Gradient normalization methods such as SWAN~\citep{swan_ma_2024} and Muon~\citep{jordan2024muon} emulate the behavior of adaptive optimizers without storing second-moment statistics. Muon, in particular, applies momentum followed by approximate orthogonalization of the momentum matrix ($\bm{U}_{\bm{M}}\bm{V}_{\bm{M}}^\top$, where $\bm{M} = \bm{U}_{\bm{M}} \bm{\Sigma}_{\bm{M}} \bm{V}_{\bm{M}}^\top$) using efficient Newton-Schulz iterations~\citep{jordan2024muon}, drawing connections to accumulation-free variants of Shampoo~\citep{jordan2024muon, old_bernstein_2024}. However, Muon still retains the Nesterov momentum, and thus is not considered fully stateless.

\noindent\textbf{Positioning MoFaSGD.} The landscape of memory-efficient optimization presents a spectrum of trade-offs. Architectural-based solutions such as PEFT methods offer high memory savings but restrict fine-tuning updates~\citep{hu2021lora} to a low-rank space. Subspace methods enable full-rank parameter updates but encounter challenges such as costly or potentially disruptive subspace resampling and moment accumulation~\citep{zhao2024galore, hao2024flora}; moreover, subspace techniques are mostly applied to AdamW, while their application to non-diagonal-based preconditioning methods (such as Shampoo and its variants~\citep{gupta2018shampoo,george2018fast,jordan2024muon}) remains largely unexplored. Optimizer state compression techniques focus on reducing the storage cost of second moments~\citep{shazeer2018adafactor,dettmers20218}. Stateless methods are memory-efficient but discard historical information~\citep{bernstein2018signsgd,swan_ma_2024}. This work introduces MoFaSGD to navigate these challenges.
\section{Preliminaries}
\label{sec:prelim}
This section introduces notation and provides background on subspace and adaptive optimization methods.

\noindent\textbf{Notation and Conventions.}
We denote matrices with bold capital letters (e.g., $\mX$), vectors with bold lowercase letters (e.g., $\vx$), and scalars with non-bold letters (e.g., $x$). The Frobenius inner product for matrices is denoted as $\langle \cdot, \cdot \rangle$ (e.g., $\langle \bm{X}, \bm{Y} \rangle = \operatorname{Tr}(\bm{X}^{\top} \bm{Y})$). Norms are denoted by $\|\cdot\|$, with specific norms indicated using subscripts (e.g., $\|\cdot\|_{\mathrm{F}}$ for the Frobenius norm). The loss function is represented by $\mc{L}(\cdot)$. Without loss of generality, we denote the model parameters as $\bm{W}_t  \in \mbb{R}^{m \times n}$, and the full-rank gradient at $\bm{W}_t$ is denoted as $\bm{G}_t = \frac{\partial \mc{L}(\bm{W})}{\partial \bm{W}}|_{\bm W_t} \in \mbb{R}^{m \times n}$. The subscript index $t$ denotes the iteration step of any optimizer for each weight matrix. Gradients are defined based on the specific batches used in the corresponding iteration. To simplify notation, we use the same symbols as previously defined where applicable. For any matrix $\bm{X} \in \mbb{R}^{m_1 \times m_2}$, the vectorization operator $\operatorname{Vec}(\cdot)$ stacks its columns into a single vector, denoted as $\bm{x} = \operatorname{Vec}(\bm{X}) \in \mbb{R}^{m_1 m_2 \times 1}$. The vectorized version of a matrix is always denoted by the corresponding bold lowercase letter. The Kronecker product is denoted as $\otimes$, such that for $\bm{X} \in \mbb{R}^{m_1 \times n_1}$ and $\bm{Y} \in \mbb{R}^{m_2 \times n_2}$, the Kronecker product $\bm{X} \otimes \bm{Y} \in \mbb{R}^{m_1 m_2 \times n_1 n_2}$. The symbol $\odot$ denotes the element-wise (Hadamard) product between two matrices of the same dimensions. See Appendix~\ref{appx:note} for further notational details.

\subsection*{Adaptive methods with Switched-off Momentums}
Without first-order momentum, adaptive methods can be framed as a form of online mirror descent:
\begin{equation}
\label{eqn:omd}
\begin{split}
    \bm w_{t+1} = \underset{\bm w}{\arg \min}  \; \mc L(\bm w_t) + \langle \bm w - \bm w_t , \bm g_t \rangle + \frac{1}{2 \eta} \| \bm w - \bm w_t\|_{\bm P_t} = \bm w_t - \eta \bm P_t^{-1} \bm g_t
\end{split}
\end{equation}
where the preconditioner $\bm P_t$ can be any positive semidefinite matrix, and $\mc L(\bm w_t) + \langle \bm w_{t+1} - \bm w_t , \bm g_t \rangle$ is the local approximation of the loss function. For instance, if we let $\bm P_{t,\text{AdaGrad}} =  \left [ \sum_{i=1}^t \bm g_i \bm g_i^{\top}  \right ]^{\frac{1}{2}}$, we exactly recover full-matrix Adagrad~\citep{Duchi2011Adagrad}. 

Similarly, if we define $\bm L_t = \sum_{i=1}^t \bm G_i \bm G_i^{\top}$ and $ \bm R_t = \sum_{i=1}^t \bm G_i^{\top}  \bm G_i$,  then letting $\bm P_{t,\text{Shampoo}} = (\bm R_t \otimes \bm L_t )^{\frac{1}{4}}$ recovers the Shampoo updates for matrices~\citep{gupta2018shampoo}. Furthermore, if we consider that $\bm g_t \bm g_t^{\top}$ closely approximates the Gauss-Newton components of the true Hessian~\citep{morwani2024new}, this motivates switching off covariance momentum (second moment) and considering the following preconditioners based on Adagrad and Shampoo: $\bm P_{t,1} = \left[ \bm g_t \bm g_t^{\top} \right]^{\frac{1}{2}}$ and $\bm P_{t,2} = ( \bm G_t^{\top} \bm G_t \otimes \bm G_t \bm G_t^{\top})^{\frac{1}{4}}$. 

A simple derivation shows that using the diagonal version of $\bm P_{t,1}$ in Equation~\ref{eqn:omd} recovers Signed-SGD~\citep{bernstein2018signsgd}. Moreover, if we consider the reduced SVD of the gradient $\bm G_t = \bm U_{\bm G_t} \bm \Sigma_{\bm G_t} \bm V_{\bm G_t}^{\top}$, then we can write $\bm P_{t,2} = (\bm V_{\bm G_t} \bm \Sigma_{\bm G_t}^2 \bm V_{\bm G_t}^{\top} \otimes \bm U_{\bm G_t} \bm \Sigma_{\bm G_t}^2 \bm U_{\bm G_t}^{\top})^{\frac{1}{4}} $. Using Lemma~\ref{lemma:kron-base} in the appendix, we have:
\begin{equation}
\begin{split}
     \bm P_{t,2} = (\bm V_{\bm G_t} \bm \Sigma_{\bm G_t}^2 \bm V_{\bm G_t}^{\top} \otimes \bm U_{\bm G_t} \bm  \Sigma_{\bm G_t}^2 \bm U_{\bm G_t}^{\top})^{\frac{1}{4}} = (\bm V_{\bm G_t} \otimes \bm U_{\bm G_t}) (\bm \Sigma_{\bm G_t}^{\frac{1}{2}} \otimes  \bm \Sigma_{\bm G_t}^{\frac{1}{2}})(\bm V_{\bm G_t} \otimes \bm U_{\bm G_t})^{\top}
 \end{split}
\end{equation}
Furthermore, we can write $\bm g_t = \operatorname{Vec}(\bm G_t) = (\bm V_{\bm G_t} \otimes \bm U_{\bm G_t}) \operatorname{Vec}(\bm \Sigma_{\bm G_t})$. Substituting the preconditioner $\bm P_{t,2}$ and the vectorized gradient into Equation~\ref{eqn:omd} yields:
\begin{equation}
\begin{split}
    \bm w_{t+1} &= \bm w_t - \eta  (\bm V_{\bm G_t} \otimes \bm U_{\bm G_t}) (\bm \Sigma_{\bm G_t}^{-\frac{1}{2}} \otimes  \bm \Sigma_{\bm G_t}^{-\frac{1}{2}})(\bm V_{\bm G_t} \otimes \bm U_{\bm G_t})^{\top} \bm g_t \\
    &=  \bm w_t  - \eta (\bm V_{\bm G_t} \otimes \bm U_{\bm G_t}) (\bm \Sigma_{\bm G_t}^{-\frac{1}{2}} \otimes  \bm \Sigma_{\bm G_t}^{-\frac{1}{2}}) \operatorname{Vec}(\bm \Sigma_{\bm G_t}) \\
    &= \bm w_t - \eta (\bm V_{\bm G_t} \otimes \bm U_{\bm G_t}) \operatorname{Vec}(\bm I_r) = \bm w_t - \eta \bm U_{\bm G_t} \bm V_{\bm G_t}^\top
\end{split}
\end{equation}
Thus, using $\bm P_{t,2}$ recovers spectrally normalized updates, sometimes referred to as gradient whitening~\citep{jordan2024muon,swan_ma_2024}. For a similar derivation, see~\citet{old_bernstein_2024}. Additional background on the historical development of adaptive optimization methods is provided in Appendix~\ref{sec:add-pre}.

\subsection*{Subspace Optimization Methods} Many studies support the conjecture that gradients during the training of deep neural networks exhibit a low-rank structure, lying in a low-dimensional subspace~\citep{gur2018gradient,gressmann2020improving,yang2023spectral}. This property has been studied both theoretically and empirically and has been leveraged to improve optimization algorithms ranging from communication-efficient distributed training~\citep{vogels2020practical} to efficient fine-tuning~\citep{hu2021lora,lialin2023relora,hao2024flora,zhao2024galore}. \citet{zhao2024galore} aim to explicitly leverage this property to perform subspace training, where gradients are projected and accumulated in a low-rank subspace, leading to significant memory footprint reduction. 

We briefly clarify subspace methods mathematically, following the formulation of GaLore~\citep{zhao2024galore}. Let $\bm W_t \in \mbb R^{m \times n}$ represent the model parameters, and let $\bm Q_t \in \mbb R^{m \times r} $ be the projection matrix defining the subspace at iteration $t$. Then, GaLore performs the following update:
\begin{equation}
\begin{array}{rlrl}
\bm G_{t,r} &= \bm Q_t^{\top} \bm G_t \in \mathbb{R}^{r \times n} & \quad \text{(Subspace projection)} \\
\bm M_{t,r} &= \beta_1 \bm M_{t-1,r} + (1 - \beta_1) \bm G_{t,r} & \quad \text{(First subspace moment)} \\[0.5em]
\bm V_{t,r} &= \beta_2 \bm V_{t-1,r} + (1 - \beta_2) (\bm G_{t,r} \odot \bm G_{t,r}) & \quad \text{(Second subspace moment)} \\
\bm W_{t+1} &= \bm W_t - \eta_t \bm Q_t\left( \frac{\bm M_{t,r}}{\sqrt{\bm V_{t,r}}} \right) & \quad \text{(Project back and update)}
\end{array}
\end{equation}
Moreover, by switching off subspace momentum accumulations, the GaLore update can simply be seen as projected gradient descent: $\bm W_{t+1} = \bm W_t - \eta_t \bm Q_t \bm Q_t^{\top} \bm G_t$. In addition to subspace accumulation, GaLore performs offline updates of the subspace $\bm Q_t$ by updating it as $\bm U_{\bm G_t}^{1:r}$, the top-$r$ left singular vectors of $\bm G_t$, obtained via an SVD operation at predetermined intervals.
\section{Momentum Factorized SGD with Spectral Normalization}
This section introduces MoFaSGD, our memory-efficient optimization method, and its theoretical underpinnings. We begin in Subsection~\ref{sec:core-alg} by detailing the core algorithmic ideas behind MoFaSGD. This includes the motivation derived from the low-rank structure observed in optimizers, the process of maintaining and efficiently updating low-rank momentum factors using tangent space projection, and the subsequent use of these factors for spectrally normalized parameter updates (Algorithm~\ref{alg:mofasgd_complete}). Following the algorithmic description, Subsection~\ref{sec:core-theory} presents the convergence properties and theoretical analysis of MoFaSGD, justifying our design choices and establishing formal performance guarantees.

\subsection{The MoFaSGD Algorithm}
\begin{algorithm}[t]
\caption{\textbf{MoFaSGD}: Momentum Factorized Stochastic Gradient Descent}
\label{alg:mofasgd_complete}
\begin{tabular}{@{}p{0.54\textwidth}@{\hspace{0.8em}}|@{\hspace{0.8em}}p{0.42\textwidth}@{}}
\begin{minipage}[t]{\linewidth}
\begin{algorithmic}[1]
\Require Step size $\eta$, decay $\beta$, rank $r$
\Ensure Optimized weights $\bm{W}_t$
\State \textbf{Initialize:} $\bm{W}_0$
\State $\bm{G}_0 \gets \nabla_{\bm{W}} \mathcal{L}(\bm{W}_0)$
\State Initialize moment factors: $(\bm{U}_0, \bm{\Sigma}_0, \bm{V}_0) \gets \operatorname{SVD}_r(\bm{G}_0)$
\State $t \gets 0$
\State \textbf{\textcolor{darkblue}{repeat}}
\State \quad $\bm{G}_t \gets \nabla_{\bm{W}} \mathcal{L}(\bm{W}_t)$
\State \quad $(\bm{U}_{t+1}, \bm{\Sigma}_{t+1}, \bm{V}_{t+1}) \gets \textcolor{accent}{\textsc{UMF}}(\bm{G}_t, \bm{U}_t, \bm{\Sigma}_t, \bm{V}_t, \beta)$
\State \quad $\bm{W}_{t+1} \gets \bm{W}_t - \eta \bm{U}_{t+1} \bm{V}_{t+1}^\top$
\State \quad $t \gets t + 1$
\State \textbf{\textcolor{darkblue}{until}} convergence criterion is met
\State \textbf{return} $\bm{W}_t$
\end{algorithmic}
\end{minipage}
&
\begin{minipage}[t]{\linewidth}
\vspace{-0.5em}
\textbf{\textcolor{darkblue}{function}} \textcolor{accent}{\textsc{UMF}}($\bm{G}_t$, $\bm{U}_t$, $\bm{\Sigma}_t$, $\bm{V}_t$, $\beta$)
\begin{algorithmic}[1]
\State \textcolor{subtle}{\textit{Compute subspace projections:}}
\State \quad $\bm{G}_t \bm{V}_t$, $\bm{U}_t^{\top} \bm{G}_t$, $\bm{U}_t^{\top} \bm{G}_t \bm{V}_t$
\State Compute QR factors:
\State \quad $(\bm{U}_t', \bm{R}_{\bm{U}_t}) = \textsc{QR}([\bm{U}_t \quad \bm{G}_t \bm{V}_t])$
\State \quad $(\bm{V}_t', \bm{R}_{\bm{V}_t}) = \textsc{QR}([\bm{V}_t \quad \bm{G}_t^{\top} \bm{U}_t])$
\State Construct $2r \times 2r$ matrix:
\State \quad $\bm{S}_t = \bm{R}_{\bm{U}_{t}}\begin{bmatrix} \beta\bm{\Sigma}_{t}-\bm{U}_{t}^{\top}\bm{G}_{t}\bm{V}_{t} & \bm{I}_{r} \\ \bm{I}_{r} & \bm{0}_{r} \end{bmatrix} \bm{R}_{\bm{V}_{t}}^{\top}$
\State Compute rank-$r$ SVD:
\State \quad $\bm{U}_{t}^{\prime \prime} \bm{\Sigma}_{t}^{\prime \prime} (\bm{V}_{t}^{\prime \prime})^{\top} \gets \operatorname{SVD}_{r}(\bm{S}_t)$
\State $\bm{U}_{t+1} \gets \bm{U}_{t}^{\prime} \bm{U}_{t}^{\prime \prime}$
\State $\bm{V}_{t+1} \gets \bm{V}_{t}^{\prime} \bm{V}_{t}^{\prime \prime}$
\State $\bm{\Sigma}_{t+1} \gets \bm{\Sigma}_{t}^{\prime \prime}$
\State \textbf{return} $(\bm{U}_{t+1}, \bm{\Sigma}_{t+1}, \bm{V}_{t+1})$
\end{algorithmic}
\end{minipage}
\end{tabular}
\end{algorithm}

\subsubsection*{Motivation: Low-Rank Momentum Factors}
\label{sec:core-alg}
\citet{feinberg2024sketchy} show that the EMA of gradient covariance $\sum_{i=1}^t \beta^{t-i} \bm G_i \bm G_i^{\top}$ and $\sum_{i=1}^t \beta^{t-i} \bm G_i^{\top} \bm G_i$ maintain low-rank structure and spectral decay properties throughout training, and~\citet{zhao2024galore} argue that the gradients themselves become low-rank during fine-tuning. Building on these observations, we conjecture that the gradient EMAs exhibit low-rank properties. We experimentally evaluate this conjecture in Section~\ref{sec:ablations} and show that the mass of the gradient EMA is largely centered on its top few singular values. Low-rank structure has also been widely leveraged for LLM fine-tuning, e.g., LoRA~\citep{hu2021lora} adapts low-rank matrices during training, while GaLore~\citep{zhao2024galore} leverages the low-rank structure of gradients. Thus, leveraging the low-rank property of the gradient EMA connects the implicit assumptions underlying GaLore and LoRA.

Building on this conjecture, we propose to maintain a low-rank SVD factorized representation of the first momentum. We highlight that this factorization plays two major roles in our algorithm design. First, it is leveraged for online subspace sampling, and second, it provides a low-rank estimation of the first momentum (gradient EMA). Formally, we define the low-rank moment factors as:
\begin{equation}
\label{eqn:factored-moment}
\hat{\bm M}_t \triangleq \bm U_{t+1} \bm \Sigma_{t+1} \bm V_{t+1}^{\top} \approx \sum_{i=1}^t \beta^{t-i} \bm G_i 
\tag{Low-rank Moment Factors}
\end{equation}
where $\{\bm G_i\}_{i=1}^t$ are the observed gradients until iteration $t$. The left moment factor $\bm U_{t+1} \in \mbb R^{m \times r}$ and the right moment factor $\bm V_{t+1} \in \mbb R^{n \times r}$ are orthogonal matrices, while $\bm \Sigma_{t+1} \in \mbb R^{r \times r}$ is diagonal.

Our method maps the full-rank gradient $\bm G_t$ to the tangent space of the previous moment factor representation $(\bm U_t, \bm \Sigma_t, \bm V_t)$, to ensure a smooth adaptation of the subspace when a new gradient arrives. Formally, we leverage the tangent space of the previous iteration $\mc T_{(\bm U_t, \bm \Sigma_t, \bm V_t)}$ as the new subspace for projection, which we denote as $\mc T_t$:
\begin{equation}
\begin{split}
 \mc T_t &= \left \{ \bm U_t \bm M \bm V_t^{\top} + \bm U_p \bm V_t^{\top} + \bm U_t \bm V_p^{\top} | \bm M \in \mbb R^{r \times r} , \bm U_p \in \mbb R^{m \times r} , \bm V_p \in \mbb R ^{n \times r}, \bm U_t^{\top} \bm U_p= \bm 0, \bm V_t^{\top} \bm V_p = 0 \right\}
\end{split}
\end{equation}
and the projection to this subspace can be derived as follows:
\begin{equation}
\label{eqn:proj-tang}
 \hat{\bm G}_t \triangleq \operatorname{Proj}_{\mc T_t} (\bm G_t) = \bm U_t \bm U_t^{\top} \bm G_t + \bm G_t \bm V_t \bm V_t^{\top} - \bm U_t \bm U_t^{\top} \bm G_t \bm V_t \bm V_t^{\top}
\tag{Online subspace projection}
\end{equation}
where the definition of projection to the tangent space is $\operatorname{Proj}_{\mc T_t} (\bm G_t) = \underset{\bm G \in \mc T_t }{\arg \min} \, \| \bm G - \bm G_t\|_{\mathrm{F}}$.

Projecting onto the tangent subspace of previous gradients as shown in Theorem~\ref{thm:main-1}, results in a lower compression error compared to the left, right, or two-sided subspace projections used in GaLore~\citep{zhao2024galore}.

\subsubsection*{Efficient Momentum Factor Updates}
The second component of our approach is to efficiently approximate the current moment factor $\hat{\bm{M}}_t$ given the projected gradient $\hat{\bm{G}}_t$ and the previous moment factor $\hat{\bm{M}}_{t-1}$. A naive update, $\hat{\bm{M}}_t = \operatorname{SVD}_r \bigl(\hat{\bm{G}}_t + \beta \hat{\bm{M}}_{t-1} \bigr)$ where $\operatorname{SVD}_r(\cdot)$ denotes the rank-$r$ truncated SVD, involves a computationally expensive SVD operation, which we aim to avoid. Since both $\hat{\bm M}_{t-1}$ and $\hat{\bm G}_t$ are rank-$r$, their sum has a rank of at most $2r$. This observation allows us to approximate $\hat{\bm M}_{t}$ in $\mathcal{O}((m+n)r^2)$, which is far more efficient than a full-matrix SVD. Let $(\bm U'_t ,\bm R_{\bm U_t}) = \operatorname{QR} \bigl( \begin{bmatrix} \bm U_t & \bm G_t \bm V_t \end{bmatrix} \bigr) $, and $(\bm V'_t ,\bm R_{\bm V_t}) = \operatorname{QR} \bigl(\begin{bmatrix} \bm V_t & \bm G_t^{\top} \bm U_t \end{bmatrix} \bigr)$, where $\operatorname{QR}$ stands for the QR decomposition, and $\bm U'_t \in \mbb R^{m \times 2r}$, $\bm V'_t \in \mbb R^{n \times 2r}$ and $\bm R_{\bm U_t},\bm R_{\bm V_t} \in \mbb R^{2r \times 2r}$. Then we can write:
\begin{equation}
\label{eqn:moment-update-1}
\begin{split}
&\hat{\bm G}_t + \beta \hat{\bm M}_{t-1} = \bm U_t \bm U_t^{\top} \bm G_t + \bm G_t \bm V_t \bm V_t^{\top} + \bm U_t (\beta \bm \Sigma_t- \bm U_t^{\top} \bm G_t \bm V_t) \bm V_t^{\top} \\
&= \begin{bmatrix}
\bm U_t & \bm G_t \bm V_t
\end{bmatrix}
\begin{bmatrix}
\beta \bm \Sigma_t- \bm U_t^{\top} \bm G_t \bm V_t & \bm I_r \\
\bm I_r & \bm 0_r
\end{bmatrix}
\begin{bmatrix}
\bm V_t^{\top} \\
\bm U_t^{\top} \bm G_t
\end{bmatrix} = \bm U_t' \bigl( \bm R_{\bm U_t} \begin{bmatrix}
\beta \bm \Sigma_t- \bm U_t^{\top} \bm G_t \bm V_t & \bm I_r \\
\bm I_r & \bm 0_r
\end{bmatrix} \bm R_{\bm V_t}^\top \bigr) \bm V'^{\top}_t
\end{split}
\end{equation}
Let $ \bm U''_t \bm \Sigma^{''}_t \bm V_t''^{\top} = \operatorname{SVD}_r \bigl( \bm R_{\bm U_t} \begin{bmatrix}
\beta \bm \Sigma_t- \bm U_t^{\top} \bm G_t \bm V_t & \bm I_r \\
\bm I_r & \bm 0_r
\end{bmatrix} \bm R_{\bm V_t}^\top \big)$, and note that the inner matrix has rank at most $r$, and $\bm U_t'', \bm V_t'' \in \mbb R^{2r \times r}$. We can finally write our momentum factor update rule as:
\begin{equation}
\label{eqn:moment-update-2}
 \bm U_{t+1} = \bm U'_t \bm U_t'' \quad , \quad \bm V_{t+1} = \bm V'_t \bm V_t'' \quad , \quad \bm \Sigma_{t+1} = \bm \Sigma_t''
\end{equation}
The computation complexity of our approach includes two QR decompositions, $\mathcal{O}((m+n)r^2)$, one full SVD on a $2r \times 2r$ matrix, $\mathcal{O}(r^3)$, and hence the total complexity is $\mathcal{O}((m+n)r^2 + r^3)$.

\subsubsection*{From Momentum Factors to Spectrally Normalized Updates} Inspired by the connection between spectrally normalized gradient updates and effective non-diagonal preconditioning methods like Shampoo~\citep{gupta2018shampoo}, and motivated by the strong empirical performance of Muon~\citep{jordan2024muon}, which applies spectral normalization to gradient momentum, we leverage our low-rank momentum factorization (Equation~\ref{eqn:factored-moment}) for the main optimizer step. The MoFaSGD update rule is:
\begin{equation}
\label{eqn:main_update_step}
 \bm W_{t+1} = \bm W_t - \eta \bm U_{t+1} \bm V_{t+1}^\top
\end{equation}
Here, $\bm U_{t+1} \in \mathbb{R}^{m \times r}$ and $\bm V_{t+1} \in \mathbb{R}^{n \times r}$ represent the left and right singular vectors derived from the efficiently computed low-rank approximation of the first-order momentum, $\hat{\bm M}_t$. MoFaSGD (summarized in Algorithm~\ref{alg:mofasgd_complete}) contrasts with Muon, which operates on the full-rank momentum $\bm M_t$ (requiring $\mathcal{O}(mn)$ memory) and uses Newton-Schulz iterations to approximate $\bm U_{\bm M_t} \bm V_{\bm M_t}^\top$ for its update step. MoFaSGD can thus be viewed as a memory-efficient, low-rank variant of Muon.

Furthermore, we highlight the key distinctions between MoFaSGD and GaLore~\citep{zhao2024galore}. GaLore employs a two-stage process: 1) Projecting gradients onto a low-rank subspace defined by the singular vectors of the \textit{gradient} itself, updated periodically (referred to as offline subspace resampling), and 2) Accumulating first and second moments (akin to Adam) within this low-rank subspace.

MoFaSGD adopts different strategies for both stages. Firstly, regarding the subspace definition, MoFaSGD performs gradient low-rank projection, but crucially, onto the \textit{tangent space} defined by the singular vectors of the \textit{gradient momentum} $(\hat{\bm M}_{t})$. This online, per-iteration subspace adaptation contrasts with GaLore's offline resampling based on single gradients. The choice of the tangent space projection is theoretically motivated by its optimality in minimizing projection residuals (Theorem~\ref{thm:main-1}), while using the momentum's singular vectors aims for a more stable subspace that evolves smoothly, mitigating potential noise in individual gradients.

Secondly, concerning the optimizer update, MoFaSGD deliberately avoids accumulating moments \textit{within} the subspace, unlike GaLore, thereby avoiding potential error propagation from stale subspaces. MoFaSGD directly uses the computed momentum factors $(\bm U_{t+1}, \bm V_{t+1})$ to perform the spectrally normalized update in Equation~\ref{eqn:main_update_step}. This design choice aims to circumvent potential errors arising from subspace moment accumulation, particularly when the subspace changes frequently (i.e., near-online updates, or small subspace update intervals in GaLore). As empirically supported in Section~\ref{sec:ablations}, frequent subspace updates can indeed negatively impact GaLore's performance, suggesting that subspace moment accumulation errors might increase with the frequency of subspace changes.

MoFaSGD's \textit{novelty} lies in its unique combination of: 1) Projecting gradients onto the dynamically updated tangent space derived from the low-rank \textit{momentum} factors, and 2) Utilizing these factors directly for spectrally normalized updates, thereby bypassing the potential pitfalls of subspace moment accumulation inherent in methods like GaLore.

\subsection{Convergence and Theoretical Analysis}
\label{sec:core-theory}

Our analysis addresses the non-convex optimization problem:
\begin{equation}
\min_{\bm{W} \in \mathbb{R}^{m \times n}} \;\mathcal{L}(\bm{W})
\;=\; \mathbb{E}_{\xi}\bigl[\mathcal{L}(\bm{W}, \xi)\bigr]
\end{equation}
where we assume access to an unbiased, variance-bounded stochastic gradient oracle $\nabla \mathcal{L}(\bm{W}, \xi)$. Below, we first introduce the necessary definitions and assumptions that underpin our theoretical results.

\subsubsection*{Definitions and Assumptions}

For any optimization iterate $\bm W_i$, we denote the full-batch gradient by $\Bar{\bm G}_i = \nabla \mc L (\bm W_i)$ and the stochastic gradient by $\bm G_i = \nabla \mc L (\bm W_i , \xi_i)$. Formally, we leverage the following standard assumptions throughout our analysis:

\begin{assumption}
\label{assmp-main:1}
$\mc L(.)$ is $L$-smooth with respect to the nuclear norm $\| .\|_*$. In other words, for any two arbitrary $\bm W_1 , \bm W_2 \in \mbb R^{m \times n}$, we have: $\|\nabla \mc L(\bm W_1) - \nabla \mc L(\bm W_2)\|_* \le L \| \bm W_{1} - \bm W_2\|_2 $
\end{assumption}
This assumption naturally generalizes the typical smoothness condition from vector optimization and has been previously utilized in the literature~\citep{large2025scalable,old_bernstein_2024}. For further details, please see Appendix~\ref{sec:app-extra}. Additionally, we assume the availability of a stochastic gradient oracle satisfying standard properties:
\begin{assumption}
\label{assmp-main:2}
For any model parameter $\bm W$, we have access to an unbiased and variance-bounded stochastic oracle, as follows: $\mbb E_{\xi} [\nabla \mc L(\bm W , \xi)] = \nabla \mc L (\bm W)$, and $\mbb E_{\xi} [ \|\nabla \mc L(\bm W , \xi) - \nabla \mc L (\bm W)\|_*] \le \sigma$
\end{assumption}

With these definitions and assumptions clarified, we now present an intuitive overview of our main theoretical results, which highlight the strengths and optimality of our proposed MoFaSGD algorithm.

\noindent\textbf{Optimality of Tangent Space Projection (Theorem~\ref{thm:main-1}).}
We establish that projecting each gradient $\bm{G}_t$ onto the tangent space defined by its singular vectors achieves the minimal projection residual error among a broad class of low-rank projection schemes, such as projection onto the left or right singular vector subspaces.
\noindent\textbf{Optimal $O(1/\sqrt{T})$ Convergence Rate (Theorem~\ref{thm:main-conv}).}
Under Assumptions~\ref{assmp-main:1} and~\ref{assmp-main:2}, MoFaSGD achieves convergence to a stationary point at the optimal $O(1/\sqrt{T})$ rate. Critically, the factorization of momentum does not degrade the asymptotic convergence rate.

\subsubsection*{Proof Outline} Our proof structure involves three primary steps. First, we decompose the momentum low-rank approximation error by defining the full-rank momentum as $\bm{M}_t = \sum_{i=0}^t \beta^{t-i} \bm{G}_i$, and breaking down the approximation error into two key components: $\|\hat{\bm{M}}_t - \Bar{\bm{G}}_t\|_* \le \|\bm{M}_t - \Bar{\bm{G}}_t\|_* + \|\hat{\bm{M}}_t - \bm{M}_t\|_*$. These terms are individually controlled via exponential averaging and tangent-space projections. Second, we rigorously prove the optimality of the tangent-space projection (Theorem~\ref{thm:main-1}), demonstrating that $\hat{\bm G}_t = \bm U_t \bm U_t^\top \bm G_t + \bm G_t \bm V_t \bm V_t^\top - \bm U_t \bm U_t^\top \bm G_t \bm{V}_t \bm{V}_t^\top$ minimizes the residual $\|\bm{G}_t - \hat{\bm{G}}_t\|_{\mathrm{F}}$, thus ensuring optimal fitting into the evolving momentum subspace. Lastly, by combining the upper bounds on the aforementioned terms with a standard descent lemma under nuclear-norm smoothness, we derive our final convergence behavior. By carefully bounding the approximation terms from the prior steps and leveraging them in our derived descent lemma under nuclear-norm smoothness, we arrive at our main convergence result.

The low-rank factorization of gradient momentum as described by Equation~\ref{eqn:factored-moment} is the cornerstone of our proposed method. The quality of the momentum approximation plays a key role in the effectiveness of our approach. We provide an intuitive sketch of the theoretical analysis to bound the factorization residual $\|\hat{\bm{M}}_t - \bm{M}_t\|_* = \|\sum_{i=1}^t \beta^{t-i} \bm G_i - \bm U_{t+1} \bm \Sigma_{t+1} \bm V_{t+1}^{\top}  \|_{\mathrm{F}}^2$. By recursively bounding this term, we can show that it is sufficient to bound the term $\| \bm U_{t+1} \bm \Sigma_{t+1} \bm V_{t+1}^{\top} - \beta \bm U_{t} \bm \Sigma_{t} \bm V_{}^{\top} - \bm G_t\|_{\mathrm{F}}$, which can itself be shown to be bounded by $\| \operatorname{Proj}_{\mc T_t}(\bm G_t) - \bm G_t\|_{\mathrm{F}}$. 
Thus, the quality of the factored momentum approximation is directly related to the residual of the gradient's low-rank projection.

\subsubsection*{Results}
The choice of tangent space projection is optimal in the sense of minimizing the term $\| \operatorname{Proj}_{\mc T_t}(\bm G_t) - \bm G_t\|_{\mathrm{F}}$, as demonstrated in the following theorem.
\begin{theorem}
\label{thm:main-1}
 Let $\bm L \in \mbb R^{m \times r}$, $\bm R \in \mbb R^{n \times r}$ be any arbitrary sketching matrices, and $(\alpha_1,\alpha_2,\alpha_3)$ be any arbitrary triple of scalars. Let $\operatorname{Proj}_{(\bm L, \bm R)} (\bm G) = \alpha_1 \bm L \bm L^{\top} \bm G + \alpha_2 \bm G \bm R \bm R^{\top} + \alpha_3 \bm L \bm L^{\top} \bm G \bm R \bm R^{\top}$. Then, the projection residual is minimized when $(1) (\alpha_1,\alpha_2,\alpha_3) = (1 , 1 , -1)$ and $(2) \bm L^{\top} \bm L = \bm R^{\top} \bm R = \bm I_r$. In this case, the residual norm is $\| \operatorname{Proj}_{(\bm L, \bm R)}(\bm G_t) - \bm G_t\| = \|(\bm I - \bm L \bm L^{\top}) \bm G_t (\bm I - \bm R \bm R^{\top})\|$.
\end{theorem}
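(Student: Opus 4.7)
The plan is to reduce the squared Frobenius residual to a sum of four Frobenius‑orthogonal blocks: three depend on $(\alpha_1, \alpha_2, \alpha_3)$ and on the normalization of $\bm L, \bm R$, while the fourth is fixed by their column spans alone. I would then verify that a single consistent parameter choice simultaneously zeros out all three controllable blocks, so the residual equals only the fixed fourth block.

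First, I would fix orthonormal bases of the column spans of $\bm L$ and $\bm R$. Assuming without loss of generality that $\bm L, \bm R$ have full column rank, write $\bm L = \bm U \bm A$ with $\bm U \in \mathbb{R}^{m\times r}$ orthonormal, so that $\bm L \bm L^\top = \bm U \bm H \bm U^\top$ with $\bm H := \bm A \bm A^\top \succeq \bm 0$; analogously set $\bm R = \bm V \bm B$ and $\bm K := \bm B \bm B^\top$. Extending to orthogonal matrices $[\bm U \ \bm U_\perp]$ and $[\bm V \ \bm V_\perp]$, I would decompose $\bm G$ into the four blocks $\bm G_A = \bm U^\top \bm G \bm V$, $\bm G_B = \bm U^\top \bm G \bm V_\perp$, $\bm G_C = \bm U_\perp^\top \bm G \bm V$, $\bm G_D = \bm U_\perp^\top \bm G \bm V_\perp$.

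Next, by direct substitution into $\operatorname{Proj}_{(\bm L, \bm R)}(\bm G)$ and using $\bm V_\perp^\top \bm V = \bm 0$ and $\bm U_\perp^\top \bm U = \bm 0$, the residual admits the block form
\begin{equation*}
\bm G - \operatorname{Proj}(\bm G) = \bm U \bm R_A \bm V^\top + \bm U (\bm I_r - \alpha_1 \bm H)\bm G_B \bm V_\perp^\top + \bm U_\perp \bm G_C (\bm I_r - \alpha_2 \bm K) \bm V^\top + \bm U_\perp \bm G_D \bm V_\perp^\top ,
\end{equation*}
with $\bm R_A := \bm G_A - \alpha_1 \bm H \bm G_A - \alpha_2 \bm G_A \bm K - \alpha_3 \bm H \bm G_A \bm K$. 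Because the four summands have disjoint row and column supports, Pythagoras gives
\begin{equation*}
\|\bm G - \operatorname{Proj}(\bm G)\|_{\mathrm F}^2 = \|\bm R_A\|_{\mathrm F}^2 + \|(\bm I - \alpha_1 \bm H)\bm G_B\|_{\mathrm F}^2 + \|\bm G_C(\bm I - \alpha_2 \bm K)\|_{\mathrm F}^2 + \|\bm G_D\|_{\mathrm F}^2 ,
\end{equation*}
and the final term $\|\bm G_D\|_{\mathrm F}^2 = \|(\bm I - \bm U \bm U^\top) \bm G (\bm I - \bm V \bm V^\top)\|_{\mathrm F}^2$ serves as a fixed lower bound determined purely by the column spans of $\bm L, \bm R$.

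Finally, I would verify that the choice $\bm H = \bm K = \bm I_r$ (equivalently $\bm L^\top \bm L = \bm R^\top \bm R = \bm I_r$) together with $\alpha_1 = \alpha_2 = 1$ and $\alpha_3 = -1$ drives all three controllable blocks to zero simultaneously. The $(\bm U, \bm V_\perp)$ and $(\bm U_\perp, \bm V)$ blocks vanish immediately since $\bm I - \alpha_1 \bm H = \bm I - \alpha_2 \bm K = \bm 0$. The main (though ultimately trivial) obstacle is compatibility with the $(\bm U, \bm V)$ block: under these parameters $\bm R_A$ collapses to $\bm G_A - \bm G_A - \bm G_A - (-1)\bm G_A = \bm 0$. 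Once this compatibility is checked, the residual attains the lower bound $\|(\bm I - \bm L \bm L^\top) \bm G (\bm I - \bm R \bm R^\top)\|_{\mathrm F}$ claimed in the theorem.
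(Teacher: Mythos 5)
Your proof is correct, and the parameter bookkeeping checks out: the residual block decomposition, the Pythagorean split, and the verification that $(\alpha_1,\alpha_2,\alpha_3)=(1,1,-1)$ with $\bm H=\bm K=\bm I_r$ annihilates all three controllable blocks (including the sign in $\bm R_A = \bm G_A - \bm G_A - \bm G_A + \bm G_A = \bm 0$) are all accurate. The underlying idea is the same as the paper's --- split $\mathbb{R}^{m\times n}$ into the four mutually orthogonal subspaces indexed by $\operatorname{range}(\bm U)$ vs.\ its complement and $\operatorname{range}(\bm V)$ vs.\ its complement, note that only the doubly-perpendicular component is immovable, and show the claimed parameters zero out the rest --- but your execution is genuinely different machinery. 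The paper vectorizes everything, proves a dedicated Kronecker semi-orthogonality lemma (Lemma~\ref{lemma:main-dcomp}), and manipulates operators of the form $\bigl(\bm U_{\bm R}\otimes\bm U_{\bm L}\bigr)(\cdots)\bigl(\bm U_{\bm R}\otimes\bm U_{\bm L}\bigr)^\top$, verifying pairwise orthogonality of the blocks $\bm A_1,\dots,\bm A_4$ at the level of $mn\times mn$ matrices. You stay entirely in matrix form, writing $\bm L\bm L^\top = \bm U\bm H\bm U^\top$ and reading off the four blocks of $\bm G - \operatorname{Proj}(\bm G)$ directly; orthogonality of the summands is immediate from $\bm U^\top\bm U_\perp = \bm 0$ and $\bm V^\top\bm V_\perp = \bm 0$. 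Your route is shorter and more transparent; the paper's Kronecker formulation is heavier but dovetails with the Kronecker-product preconditioner language used elsewhere in the paper. One cosmetic point: your ``without loss of generality'' on full column rank is harmless --- condition (2) of the theorem is unattainable otherwise, and for the lower bound a rank-deficient sketch only shrinks the effective span --- but it is worth stating that explicitly rather than leaving it implicit.
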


 \begin{remark}
If we let $\bm L = \bm U^{1:r}_{\bm G_t}$ and $ \bm R = \bm V^{1:r}_{\bm G_t}$, then we can conclude that the residual error would be upper bounded by $\sigma^{r+1 : \min(m,n)}_{\bm G_t} = \sum_{i = r+1}^{\min(m,n)} \sigma^i_{\bm G_t}$, which, considering the low-rank property of the gradient, we expect it to be small, or even zero if the rank of the full gradient is less than $r$. Note that the subspace projection in GaLore~\citep{zhao2024galore} is actually equivalent to letting either $(\alpha_1,\alpha_2,\alpha_3) = (0 , 0 , 1)$ or $(\alpha_1,\alpha_2,\alpha_3) = (1 , 0 , 0)$.
\end{remark}

We now present our main convergence bound for Algorithm~\ref{alg:mofasgd_complete}.

\begin{theorem}
\label{thm:main-conv}
 Let Assumptions~\ref{assmp-main:1} and~\ref{assmp-main:2} hold. Moreover, assume $\operatorname{rank}(\bm G_0) \le r$. By letting $\beta \le \frac{1}{3}$ and $\eta \le 1$, the iterates of Algorithm~\ref{alg:mofasgd_complete} satisfy the following:
\begin{equation}
 \frac{1}{T} \sum_{t=0}^T \mbb E [ \|\nabla \mc L (\bm W_t)\|_*] \le \mc O \left ( \frac{ \mc L(\bm W_0) - \mbb E [\mc L(\bm W_{T+1})]}{\eta T} + \eta L + \frac{ \sigma}{\sqrt{T}} \right )
\end{equation}
Moreover, if we set $ \eta = \Theta \left(\sqrt{\frac{\mc L(\bm W_0) - \mbb E [\mc L(\bm W_{T+1})]}{T L}}\right)$, we can derive the following simplified bound as:
\begin{equation}
 \frac{1}{T} \sum_{t=0}^T \mbb E [ \|\nabla \mc L (\bm W_t)\|_*] \le \mc O \left ( \frac{\bigl(\mc L(\bm W_0) - \mbb E [\mc L(\bm W_{T+1})]\bigr)^{\frac{1}{2}}\sqrt{L} + \sigma}{\sqrt{T}} \right )
\end{equation}
\end{theorem}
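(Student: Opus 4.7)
The plan is to derive a per-iteration descent inequality under nuclear-norm smoothness, relate the spectrally-normalized update direction $\bm U_{t+1}\bm V_{t+1}^\top$ back to $\nabla\mathcal{L}(\bm W_t)$ via a momentum-alignment identity, bound the resulting momentum-approximation errors, and then telescope. Since $\bm U_{t+1}\bm V_{t+1}^\top$ is a rank-$r$ partial isometry, $\|\bm W_{t+1} - \bm W_t\|_2 = \eta$, so Assumption~\ref{assmp-main:1} gives
\begin{equation}
\mathcal{L}(\bm W_{t+1}) \le \mathcal{L}(\bm W_t) - \eta\,\langle \bar{\bm G}_t,\bm U_{t+1}\bm V_{t+1}^\top\rangle + \tfrac{L\eta^2}{2}.
\end{equation}

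\textbf{Alignment identity and error decomposition.} Because $\hat{\bm M}_t = \bm U_{t+1}\bm \Sigma_{t+1}\bm V_{t+1}^\top$ shares its singular vectors with $\bm U_{t+1}\bm V_{t+1}^\top$, the exact identity $\langle \hat{\bm M}_t,\bm U_{t+1}\bm V_{t+1}^\top\rangle = \|\hat{\bm M}_t\|_*$ holds. Rescaling $\hat{\bm M}_t$ by $(1-\beta)$ so that it matches a convex EMA, then applying Hölder (with $\|\bm U_{t+1}\bm V_{t+1}^\top\|_2 = 1$) and the reverse triangle inequality in $\|\cdot\|_*$ gives
\begin{equation}
\langle \bar{\bm G}_t,\bm U_{t+1}\bm V_{t+1}^\top\rangle \;\ge\; \|\bar{\bm G}_t\|_* - 2\,\bigl\|\bar{\bm G}_t - (1-\beta)\hat{\bm M}_t\bigr\|_*.
\end{equation}
Following the paper's proof outline, I then split this error via the unnormalized full-rank EMA $\bm M_t = \sum_{i=0}^t \beta^{t-i}\bm G_i$ into
\begin{equation}
\bigl\|\bar{\bm G}_t - (1-\beta)\hat{\bm M}_t\bigr\|_* \;\le\; \bigl\|\bar{\bm G}_t - (1-\beta)\bm M_t\bigr\|_* + (1-\beta)\bigl\|\bm M_t - \hat{\bm M}_t\bigr\|_*,
\end{equation}
with the restriction $\beta \le 1/3$ ensuring the prefactor $1/(1-\beta)$ stays bounded by a constant.

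\textbf{Controlling each term.} The first (EMA-vs-gradient) piece is handled by standard stochastic-EMA analysis: the bias contribution is $O(\eta L/(1-\beta))$ by applying $L$-smoothness across consecutive iterates whose update has fixed spectral norm $\eta$, while the variance contribution is $O(\sigma)$ in expectation via Assumption~\ref{assmp-main:2}. For the factorization residual I telescope $\hat{\bm M}_t - \bm M_t = \beta(\hat{\bm M}_{t-1} - \bm M_{t-1}) + \epsilon_t$ with one-step error $\epsilon_t := \hat{\bm M}_t - \beta\hat{\bm M}_{t-1} - \bm G_t$, giving $\|\hat{\bm M}_t - \bm M_t\|_* \le \sum_{i=0}^t \beta^{t-i}\|\epsilon_i\|_*$. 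Since $\hat{\bm M}_t = \operatorname{SVD}_r(\beta\hat{\bm M}_{t-1} + \hat{\bm G}_t)$ with $\hat{\bm G}_t = \operatorname{Proj}_{\mathcal{T}_t}(\bm G_t)$, triangle inequality bounds $\|\epsilon_t\|_*$ by the SVD-truncation error plus $\|\bm G_t - \hat{\bm G}_t\|_*$; Theorem~\ref{thm:main-1} identifies the latter with the truncation tail $\|(\bm I - \bm U_t\bm U_t^\top)\bm G_t(\bm I - \bm V_t\bm V_t^\top)\|_F$, which vanishes whenever $\operatorname{rank}(\bm G_t) \le r$ and otherwise is absorbed into the stochastic-noise budget (the hypothesis $\operatorname{rank}(\bm G_0)\le r$ seeds the recursion with an exact initial factorization).

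\textbf{Completion and main obstacle.} Summing the descent inequality over $t=0,\ldots,T$, telescoping $\mathcal{L}(\bm W_0) - \mathbb{E}\mathcal{L}(\bm W_{T+1})$, and substituting the two error bounds gives
\begin{equation}
\frac{1}{T}\sum_{t=0}^T \mathbb{E}\|\bar{\bm G}_t\|_* \;\le\; O\!\left(\frac{\mathcal{L}(\bm W_0) - \mathbb{E}\mathcal{L}(\bm W_{T+1})}{\eta T} + \eta L + \frac{\sigma}{\sqrt{T}}\right),
\end{equation}
after which balancing the first two terms via $\eta = \Theta(\sqrt{(\mathcal{L}(\bm W_0)-\mathcal{L}^*)/(TL)})$ yields the stated $O(1/\sqrt{T})$ rate. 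The principal obstacle is controlling the SVD-truncation component of $\epsilon_t$: although $\beta\hat{\bm M}_{t-1} + \hat{\bm G}_t$ lies in $\mathcal{T}_t$ with rank up to $2r$, the rank-$r$ retraction (performed via the $2r\times 2r$ core SVD of $\bm S_t$ in Algorithm~\ref{alg:mofasgd_complete}) introduces a second-order residual beyond the pure tangent-projection error of Theorem~\ref{thm:main-1}. One must show, either by a direct Eckart--Young argument on the structured $\bm S_t$ or via matrix-manifold retraction theory, that this extra truncation error is dominated by $\|\bm G_t - \hat{\bm G}_t\|_F$, so that the geometric sum $\sum_i \beta^{t-i}\|\epsilon_i\|_*$ remains summable and the telescoped errors do not compound across iterations.
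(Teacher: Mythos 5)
Your proposal follows the same architecture as the paper's proof: the nuclear-norm descent lemma obtained from the duality pairing $\langle \bar{\bm G}_t, \bm U_{t+1}\bm V_{t+1}^\top\rangle \ge \|\hat{\bm M}_t\|_* - \|\hat{\bm M}_t - \bar{\bm G}_t\|_*$ (Lemma~\ref{lemma:des}), the split of the momentum error into an EMA term and a factorization-residual term, a $\beta$-geometric recursion for the residual seeded by $\operatorname{rank}(\bm G_0)\le r$, and telescoping. However, there are two concrete gaps.

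First, the noise term. Your own accounting gives a per-iteration variance contribution of $O(\sigma)$, which after averaging over $T$ steps yields $O(\sigma)$, not the claimed $O(\sigma/\sqrt{T})$. The paper closes this by writing every noise term as $\sigma/\sqrt{B}$ for a minibatch of size $B$ and then setting $B=T$ in the final step of the proof (see Lemmas~\ref{app:lemma1} and~\ref{app:lemma2}). With only a first-moment bound $\mbb E\|\bm\Delta_t\|_*\le\sigma$ from Assumption~\ref{assmp-main:2}, a martingale-cancellation argument inside the EMA is not available, so some version of this large-batch device is needed; as written, your final displayed bound does not follow from your intermediate estimates. Relatedly, the condition $\beta\le 1/3$ in the paper is not merely to keep $1/(1-\beta)$ bounded: Lemma~\ref{app:lemma1} bounds $\|\bm M_t-\bar{\bm G}_t\|_*$ by $\frac{\beta}{1-\beta}\sum_t\|\bar{\bm G}_t\|_*$ plus noise, and the prefactor $1-\frac{2\beta}{1-\beta}$ must be positive to absorb that sum into the left-hand side. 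Your alternative smoothness-based bias bound $O(\eta L/(1-\beta))$ is a legitimate (and more standard) route that avoids this absorption, but you should commit to one of the two mechanisms rather than invoking both loosely.

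Second, the factorization residual. You correctly identify that replacing $\hat{\bm M}_t$ by $\hat{\bm G}_t+\beta\hat{\bm M}_{t-1}$ ignores the rank-$2r\to r$ truncation performed in Algorithm~\ref{alg:mofasgd_complete}, and you leave this as an unresolved obstacle; note that the paper's proof of Lemma~\ref{app:lemma2} makes exactly this substitution without further justification, so your proposal is no less complete than the paper here, but it is still a hole in your argument as stated. More importantly, your claim that the tangent-projection residual $\|(\bm I-\bm U_t\bm U_t^\top)\bm G_t(\bm I-\bm V_t\bm V_t^\top)\|_*$ either "vanishes" or is "absorbed into the stochastic-noise budget" understates the required work: the paper controls it by a drift recursion that uses $\operatorname{Range}(\bm U_{t-1})\subseteq\operatorname{Range}(\bm U_t)$ together with smoothness to pass from $\bm G_t$ to $\bm G_{t-1}$, unrolls to the exactly-factorized initialization, and accumulates a term of size $\eta L t + \sum_{i\le t}\|\bm\Delta_i-\bm\Delta_{i-1}\|_*$ that is only tamed by the outer geometric sum in $\beta$. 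That accumulation is precisely the source of the $\eta L$ term in the final rate, so it cannot be waved away.
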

\begin{remark} The convergence metric used for Theorem~\ref{thm:main-conv} is the stationary point with respect to the nuclear norm, which is averaged over all gradients. When setting $\sigma = 0$, Algorithm~\ref{alg:mofasgd_complete} achieves the rate of $\mc O(\frac{1}{\sqrt{T}})$, which is known to be optimal in the sense of non-convex stochastic optimization under smoothness and an unbiased, bounded stochastic gradient oracle~\citep{arjevani2023lower}. \end{remark}

\section{Experiments}
\label{sec:exp}

We evaluate MoFaSGD's effectiveness and efficiency across three large language modeling setups: pre-training, natural language understanding (NLU) fine-tuning, and instruction-tuning. These setups allow us to assess MoFaSGD's performance across different training regimes and task complexities. 

\subsection{Pre-training setup: NanoGPT Speedrun}

We first evaluate MoFaSGD in a pre-training context using the Modded NanoGPT benchmark~\citep{modded_nanogpt_2024}. This benchmark focuses on training a GPT-2 architecture on a subset of the FineWeb dataset~\citep{penedo2025fineweb} and measures performance using validation perplexity on a held-out partition of FineWeb. The benchmark's default optimizer is Muon~\citep{jordan2024muon}, which holds current training speed records for this task and serves as a strong, competitive baseline.

We compare MoFaSGD against full fine-tuning baselines AdamW~\citep{loshchilov2017fixing} and Muon, as well as the low-rank baseline GaLore~\citep{zhao2024galore}. Following the standard NanoGPT speedrun setup, we use the hyperparameters tuned for Muon. We tune the learning rates for AdamW, GaLore, and MoFaSGD, along with GaLore's SVD frequency and MoFaSGD's momentum decay ($\beta$) via grid search, selecting the best configuration based on final validation perplexity (details in Appendix~\ref{sec:app-nano}). We evaluate ranks $r \in \{16, 32, 128\}$, common choices for low-rank methods in pre-training setup.

Our primary experiment uses a budget of $0.73$ billion tokens from FineWeb, aligning with the budget used by Muon to reach the target perplexity of $3.27$. Figure~\ref{fig:gpt2_short} shows the validation perplexity curves. MoFaSGD consistently outperforms GaLore across all tested ranks, achieving lower final perplexity. The performance advantage is particularly noticeable at lower ranks ($r=16$). This suggests MoFaSGD's dynamic subspace tracking is more effective at capturing important momentum directions than GaLore's infrequent updates, especially under strict rank constraints. Both low-rank methods underperform the full-rank AdamW and Muon baselines within this specific token budget. This is likely because full-rank methods have more degrees of freedom, and the $0.73B$ token budget, optimized for Muon's convergence speed, might be insufficient for low-rank methods to fully match their performance.

To assess longer-term performance, we conduct an extended run for $10,000$ steps ($\sim5.3B$ tokens) using rank $r=32$ for both MoFaSGD and GaLore. As shown in Figure~\ref{fig:gpt2_long}, MoFaSGD maintains its performance advantage over GaLore, indicating that the benefits of its momentum factorization approach persist during longer training phases.

\subsubsection*{Ablation: Convergence vs. Efficiency}
We conduct a detailed ablation study on the effect of the low-rank parameter $r \in \{16, 32, 128\}$ during NanoGPT pre-training for both GaLore and MoFaSGD. As illustrated in Figure~\ref{fig:rank_ablation_steps} and Figure~\ref{fig:rank_ablation_time}, higher ranks consistently improve convergence speed and final validation loss. MoFaSGD achieves smoother loss curves and stronger performance across all ranks, particularly under tight memory budgets (e.g., $r=16$), where GaLore exhibits noticeable instability. This supports our claim that MoFaSGD's tangent-space subspace tracking better preserves optimizer continuity under aggressive compression.

In terms of runtime (Table~\ref{tab:rank_ablation_results}), GaLore shows minimal runtime variation across ranks due to dominant offline SVD costs. In contrast, MoFaSGD’s runtime scales more significantly with rank, reflecting its per-step online factorization cost. Nonetheless, MoFaSGD remains faster than GaLore at $r=32$ and achieves superior final loss, highlighting a favorable trade-off between expressivity and computational efficiency.

\begin{table}[h]
\centering
\caption{Comparison of MoFaSGD and GaLore across different ranks during NanoGPT pre-training. Best values per row are in \textbf{bold}.}
\label{tab:rank_ablation_results}
\begin{tabular}{c|cc|cc|cc}
\toprule
\multirow{2}{*}{\textbf{Rank}} & \multicolumn{2}{c|}{\textbf{Final Val Loss}} & \multicolumn{2}{c|}{\textbf{Runtime (s)}} & \multicolumn{2}{c}{\textbf{Throughput}} \\
& MoFaSGD & GaLore & MoFaSGD & GaLore & MoFaSGD & GaLore \\
\midrule
16 & \textbf{3.8981} & 4.0773 & \textbf{2156} & 3755 & \textbf{338{,}450} & 194{,}395 \\
32 & \textbf{3.7208} & 3.8953 & 3972 & \textbf{3911} & 183{,}770 & \textbf{186{,}619} \\
128 & \textbf{3.5700} & 3.6561 & 4817 & \textbf{3839} & 151{,}527 & \textbf{190{,}131} \\
\bottomrule
\end{tabular}
\end{table}

\begin{figure}[h]
\centering
\includegraphics[width=0.45\textwidth]{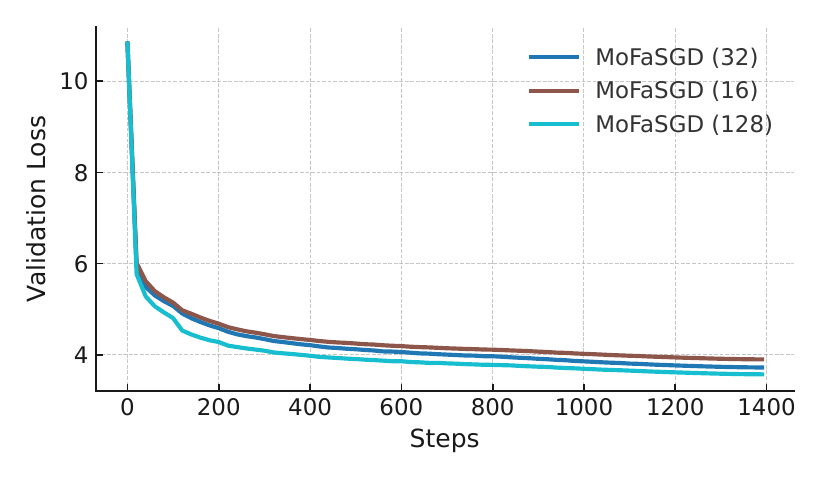}
\includegraphics[width=0.45\textwidth]{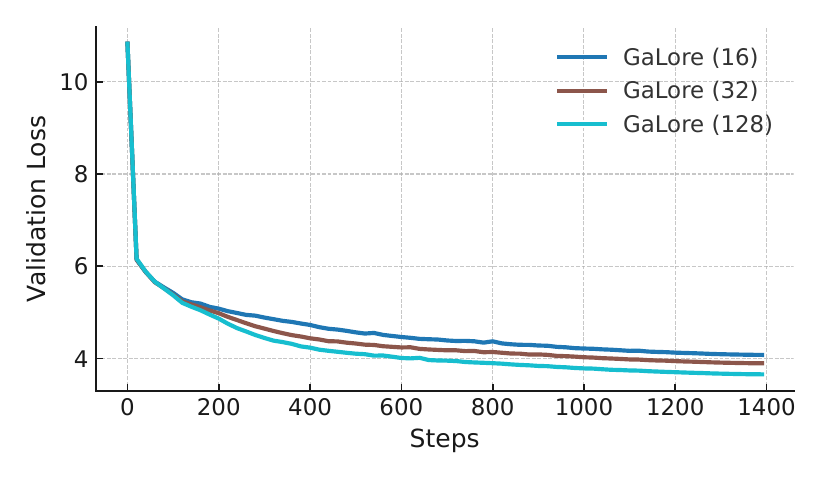}
\caption{Validation loss vs. training steps for MoFaSGD (left) and GaLore (right) across ranks $r \in \{16, 32, 128\}$. MoFaSGD shows smoother and faster convergence.}
\label{fig:rank_ablation_steps}
\end{figure}

\begin{figure}[h]
\centering
\includegraphics[width=0.45\textwidth]{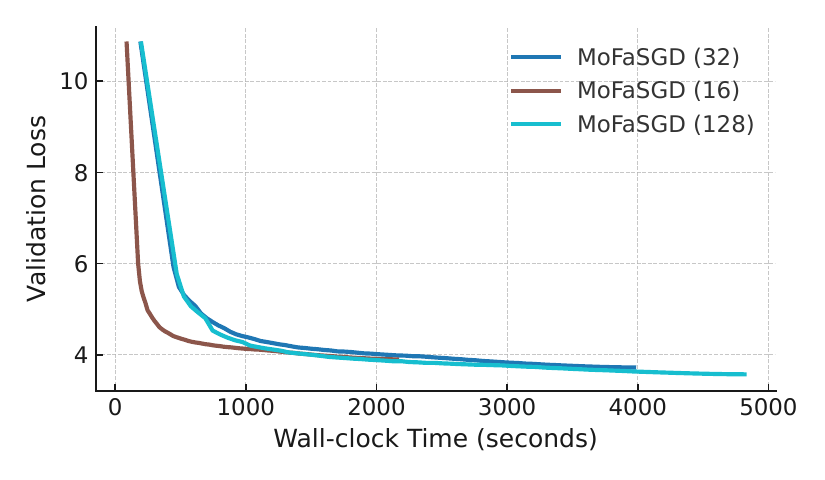}
\includegraphics[width=0.45\textwidth]{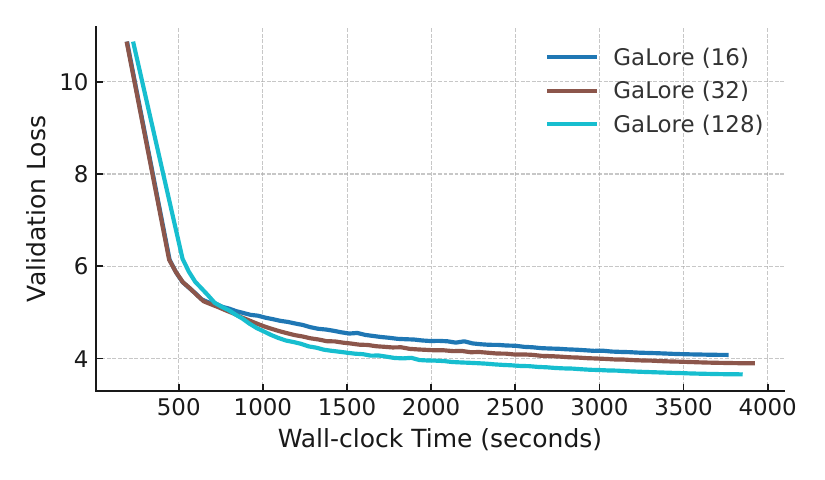}
\caption{Validation loss vs. wall-clock time across ranks. MoFaSGD scales better in convergence and runtime efficiency.}
\label{fig:rank_ablation_time}
\end{figure}

\begin{figure}[ht]
\centering
\begin{subfigure}[b]{0.51\textwidth}
\centering
\includegraphics[width=\linewidth]{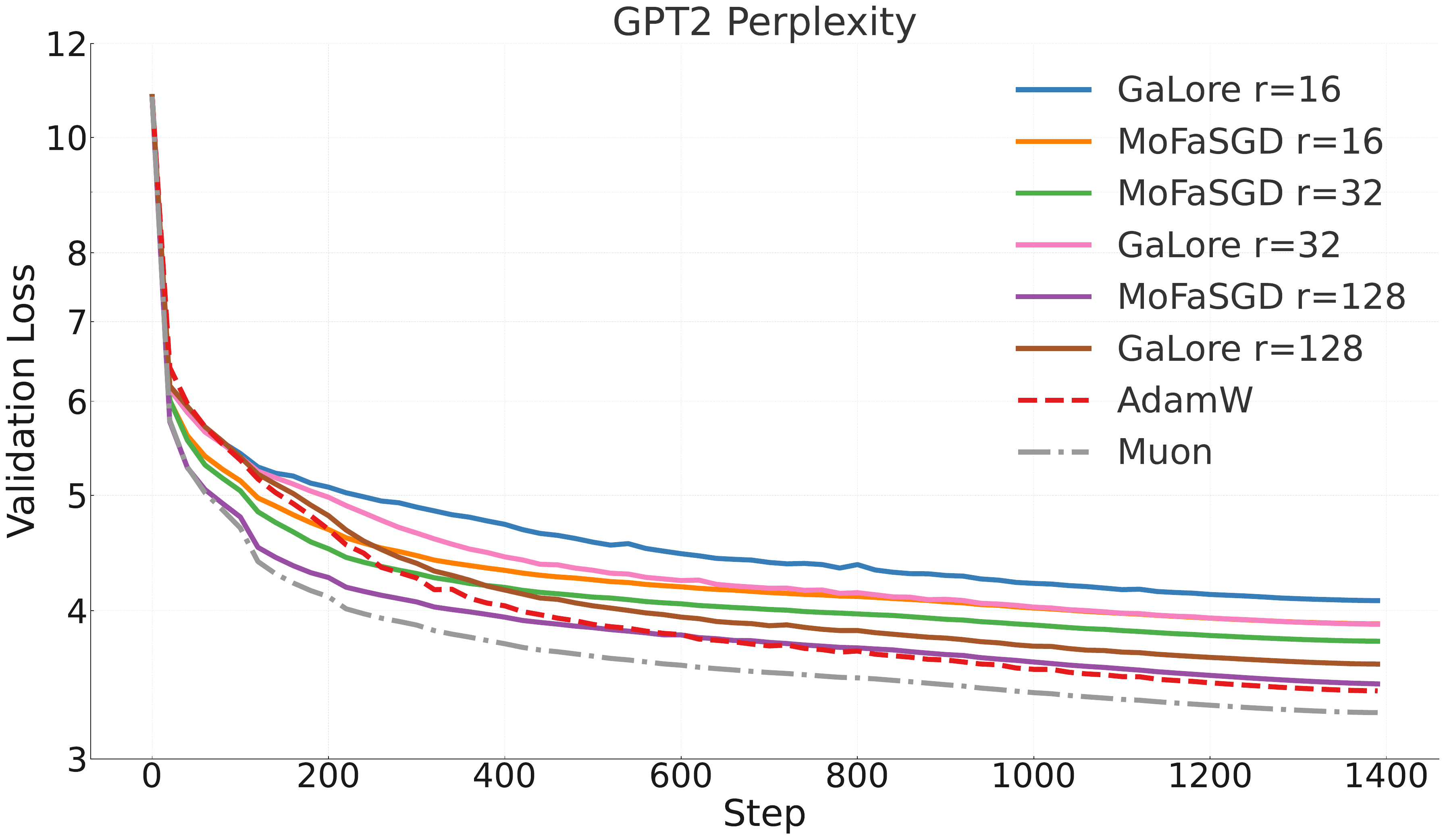}
\caption{GPT-2 Perplexity over $\sim 1.39k$ steps.}
\label{fig:gpt2_short}
\end{subfigure}
\hfill
\begin{subfigure}[b]{0.47\textwidth}
\centering
\includegraphics[width=\linewidth]{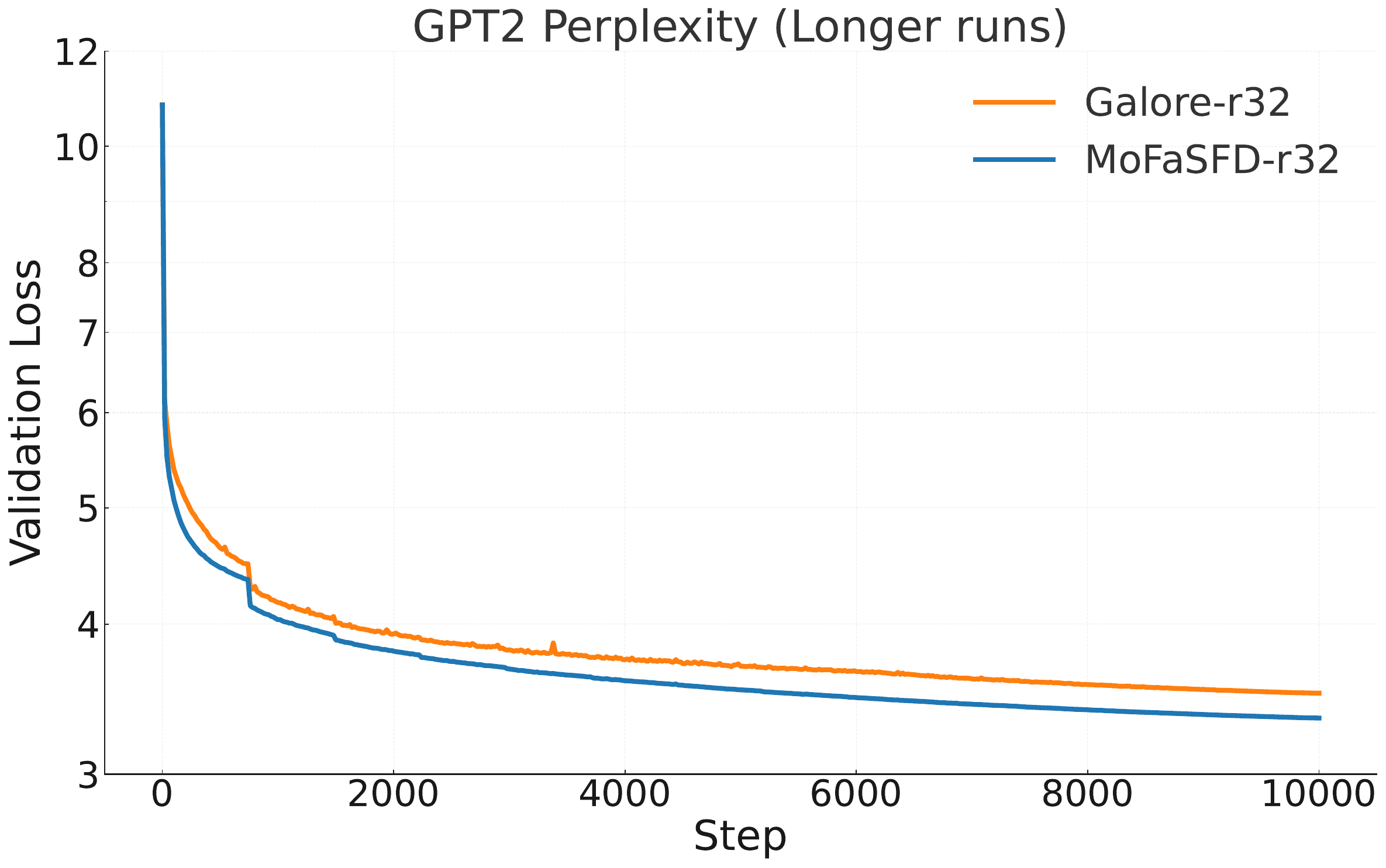}
\caption{GPT-2 Perplexity over $10k$ steps}
\label{fig:gpt2_long}
\end{subfigure}
\caption{Comparison of optimizer performance on GPT-2 using validation perplexity loss.}
\label{fig:gpt2_comparison}
\end{figure}

\subsection{Post-training Evaluation: Setups and Results}
\label{sec:exp-setup}

We evaluate MoFaSGD in two post-training scenarios: standard NLU fine-tuning on the GLUE benchmark and large-scale instruction-tuning on the Tulu3 dataset.

\noindent\textbf{GLUE Benchmark.} We follow the experimental setup detailed in~\citet{zhao2024galore} for fine-tuning a RoBERTa-Base model ($125M$ parameters) on seven diverse NLU tasks from the GLUE benchmark~\citep{wang2018glue}: MNLI, QQP, SST-2, MRPC, COLA, QNLI, and RTE. We use the same hyperparameters as~\citet{zhao2024galore} for comparison baselines to ensure fairness. For MoFaSGD (with ranks $r=4$ and $r=8$), we tune the learning rate for each task based on validation accuracy, keeping the momentum decay $\beta$ fixed at $0.95$ (details in Appendix~\ref{sec:app-glue}). Final validation accuracy and loss are used for evaluation.

\noindent\textbf{Tulu3 Instruction Tuning.}
To assess performance on more complex alignment tasks, we fine-tune the LLaMA-3.1 8B model on the \texttt{tulu-3-sft-mixture} dataset, a large (\(\sim 900\text{K}\) samples) and diverse instruction-tuning dataset~\citep{lambert2024t}. We adopt most hyperparameters from the Tulu3 setup~\citep{lambert2024t}, training for one epoch with an effective batch size of 128; however, we use only a subsample of 200K examples to reduce the training budget. We perform a grid search over learning rates for each optimizer. For the low-rank methods (MoFaSGD, LoRA, GaLore), we use rank \(r = 8\). Key hyperparameters for MoFaSGD (\(\beta\)) and the baselines (GaLore SVD frequency, LoRA alpha) are set as specified in Appendix~\ref{sec:app-tulu}. We evaluate the final checkpoints using the OLMES evaluation framework~\citep{gu2024olmes}, benchmarking across MMLU, TruthfulQA, BigBenchHard, GSM8K, and HumanEval.

\subsubsection*{Performance Analysis} In these post-training tasks, we compare MoFaSGD against LoRA~\citep{hu2021lora} optimized with AdamW, and GaLore~\citep{zhao2024galore}, which are prevalent methods for memory-efficient fine-tuning. We also include results for full-parameter fine-tuning with AdamW as a performance ceiling reference. Table~\ref{table:optimizer_comparison} compares the theoretical complexities, highlighting MoFaSGD's comparable memory footprint to LoRA alongside efficient online subspace updates, while Figure~\ref{fig:mem-break-bar} comprehensively details the memory usage of MoFaSGD compared to other baselines in our Tulu3 instruction-tuning setup.
\begin{figure*}[t]
 \centering
 \begin{minipage}[c]{0.50\textwidth} % Changed from [t] to [c]
 \centering
 \small
 \setlength{\tabcolsep}{5pt}
 \renewcommand{\arraystretch}{1.2}
 \begin{tabular}{@{}lcc@{}}
 \toprule
 \textbf{Optimizer} & \textbf{Memory Complexity} & \textbf{Subspace Resampling} \\
 \midrule
 GaLore & $mn + mr + 2nr$ & $\mathcal{O}(m^2 n)$ (offline) \\
 LoRA & $mn + 3mr + 3nr$ & -- \\
 MoFaSGD & $mn + mr + nr + r$ & $\mathcal{O}((m + n)r^3)$ (online) \\
 \bottomrule
 \end{tabular}
 \vspace{0.5em}
 \captionof{table}{Comparison of memory and subspace resampling complexity for low-rank optimizers. Let $\bm W \in \mbb R^{m \times n}$ represent model parameters, and $r$ is the rank of low-rank optimizers (w.l.o.g assume $m \le n$). Note that memory complexity includes model parameters and optimizer states.}
 \label{table:optimizer_comparison}
 \end{minipage}
 \hfill
 \begin{minipage}[c]{0.43\textwidth} % Changed from [t] to [c]
 \centering
 \includegraphics[width=\linewidth]{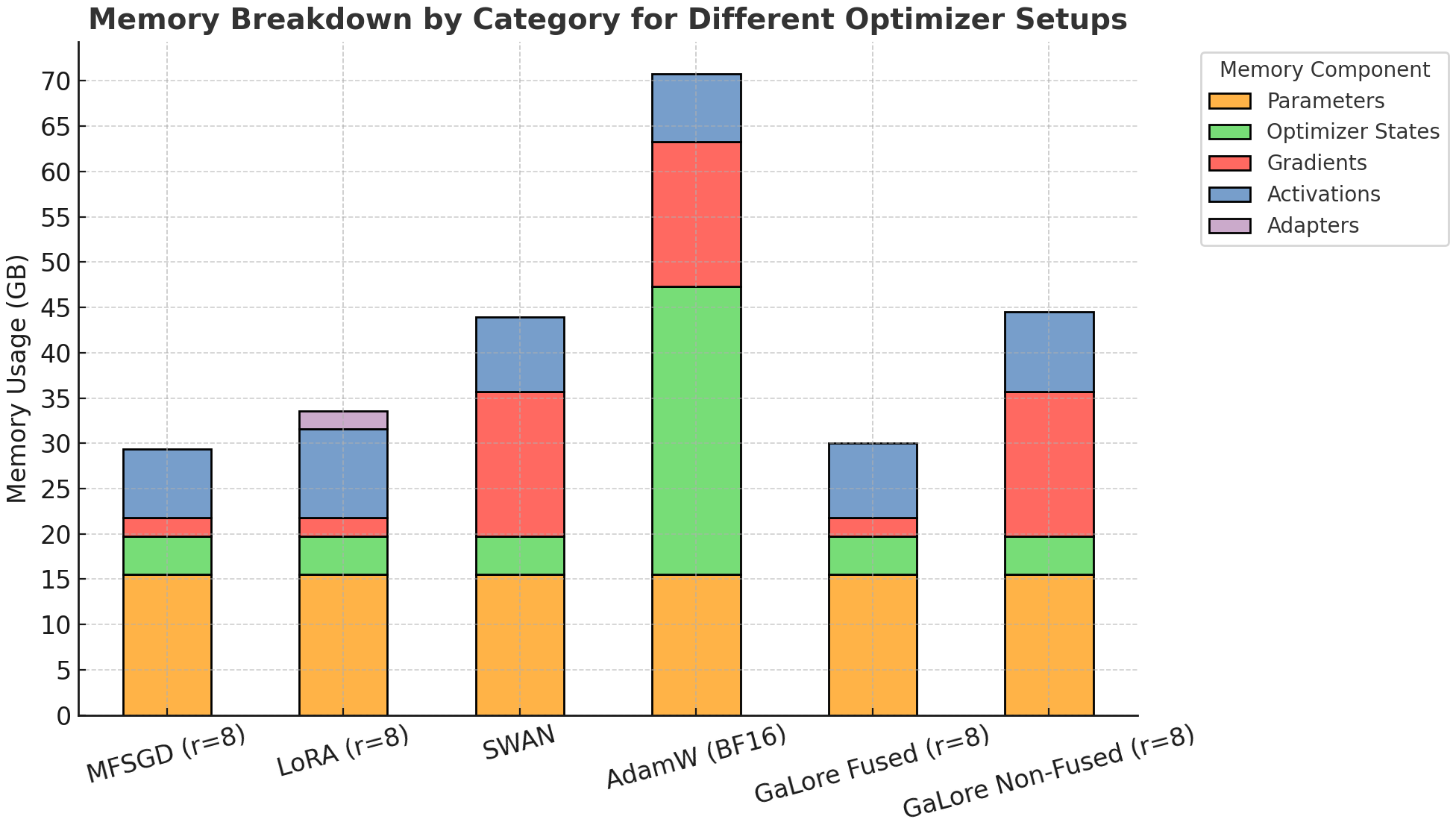}
 \captionof{figure}{Empirical memory breakdown (GB) for LLaMA3.1-8B using different optimizers.}
 \label{fig:mem-break-bar}
 \end{minipage}
\end{figure*}

To illustrate MoFaSGD’s optimization efficiency during instruction tuning on the Tulu3 benchmark, we compare its validation loss trajectory against GaLore and LoRA across both training epochs and wall-clock time. As shown in Figure~\ref{fig:tulu_epoch} and Figure~\ref{fig:tulu_time}, MoFaSGD consistently achieves lower validation loss over the course of training, indicating superior sample efficiency. Notably, when measured against real-world wall-clock time, MoFaSGD converges faster than both GaLore and LoRA, demonstrating improved practical efficiency in addition to theoretical gains. These trends are further supported by our throughput analysis: MoFaSGD reaches 4206 tokens/sec, outperforming GaLore (3214 tokens/sec) and approaching the high throughput of LoRA (4536 tokens/sec). These findings reinforce our earlier conclusion that MoFaSGD’s spectrally normalized updates and dynamic momentum factorization enable more effective fine-tuning under memory constraints. We have also included the training loss curves in Appendix~\ref{app:curve-train}, showcasing MoFaSGD’s convergence behavior on both GLUE and Tulu3 setups.

\begin{figure}[htbp]
 \centering
 \begin{subfigure}[b]{0.49\textwidth}
 \centering
 \includegraphics[width=\linewidth]{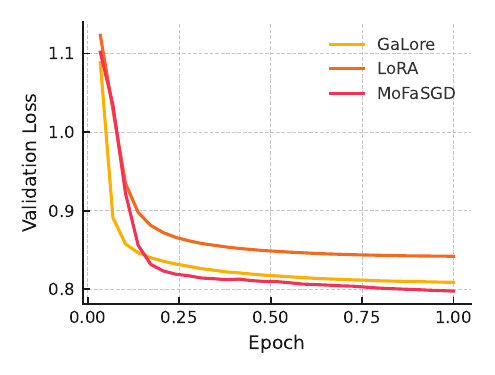}
 \caption{Validation Loss vs. Epoch}
 \label{fig:tulu_epoch}
 \end{subfigure}
 \hfill
 \begin{subfigure}[b]{0.49\textwidth}
 \centering
 \includegraphics[width=\linewidth]{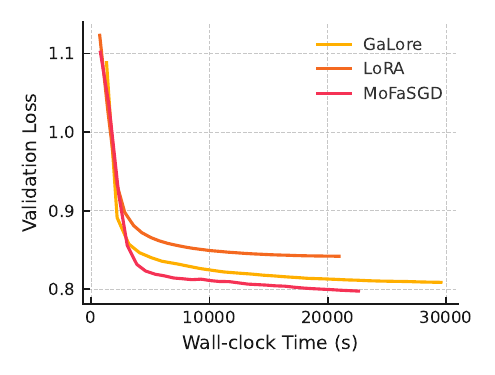}
 \caption{Validation Loss vs. Wall-clock Time}
 \label{fig:tulu_time}
 \end{subfigure}
 \caption{MoFaSGD demonstrates superior sample efficiency and faster wall-clock convergence, achieving lower validation loss than both GaLore and LoRA throughout LLaMA3.1-8B instruction tuning on the Tulu3 benchmark.}
\label{fig:tulu_validation_curves}
\end{figure}

Table~\ref{table:glue} summarizes the final validation accuracies on the GLUE tasks (ranks r=4 and r=8). MoFaSGD achieves performance comparable to, and on average slightly better ($+0.5\%$ for $r=4$, $+0.47\%$ for $r=8$) than both LoRA and GaLore, while using slightly less estimated memory. This demonstrates MoFaSGD's competitiveness and memory efficiency on standard NLU benchmarks.

Table~\ref{table:tulu} presents the results on the challenging instruction-tuning benchmarks using the Tulu3 setup (rank $r=8$). MoFaSGD outperforms both GaLore ($+0.8\%$ avg.) and LoRA ($+2.3\%$ avg.) on the average score across the five benchmarks. This highlights MoFaSGD's potential advantage in complex tasks where accurately capturing the training dynamics over long sequences and diverse instructions is crucial. However, consistent with prior work~\citep{wang2023far}, all low-rank methods exhibit a performance gap compared to full fine-tuning with AdamW (MoFaSGD is $-4.2\%$ avg. vs AdamW). This gap underscores the inherent trade-off between memory efficiency and achievable performance, particularly as task complexity increases and may require capturing subtle, high-rank parameter updates that low-rank approximations inherently miss.

\begin{table*}[htbp]
\centering
\caption{Comparison of final validation accuracies (\%) on seven GLUE tasks (MNLI, QQP, SST-2, MRPC, CoLA, QNLI, RTE)
when fine-tuning a RoBERTa-base model using different optimizers.
For GaLore, LoRA, and MoFaSGD, we report results with rank $r \in \{4,8\}$.
Memory usage is estimated for each method, and the final column shows the average accuracy.
Note that for memory measurement, we include only the parameters and the optimizer states for a fair comparison.}
\label{table:glue}
\resizebox{\textwidth}{!}{
\begin{tabular}{l|ccccccc|c|c}
\toprule
\textbf{Optimizer} & \textbf{MNLI} & \textbf{QQP} & \textbf{SST-2} & \textbf{MRPC} & \textbf{CoLA} & \textbf{QNLI} & \textbf{RTE} & \textbf{Memory} & \textbf{Avg.} \\
\midrule
AdamW (Full-Rank)
& 86.8
& 91.98
& 94.48
& 90.90
& 62.25
& 93.15
& 79.41
& 747M
& 85.57 \\
\midrule
GaLore (\(r=4\))
& \textbf{85.23}
& 89.62
& \textbf{94.17}
& 90.72
& 60.33
& \textbf{93.20}
& 77.22
& 253M
& 84.36 \\
LoRA (\(r=4\))
& 84.25
& 89.73
& 93.59
& 90.53
& 60.42
& 92.91
& 78.59
& 257M
& 84.29 \\
MoFaSGD (\(r=4\))
& 85.12
& \textbf{89.85}
& 94.15
& \textbf{90.78}
& \textbf{61.91}
& 93.10
& \textbf{79.08}
& 251M
& 84.86 \\
\midrule
GaLore (\(r=8\))
& 86.01
& 89.65
& 94.04
& 90.65
& 59.96
& \textbf{93.16}
& 78.14
& 257M
& 84.52 \\
LoRA (\(r=8\))
& 85.18
& 90.15
& 93.87
& \textbf{90.82}
& 61.11
& 93.05
& 78.77
& 264M
& 84.71 \\
MoFaSGD (\(r=8\))
& \textbf{86.32}
& \textbf{90.26}
& \textbf{94.36}
& 90.75
& \textbf{62.16}
& 93.12
& \textbf{79.28}
& 253M
& 85.18 \\
\bottomrule
\end{tabular}
}
\end{table*}

\begin{table*}[htbp]
\centering
\caption{Final scores of Llama-3.1\,8B on the Tulu3-SFT-mixture dataset using four different optimizers.
The table reports performance on MMLU, TruthfulQA, BigBenchHard, GSM8K, and HumanEval,
along with the average of these five benchmarks (Avg.).}
\label{table:tulu}
\resizebox{\textwidth}{!}{
\begin{tabular}{l|ccccc|c}
\toprule
\textbf{Optimizer} & \textbf{MMLU} & \textbf{TruthfulQA} & \textbf{BigBenchHard} & \textbf{GSM8K} & \textbf{HumanEval} & \textbf{Avg.} \\
\midrule
AdamW (Full-Rank)
& 62.8
& 46.5
& 66.7
& 72.7
& 81.0
& 65.9 \\
\midrule
GaLore
& 58.9
& 44.2
& 57.6
& 68.4
& 75.2
& 60.9 \\
LoRA
& 56.1
& 42.6
& 56.5
& 67.9
& 73.9
& 59.4 \\
MoFaSGD
& \textbf{59.4}
& \textbf{45.8}
& \textbf{58.3}
& \textbf{68.4}
& \textbf{76.8}
& \textbf{61.7} \\
\bottomrule
\end{tabular}
}
\end{table*}

\subsection{Ablations}
\label{sec:ablations}

\subsubsection*{Momentum Spectral Analysis} MoFaSGD is motivated by the conjecture that the first moment (the EMA of gradients) preserves a low-rank structure throughout training. This conjecture stems from the GaLore hypothesis on the low-rankness of the gradients themselves~\citep{zhao2024galore}, and, more importantly, from the observation in~\citet{feinberg2024sketchy}, which shows a fast spectral decay in the EMA of the gradient covariance, as discussed in more detail in Section~\ref{sec:core-alg}. To investigate our conjecture, we analyze the first-moment buffer $\bm M_t$ from the AdamW optimizer states generated during the Tulu3 instruction-tuning setup. 

For each relevant parameter matrix $\bm M_t$, we perform SVD and compute the energy ratio captured by the top-$r$ singular values as $\frac{\sum_{i=1}^{r}\sigma_{i,\bm M_{t}}^{2}}{\|\bm M_{t}\|_{\mathrm{F}}^{2}}$. This ratio represents the percentage of the momentum's total energy contained within the top-$r$ subspace. We compute the average of this ratio across all 2D weight matrices in the model at various training steps. Figure~\ref{fig:momentum_ratio} shows the average energy ratio for $r=16$ and $r=32$ throughout training. We observe that the top-$32$ singular values consistently capture around $80\%$ of the momentum's energy, while the top-$16$ capture approximately $75\%$. This persistent and significant concentration of energy in a low-rank subspace strongly supports our hypothesis.

\begin{figure}[ht]
 \centering
 \begin{subfigure}[b]{0.49\linewidth}
 \centering
\includegraphics[width=\linewidth]{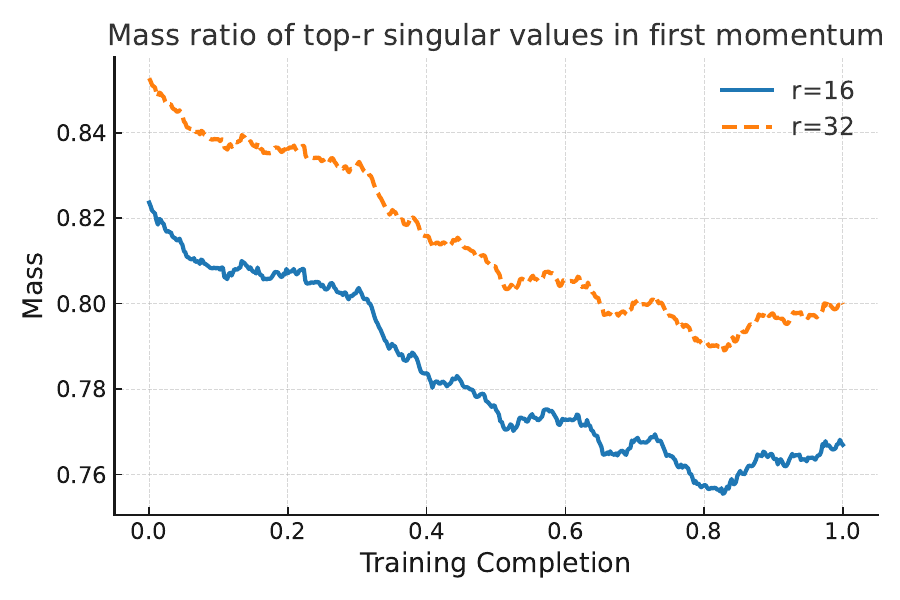}
 \caption{Momentum Spectral Mass Ratio}
 \label{fig:momentum_ratio}
 \end{subfigure}
 \hfill
 \begin{subfigure}[b]{0.49\linewidth}
 \centering
\includegraphics[width=\linewidth]{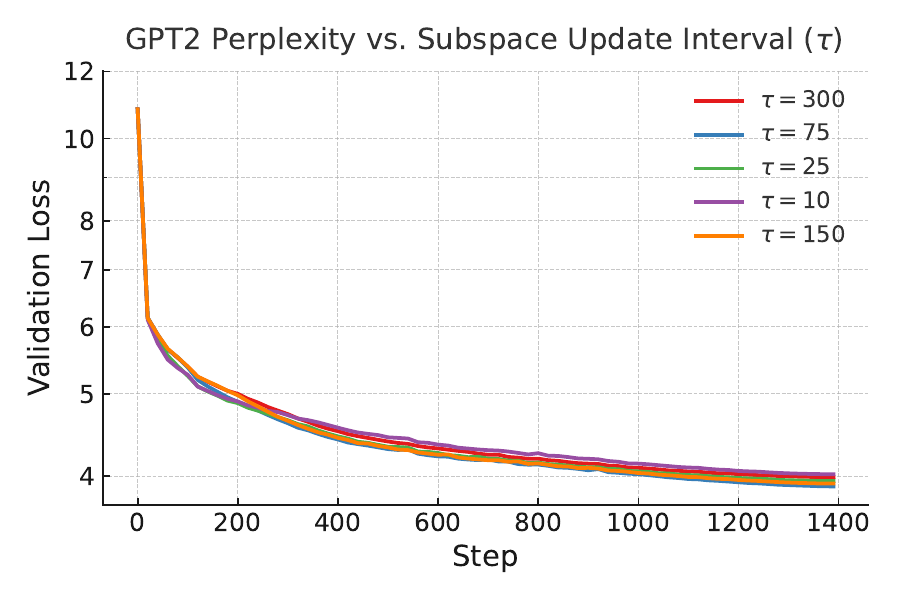}
 \caption{GaLore Update Frequency ($\tau$) Ablation}
 \label{fig:galore_tau_ablation}
 \end{subfigure}
 \caption{(a) Average mass ratio of the AdamW first moment captured by top-$r$ singular vectors during Tulu3 fine-tuning ($r=16, 32$). (b) Validation perplexity vs. subspace update interval ($\tau$) for GaLore ($r=32$) on NanoGPT.}
 \label{fig:ablation_plots}
\end{figure}

\subsubsection*{Impact of GaLore's Subspace Update Frequency} We analyze the impact of subspace update frequency in methods like GaLore compared to MoFaSGD's implicit per-iteration adaptation. MoFaSGD updates its momentum factors at every step, while GaLore performs explicit, costly subspace updates (e.g., full SVD on the gradient) at intervals $\tau$. To investigate whether simply increasing GaLore's update frequency (decreasing $\tau$) matches the benefits of MoFaSGD's online subspace adaptation, we conduct an ablation study on $\tau$ using the NanoGPT pre-training setup ( $0.73B$ token budget with rank $r=32$). We vary $\tau$ across $\{10, 25, 75, 150, 300\}$ steps. Figure~\ref{fig:galore_tau_ablation} shows the validation perplexity curves.

Very frequent updates ($\tau=10$ or $\tau=25$) do not yield the best performance and are slightly worse than less frequent updates (e.g., $\tau=150$). This aligns with GaLore's findings~\citep{zhao2024galore} and suggests that overly frequent subspace changes in GaLore can disrupt optimizer state accumulation. This finding underscores the challenge of online subspace changes. This finding suggests that MoFaSGD's approach of avoiding abrupt subspace changes offers a more stable and efficient way to leverage low-rank structures.

\subsection{Memory Usage Breakdown and Profiling}
\label{sec:memory-breakdown}

We assess MoFaSGD's memory efficiency by decomposing its GPU memory usage across five categories: parameters, optimizer states, gradients, activations, and adapters. Figure~\ref{fig:mem-break-bar} shows a comparative breakdown of memory consumption across six optimizer setups on LLaMA3.1-8B. MoFaSGD achieves total memory usage of $29.4$ GB, competitive with fused GaLore and LoRA, while enabling full-parameter updates. In contrast, AdamW exceeds $70$ GB due to high-cost full-rank momentum buffers and persistent gradient accumulation. SWAN and GaLore (non-fused) similarly suffer from gradient buffers, which dominate their memory footprints.

These savings arise from three key design elements: (i) eliminating second-moment buffers entirely, (ii) maintaining a low-rank SVD factorization of first-order momentum, and (iii) fusing gradient projection and zeroing operations during backpropagation, which prevents gradient accumulation from persisting across steps.

To further validate these results, Figure~\ref{fig:mfsgd-trace} shows the memory trace during MoFaSGD training. We observe a clean separation between parameter storage, a narrow and persistent optimizer state band, and tightly bounded gradient memory. Compared to AdamW (Appendix~\ref{app:mem-profiling}), which shows $16$ GB persistent gradient buffers and $32$ GB optimizer states, MoFaSGD significantly reduces runtime memory pressure. A full quantitative table with GB-level memory usage across all optimizers is provided in Appendix~\ref{app:mem-profiling}.

\begin{figure}[t]
 \centering
\includegraphics[scale=0.25]{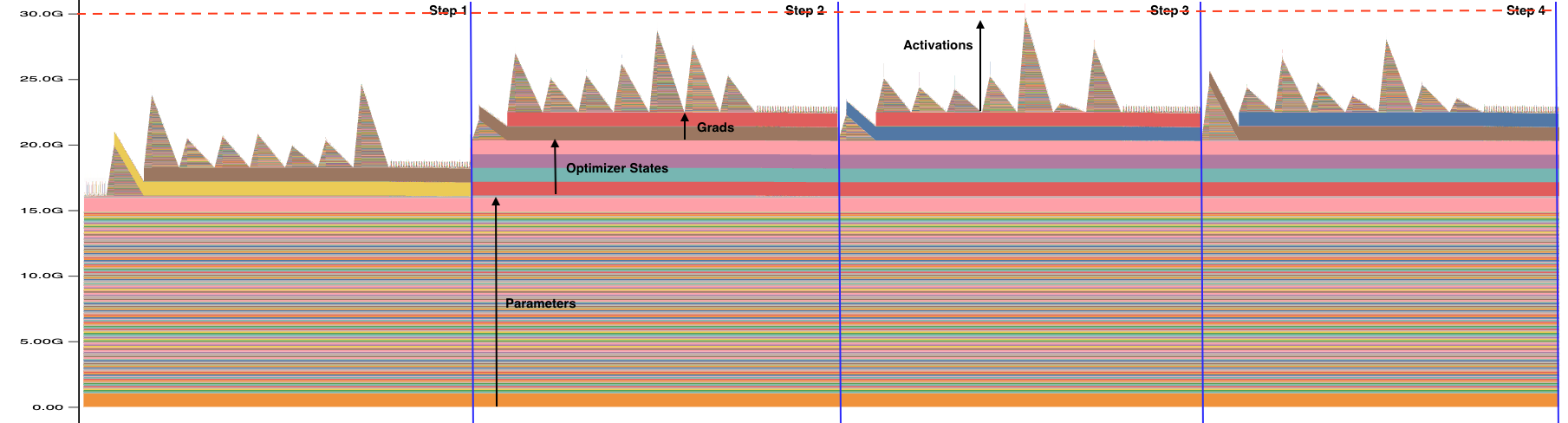}
\caption{GPU memory trace during LLaMA3.1-8B training with MoFaSGD ($r=8$).}
\label{fig:mfsgd-trace}
\end{figure}
\vspace{0.5em}

\subsection{Implementation Details}

\noindent\textbf{Initialization.}
To initialize the momentum factors in MoFaSGD, we perform a full singular value decomposition (SVD) once at the beginning of training, using the gradient from the first step. Specifically, we set $\bm U_0 = \bm U_{\bm G_0}^{1:r}$, $\bm V_0 = \bm V_{\bm G_0}^{1:r}$, and $\bm \Sigma_0 = \bm \Sigma_{\bm G_0}^{1:r}$, corresponding to the top-$r$ components. We apply MoFaSGD exclusively to the linear layers of transformer blocks, as spectral normalization is particularly effective for these layers. Notably, both GaLore~\citep{zhao2024galore} and Muon~\citep{jordan2024muon} adopt similar approach: applying their custom optimizers only to transformer linear layers while using AdamW for embedding weights and 1D layers. We follow this convention and use AdamW (in \texttt{bf16}) for those remaining layers. Consequently, the GPU memory usage for optimizer states—including in fused-GaLore and MoFaSGD—is approximately 4.2\,GB.

\subsubsection*{Gradient Accumulation and Fused Implementation}
Gradient accumulation is essential for training large models with limited hardware memory. For methods such as GaLore~\citep{zhao2024galore} and MoFaSGD, promptly clearing gradient buffers via backward hooks is crucial; otherwise, memory savings akin to LoRA are not realized (see Figure~\ref{fig:non-fused-gal}). To address this, GaLore performs a fused optimizer step, where the gradient is used to update the corresponding parameter immediately during the backward pass, after which the gradient buffer is cleared.

However, this approach is incompatible with gradient accumulation over multiple micro-batches, since the optimizer step must occur only after all gradients are accumulated. To resolve this, we introduce low-rank gradient buffers into the optimizer state and register a backward hook that accumulates low-rank projected gradients instead of performing the optimizer step immediately. For MoFaSGD, updating the momentum factors only requires the low-rank projections $\bm{G}_t \bm{V}_t$, $\bm{U}_t^\top \bm{G}_t$, and $\bm{U}_t^\top \bm{G}_t \bm{V}_t$. We thus implement a temporary low-rank gradient buffer, enabling gradient accumulation while avoiding the need for a persistent full-rank buffer (see memory usage in Figure~\ref{fig:mfsgd-trace}).

A similar strategy is used for GaLore: we implement a gradient-accumulation-friendly fused version by storing full-rank gradients in low-rank form. Since GaLore only needs the projection $\bm Q_t^\top \bm G_t$ to update its subspace momentum, low-rank accumulation suffices.

\subsubsection*{Stateless optimizers}
Recent stateless optimizers such as SWAN~\citep{swan_ma_2024} and SinkGD~\citep{scetbon2025gradient} currently do not have open-source implementations, and we encountered challenges in achieving stable convergence with our preliminary implementations. Nevertheless, given the structural similarity between SWAN and Muon~\citep{jordan2024muon} without momentum, we approximate the memory usage of such approaches by profiling Muon with its momentum buffer disabled. These results are reported in Figure~\ref{fig:mem-break-bar} under the label "SWAN" as a representative proxy for stateless optimizers.

One technical consideration with these methods is their compatibility with gradient accumulation, which is a common practice in low-resource settings. In such scenarios, the fused backward strategy may not be applicable, and persistent gradient buffers are still required in memory (see Figure~\ref{fig:swan}). While stateless optimizers are promising from a conceptual standpoint, this constraint currently poses practical challenges for achieving memory efficiency on par with parameter-efficient fine-tuning approaches like LoRA, or subspace-based approaches like fused GaLore and MoFaSGD.

\section{Conclusion}
We introduced MoFaSGD, a memory-efficient optimizer that uses memory comparable to LoRA-like methods while achieving strong convergence. The core idea is to maintain a low-rank factorization of momentum as the optimizer state and leverage this representation to directly update model parameters at each iteration using spectrally normalized updates, building upon the success of similar approaches in full-training scenarios. We provide a comprehensive theoretical analysis and establish an upper bound on convergence for stochastic non-convex optimization, matching existing lower bounds. Moreover, we empirically demonstrate the effectiveness of our method compared to standard low-rank optimization approaches in both pre-training and post-training setups.

\noindent\textbf{Limitations and Future Directions.} MoFaSGD shows strong empirical performance, but several limitations suggest promising directions for future work. First, as a low-rank subspace optimizer, MoFaSGD may underperform compared to full-rank methods on more complex tasks. Exploring adaptive or dynamic rank selection strategies, beyond the fixed-rank settings used in our study, could help mitigate this gap; for instance, future work could investigate monitoring projection residuals to allocate more rank to layers with higher approximation error or implementing a budget-aware allocation scheme to optimally balance performance and memory. Second, although our theoretical results establish convergence guarantees, they rely on assumptions such as nuclear-norm smoothness. The practical relevance and limitations of such assumptions in deep learning setups remain an open area for study.

\section*{Acknowledgment}
This work was partially supported by NSF CAREER Award \#2239374.

% \subsubsection*{Broader Impact Statement}
% In this optional section, TMLR encourages authors to discuss possible repercussions of their work,
% notably any potential negative impact that a user of this research should be aware of. 
% Authors should consult the TMLR Ethics Guidelines available on the TMLR website
% for guidance on how to approach this subject.

% \subsubsection*{Author Contributions}
% If you'd like to, you may include a section for author contributions as is done
% in many journals. This is optional and at the discretion of the authors. Only add
% this information once your submission is accepted and deanonymized. 

% \subsubsection*{Acknowledgments}
% Use unnumbered third level headings for the acknowledgments. All
% acknowledgments, including those to funding agencies, go at the end of the paper.
% Only add this information once your submission is accepted and deanonymized. 

\bibliography{tmlr}
\bibliographystyle{tmlr}

\clearpage
\appendix

\section{Additional Related Works}
\label{sec:app-rw}
\noindent\textbf{Parameter-Efficient Fine-Tuning (PEFT).} To reduce the memory associated with full-parameter updates, PEFT methods constrain updates to a small subset of parameters or add minimal trainable components. Adapters introduce small bottleneck layers into the model \citep{houlsby2019parameter}, while the popular LoRA technique \citep{hu2021lora} injects trainable low-rank matrices into existing weight layers, drastically reducing memory by training only these adapters \citep{hu2021lora}. Variants include AdaLoRA for adaptive rank allocation \citep{zhang2023adalora}, VeRA using shared low-rank matrices \citep{kopiczko2023vera}, and DoRA decomposing updates into magnitude/direction \citep{liu2024dora}. Other PEFT methods tune only biases (BitFit \citep{zaken2021bitfit}) or learn input embeddings (prompt tuning \citep{li2021prefix, lester2021power}). QLoRA \citep{dettmers2024qlora} further combines LoRA with 4-bit weight quantization. While memory-efficient, PEFT methods generally keep the base model fixed and rely on the capacity of the trained low-rank components.

\subsubsection*{Second-Order and Preconditioning Methods.} These methods aim primarily to accelerate convergence and improve optimization performance by incorporating curvature information, offering potentially more effective descent directions compared to first-order methods \citep{george2018fast,duvvuri2024combining,gupta2018shampoo}. Exact second-order information (e.g., using the full Hessian in Newton's method or the full gradient covariance in full-matrix Adagrad \citep{Duchi2011Adagrad, duvvuri2024combining}) is computationally infeasible for large models due to the prohibitive cost ($\mathcal{O}(d^2)$ memory, $\mathcal{O}(d^3)$ computation) of storing and manipulating the required matrices \citep{george2018fast,duvvuri2024combining, gupta2018shampoo}. Therefore, practical methods rely on approximations to make harnessing second-order information tractable. Quasi-Newton methods like LBFGS \citep{liu1989limited, duvvuri2024combining} build implicit Hessian approximations but can still be memory-intensive \citep{duvvuri2024combining}.

A major family of approximations leverages Kronecker product factorizations. KFAC (Kronecker-Factored Approximate Curvature) \citep{martens2015optimizing} approximates the Fisher Information Matrix (an approximation of the Hessian \citep{martens2020new}) block-diagonally using Kronecker products specific to network layer types \citep{martens2015optimizing,grosse2016kronecker}. Shampoo \citep{gupta2018shampoo}, motivated by full-matrix Adagrad \citep{gupta2018shampoo, anil2019memory}, uses Kronecker products of gradient statistics ($\bm G \bm G^\top, \bm G^\top \bm G$) as a preconditioner \citep{gupta2018shampoo}. While powerful, these factored approximations require computing matrix roots or inverses, which are computationally demanding and often amortized over multiple steps, potentially using stale curvature estimates \citep{gupta2018shampoo, george2018fast}.

Refinements seek to improve accuracy or efficiency. EKFAC \citep{george2018fast} builds on KFAC by performing cheaper diagonal updates within the Kronecker-factored eigenbasis (KFE), yielding a provably better Fisher approximation \citep{george2018fast}. CASPR \citep{duvvuri2024combining} uses Kronecker sums, aiming for a potentially better Adagrad approximation than Shampoo's Kronecker product \citep{duvvuri2024combining}. SOAP \citep{vyas2024soap} runs AdamW within Shampoo's eigenbasis, aiming for improved stability and fewer hyperparameters, especially when the basis update is infrequent \citep{vyas2024soap}. These methods highlight the ongoing effort to balance the power of second-order information with computational feasibility \citep{george2018fast, vyas2024soap}.

\noindent\textbf{On-the-Fly and Partial Updates.} These techniques reduce memory by avoiding large buffers or distributing states. LOMO \citep{lv2023full} applies updates immediately, AdaLOMO \citep{lv2023adalomo} adds adaptivity, BAdam \citep{luo2024badam} uses weight blocks, and ZeRO \citep{rajbhandari2020zero} shards states in distributed settings.

\section{Additional Preliminaries}
\label{sec:add-pre}
\noindent\textbf{Adaptive Optimization Methods.} Adaptive optimization methods have become essential for training deep neural networks. By adjusting learning rates on a per-parameter (or per-dimension) basis using information from past gradients and the loss curvature, they often lead to significantly faster convergence than vanilla stochastic gradient descent (SGD) in practice. This line of work began with Adagrad~\citep{Duchi2011Adagrad}, which proposed leveraging the gradient preconditioner $\bm P_t = \sum_{i=1}^t \bm g_i \bm g_i^{\top} \in \mbb R^{mn \times mn}$ and performing the update rule as $\bm w_{t+1} = \bm w_t - \eta \bm P_t^{-\frac{1}{2}} \bm g_t$, where in our notation $\bm g_i = \operatorname{Vec}(\bm G_i) \in \mbb R^{mn}$ and $\bm w_t = \operatorname{Vec}(\bm W_t)$. Note that considering only the diagonal terms of $\bm P_t$ and using the EMA of gradient covariances (e.g., $\bm P_t = \sum_{i=1}^t \beta_2^{t-i} \bm g_i \bm g_i^{\top}$) led to an early variant of RMSprop~\citep{Tieleman2012RMSprop}, and further using EMA of the gradients themselves, $ \hat{\bm g}_t = \sum_{i=1}^t \beta_1^{t-i} \bm g_t$, instead of the plain $\bm g_t$, yields the well-known Adam~\citep{Kingma2015Adam}. However, the mentioned methods are limited forms of second-order methods, where the preconditioner matrix is restricted to being diagonal. Considering either the original Adagrad~\citep{Duchi2011Adagrad} perspective or second-order optimization methods such as Newton's method, moving beyond diagonal preconditioners is a natural step toward achieving even faster convergence. However, storing a non-diagonal preconditioner is often not feasible due to the size of DNNs, requiring $O(m^2 n^2)$ memory. Shampoo~\citep{gupta2018shampoo} proposed maintaining a Kronecker approximation $\bm P_t \sim \bm R_t \otimes \bm L_t$, where $\bm L_t \in \mbb R^{m \times m}$ and $\bm R_t \in \mbb R^{n \times n}$, thereby reducing the number of parameters required for storing the preconditioning matrix to $O(m^2 + n^2)$. \citet{morwani2024new} showed that Shampoo~\citep{gupta2018shampoo} closely approximates the full Adagrad preconditioner, unified all aforementioned methods, and argued that the gradient covariance $\bm g_t \bm g_t^{\top}$ closely approximates the Gauss-Newton components of the Hessian at $\bm w_t$, thereby drawing an interesting connection between Adagrad~\citep{Duchi2011Adagrad} and second-order optimization methods.

\section{Experimental Details}

This section provides detailed configurations, hyperparameters, tuning procedures, and dataset information for the experiments presented in the main paper, aiming to ensure reproducibility.

\subsection{General Implementation Details}

\begin{itemize}
    \item \textbf{Software:} Experiments were implemented using standard libraries for deep learning, including PyTorch, Hugging Face Transformers, and Accelerate. Specific library versions are detailed in the code repository. 
    \item \textbf{Hardware:} All experiments were conducted on NVIDIA A100 GPUs. The number of GPUs may have varied slightly depending on the specific experimental setup (e.g., $4$ GPUs for Tulu3 tuning).
    \item \textbf{Code:} The implementation for MoFaSGD is available at \url{https://github.com/AnonCode1/MFSGD.git}.
\end{itemize}

\subsection{Pre-training: NanoGPT Speedrun}
\label{sec:app-nano}

\noindent\textbf{Hyperparameters and Tuning.} For this benchmark, Muon hyperparameters were kept at the tuned defaults provided by~\citet{modded_nanogpt_2024}. Learning rates for AdamW, GaLore, and MoFaSGD were tuned via grid search over $\{ 1e-4, 2e-4, 3e-4, 5e-4, 8e-4, 1e-3, 3e-3, 5e-3, 8e-3, 1e-2, 2e-2, 5e-2\}$. MoFaSGD's momentum decay $\beta$ was tuned over $\{ 0.5, 0.85, 0.90, 0.95\}$. GaLore's SVD frequency was tuned over $\{ 10, 25, 75, 150, 300 \}$. The best-performing hyperparameters based on final validation perplexity were selected.

Key hyperparameters selected for the NanoGPT pre-training experiments are summarized in Table~\ref{table:nanogpt_hparams}.

\begin{table}[t]
\centering
\caption{Final Selected Hyperparameters for NanoGPT Pre-training Experiment ($0.73B$ tokens)}
\label{table:nanogpt_hparams}
\begin{tabular}{@{}llccc@{}}
\toprule
Optimizer         & 
Rank& Learning Rate (LR) & SVD Freq. & MoFaSGD $\beta$ \\ \midrule
Muon    & -               & 5e-2               & N/A       & N/A               \\
AdamW           & -               & 2e-3               & N/A       & N/A               \\ \midrule
GaLore          & $r=16$            & 2e-2               & 150       & N/A               \\
                & $r=32$            & 8e-3               & 75        & N/A               \\
                & $r=128$           & 8e-3               & 75        & N/A               \\ \midrule
MoFaSGD (Ours)  & $r=16$            & 1e-3               & N/A       & 0.85              \\
                & $r=32$            & 5e-4               & N/A       & 0.85              \\
                & $r=128$           & 3e-4               & N/A       & 0.85              \\ \midrule
\multicolumn{2}{@{}l}{Batch Size (Tokens)} & \multicolumn{3}{c}{524,288 tokens} \\
\multicolumn{2}{@{}l}{LR Schedule}         & \multicolumn{3}{c}{Stable then linear decay with cool-down of $0.4$} \\ \bottomrule
\end{tabular}
\end{table}

\subsection{NLU Fine-tuning: GLUE Benchmark}
\label{sec:app-glue}

\noindent\textbf{Hyperparameters and Tuning.} The experimental setup, including hyperparameters for baseline optimizers (AdamW, GaLore, LoRA+AdamW), directly follows the configuration reported in~\citet{zhao2024galore} for RoBERTa-Base on GLUE. For our method, MoFaSGD, learning rates were tuned via grid search over $\{ 1e-5, 2e-5, 5e-5, 1e-4, 5e-4  \}$ for each task and rank combination ($r=4$ and $r=8$), and the batch size was fixed to $16$ for all experiments. The MoFaSGD momentum decay $\beta$ was also kept fixed at $0.95$. The learning rate yielding the best validation accuracy on each specific task was selected. The final selected hyperparameters for MoFaSGD across the evaluated GLUE tasks are presented in Table~\ref{table:glue_mofasgd_hparam}.

\begin{table}[htbp]
\centering
\caption{Final Selected MoFaSGD Hyperparameters for GLUE Tasks (RoBERTa-Base).}
\label{table:glue_mofasgd_hparam}
\begin{tabular}{@{}lcccccccc@{}} 
\toprule
Hyperparameter      & Rank     & MNLI   & QQP    & SST-2  & MRPC   & COLA   & QNLI   & RTE    \\ \midrule 
Learning Rate (LR)  & $r=4$    & 1e-4   & 5e-5   & 5e-5   & 1e-4   & 2e-5   & 2e-5   & 5e-5   \\ 
                    & $r=8$    & 5e-5   & 5e-5   & 2e-5   & 5e-5   & 1e-5   & 1e-5   & 5e-5   \\ \midrule 
MoFaSGD $\beta$     & $r=4,\ 8$ & \multicolumn{7}{c}{0.95} \\ \midrule 
Batch Size          & $r=4,\ 8$ & \multicolumn{7}{c}{16} \\ 
Epochs              & $r=4,\ 8$ & \multicolumn{7}{c}{15} \\ \bottomrule
\end{tabular}
\end{table}

\subsection{Instruction Tuning: Tulu3}
\label{sec:app-tulu}

\noindent\textbf{Hyperparameters and Tuning.} The setup largely follows~\citet{lambert2024t}. Learning rates for all optimizers were selected from the grid $\{1e-5, 5e-5, 1e-4 , 5e-4 , 1e-3\}$. The final LR for each method was chosen based on final validation loss on a held-out subset of the tulu-3-sft mixture. We used $5\%$ of the sampled dataset for validation. Final selected hyperparameters for the Tulu3 instruction tuning experiment are summarized in Table~\ref{table:tulu_hparams}.

\begin{table}[htbp] 
\centering
\caption{Final Selected Hyperparameters for Tulu-3 Instruction Tuning}
\label{table:tulu_hparams}
\begin{tabular}{@{}lccc@{}}
\toprule
Hyperparameter        & GaLore     & LoRA      & MoFaSGD (Ours) \\ \midrule
Learning Rate (LR)    & 5e-5       & 5e-5      & 1e-4 \\
Rank ($r$)            & 8          & 8         & 8 \\
LoRA Alpha            & N/A        & 16        & N/A \\
GaLore SVD Freq.      & 200        & N/A       & N/A \\
MoFaSGD $\beta$       & N/A        & N/A       & 0.95 \\
Effective Batch Size  & \multicolumn{3}{c}{128} \\
Epochs                & \multicolumn{3}{c}{1} \\
\bottomrule
\end{tabular}
\end{table}
\vspace{1em}
\noindent\textbf{Momentum Spectral Analysis.}
\vspace{0.5em}
\begin{itemize}
    \item \textbf{Methodology:} Analysis was performed on the Tulu3 instruction-tuning run using the default optimizer AdamW with a learning rate of $1e-5$ and a batch size of $128$ for one epoch. We directly inspected the AdamW state buffer for the first-moment EMA ($\bm M_t$) (\texttt{state[`exp\_avg`]}) after each backward pass periodically. SVD was performed on these buffers, and the energy ratio for rank $r$ was computed as $\frac{\sum_{i=1}^{r}\sigma_{i,\bm M_{t}}^{2}}{\|\bm M_{t}\|_{\mathrm{F}}^{2}}$.
\end{itemize}

\subsection{Training Loss Curves}
\label{app:curve-train}
The training loss curves in Figures~\ref{fig:mnli_loss} and~\ref{fig:tulu_loss} illustrate MoFaSGD's convergence behavior. In both the simpler GLUE fine-tuning and the complex Tulu3 instruction-tuning, MoFaSGD consistently achieves lower training loss compared to LoRA and GaLore. This suggests more effective optimization dynamics, potentially stemming from the synergistic effect of adaptive low-rank momentum factorization and the spectrally normalized updates, which might offer better preconditioning than the standard updates used in LoRA or the subspace accumulation in GaLore.

\begin{figure}[htbp]
    \centering
    % First subfigure
    \begin{subfigure}[b]{0.495\textwidth}
        \centering
        \includegraphics[width=0.9\linewidth]{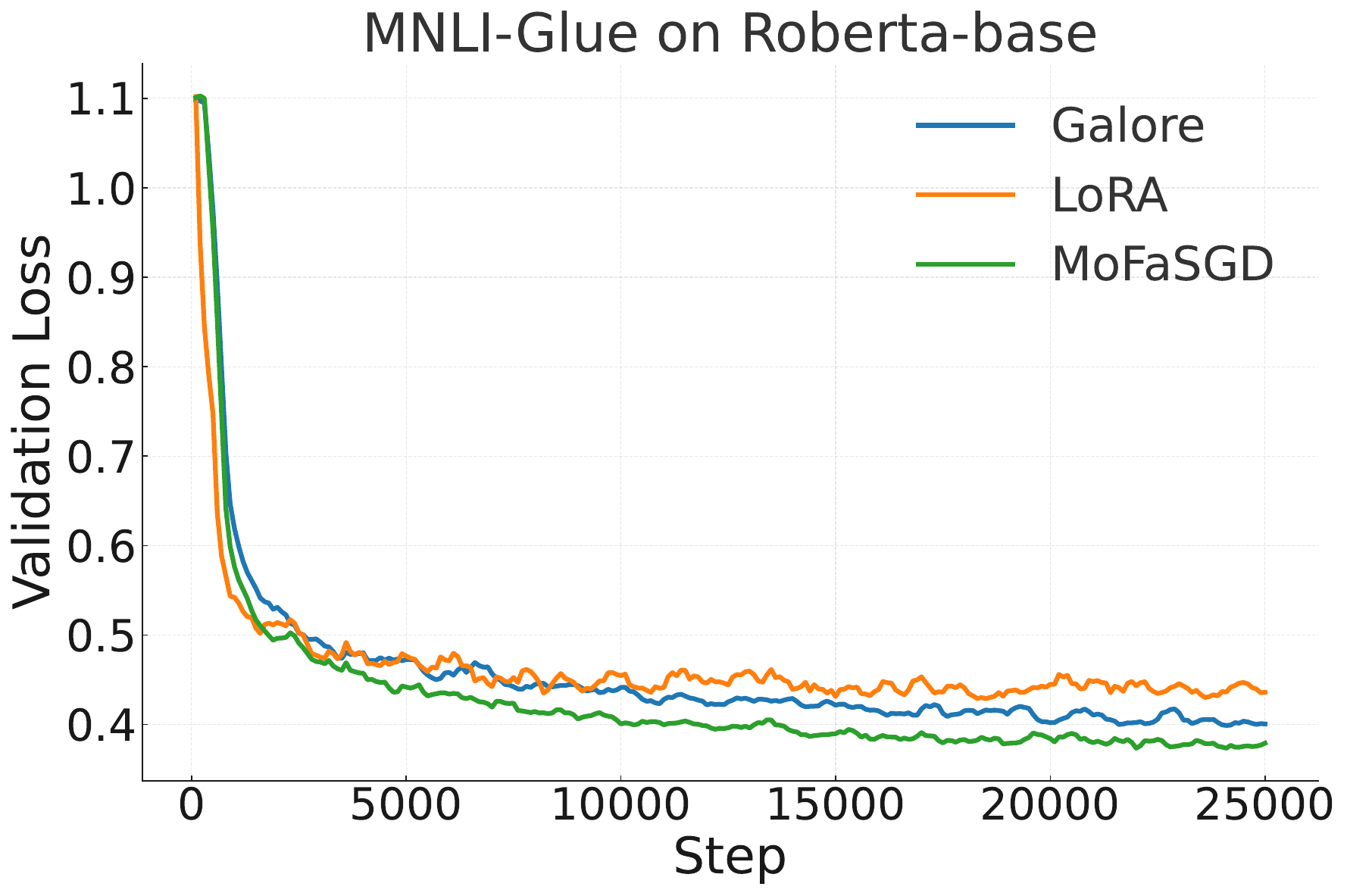}
        \caption{MNLI Training Loss}
        \label{fig:mnli_loss}
    \end{subfigure}
    \hfill
    % Second subfigure
    \begin{subfigure}[b]{0.495\textwidth}
        \centering
        \includegraphics[width=0.9\linewidth]{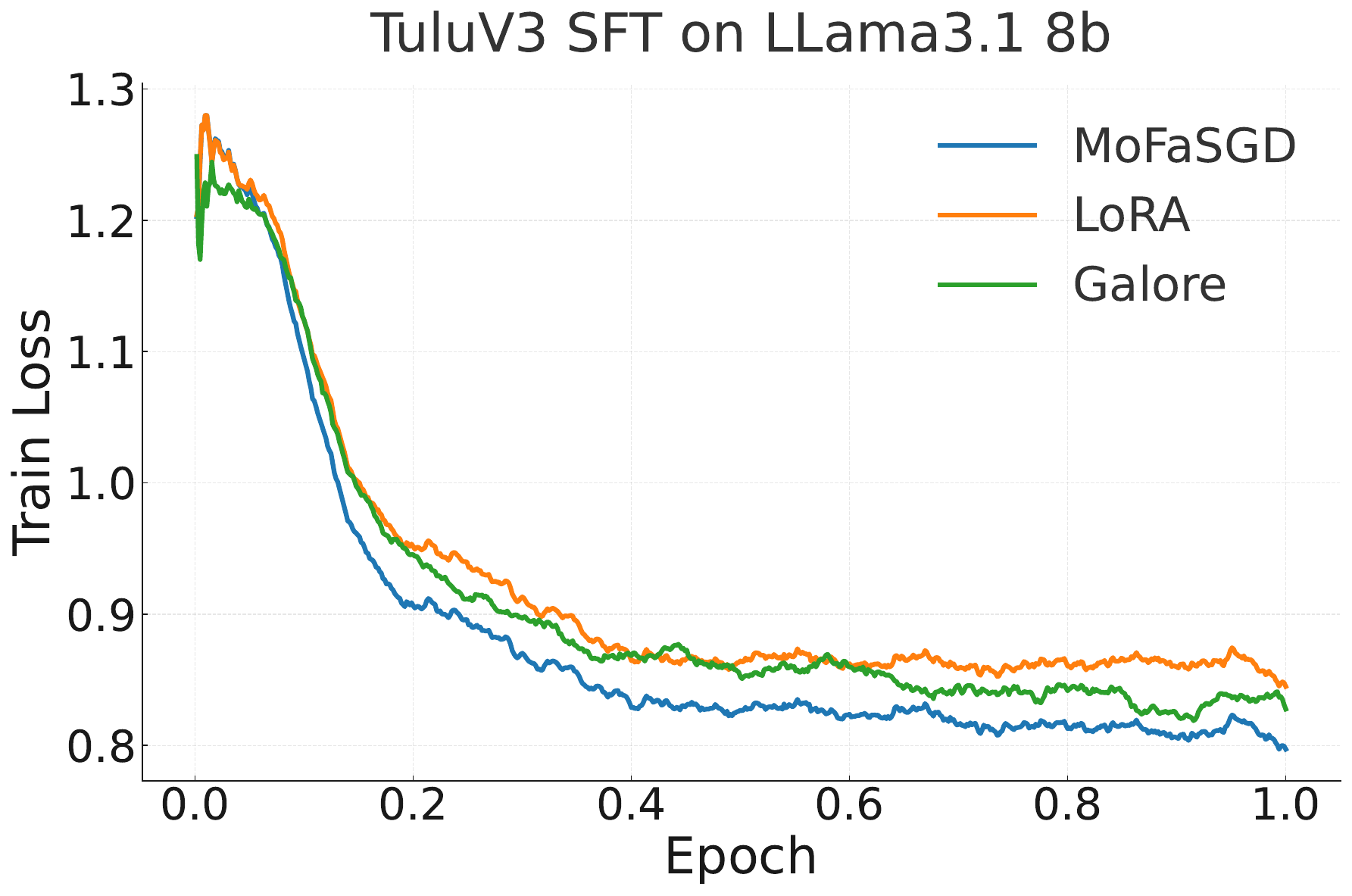}
        \caption{TuluV3 Training Loss}
        \label{fig:tulu_loss}
    \end{subfigure}
    \caption{Training loss curves for post-training setups.}
    \label{fig:combined_three_figures}
\end{figure}

\subsection{Memory Profiling Details}
\label{app:mem-profiling}
We provide full memory trace visualizations and a quantitative breakdown of memory usage across all optimizer configurations (LLaMA3.1-8B, BF16, no activation checkpointing, batch size $1$, gradient accumulation $8$). These results correspond to the experiments summarized in Figure~\ref{fig:mem-break-bar} and Section~\ref{sec:memory-breakdown}.

\textbf{Profiling Setup.}
Memory traces were collected using PyTorch’s native CUDA memory snapshot utility \texttt{torch.cuda.memory\_snapshot()} at each training step. Each trace illustrates the evolution of GPU memory usage over four training steps. Manual annotations highlight key memory components.

\textbf{Observations.}  
As shown in the traces, MoFaSGD maintains a compact and stable memory profile, with minimal transient gradients and compressed optimizer state bands. AdamW and GaLore (non-fused) show persistent high memory usage due to full-rank gradient and optimizer state buffers. LoRA adds moderate adapter overhead, while fused GaLore shows improvements over its non-fused version.
\begin{figure}[H]
    \centering
    \includegraphics[width=\linewidth]{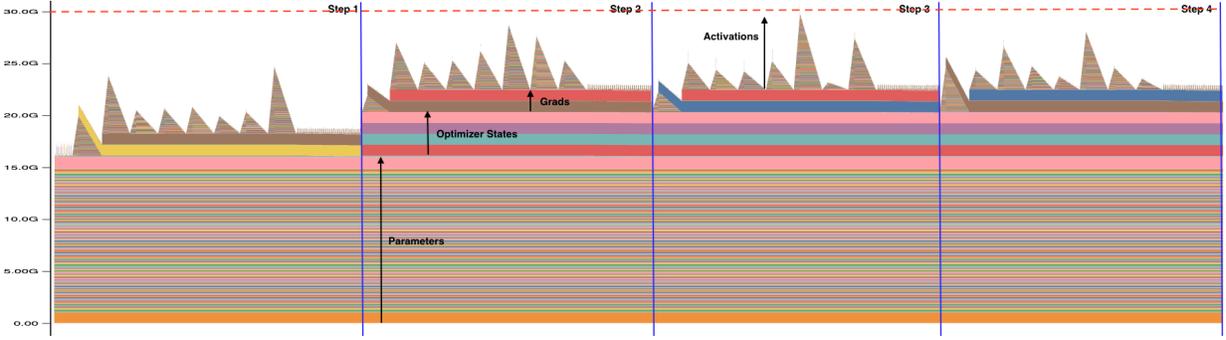}
    \caption{MoFaSGD ($r=8$): Compact optimizer states and minimal gradient spikes. Total memory $\sim$29.4 GB.}
\end{figure}
\begin{figure}[H]
    \centering
    \includegraphics[width=\linewidth]{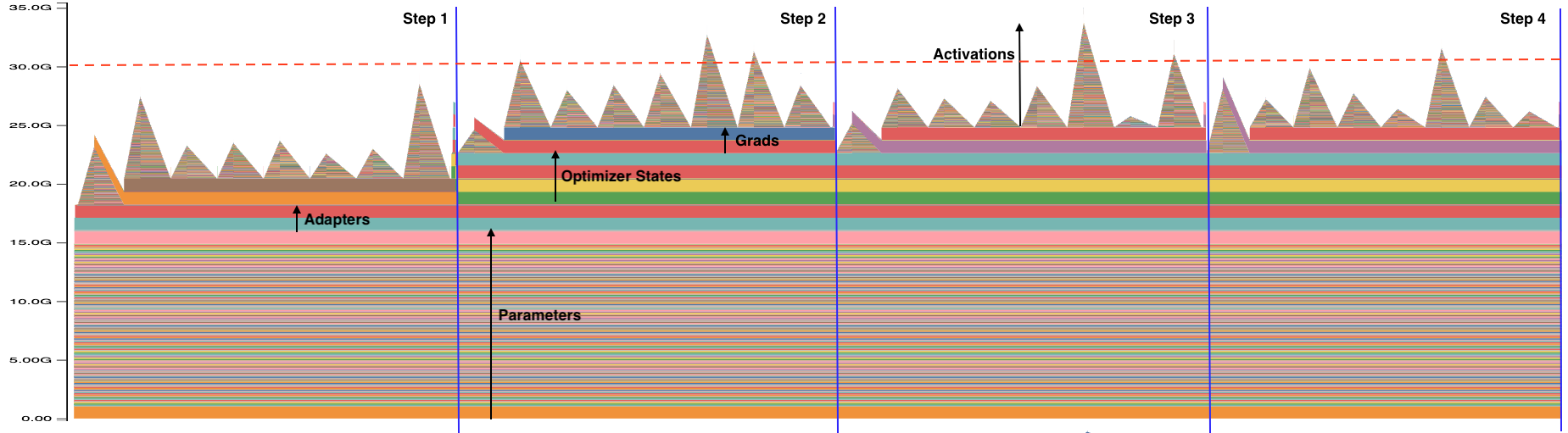}
    \caption{LoRA ($r=8$): Slightly higher activation and adapter memory, total $\sim$33.6 GB.}
\end{figure}
\begin{figure}[H]
    \centering
    \includegraphics[width=\linewidth]{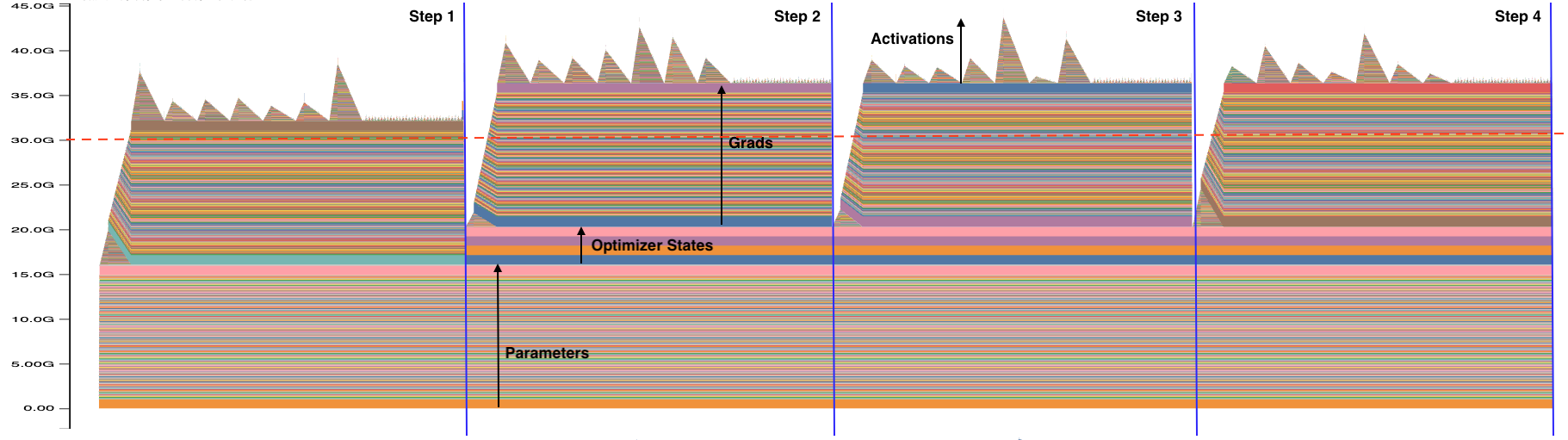}
    \caption{SWAN: Stateless optimizer, but full-sized gradient buffers lead to higher memory usage ($\sim$43.9 GB).}
\label{fig:swan}
\end{figure}
\begin{figure}[H]
    \centering
    \includegraphics[width=\linewidth]{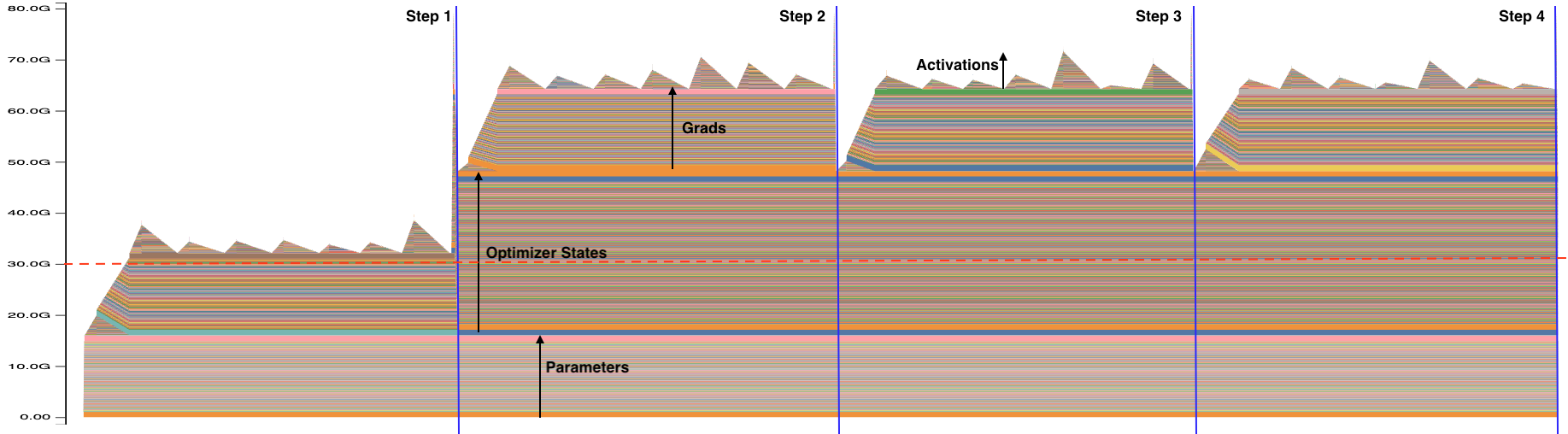}
    \caption{AdamW (BF16): Full-rank moments and gradient buffers dominate memory ($\sim$70.8 GB).}
\end{figure}
\begin{figure}[H]
    \centering
    \includegraphics[width=\linewidth]{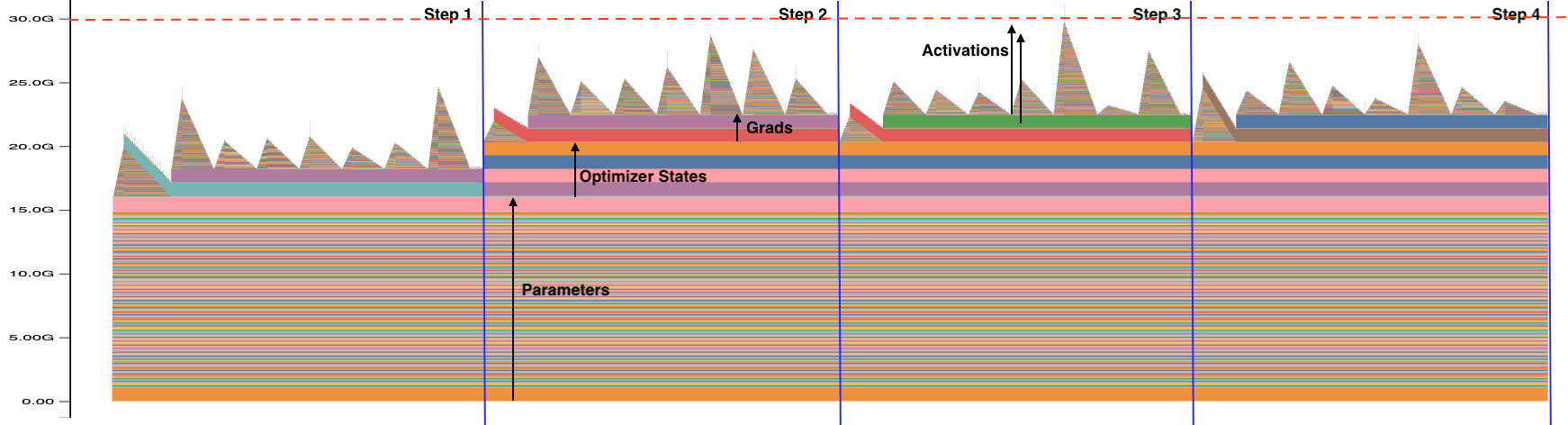}
    \caption{GaLore (Fused, $r=8$): Low gradient and optimizer state memory; total $\sim$30.0 GB.}
\end{figure}
\begin{figure}[H]
    \centering
    \includegraphics[width=\linewidth]{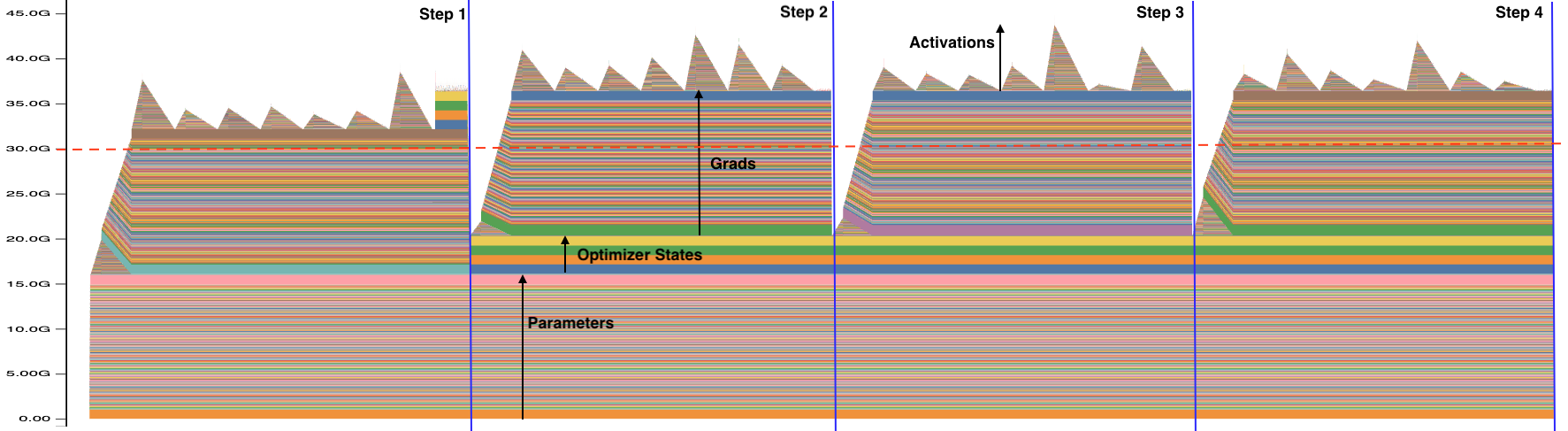}
    \caption{GaLore (Non-Fused, $r=8$): Gradient accumulation inflates memory cost ($\sim$44.5 GB).}
    \label{fig:non-fused-gal}
\end{figure}

\noindent\textbf{Quantitative Breakdown.} The table below summarizes the memory footprint by category for all setups.

\begin{table}[H]
\centering
\caption{Memory breakdown (in GB) for LLaMA3.1-8B fine-tuning.}
\label{tab:appendix-mem-breakdown}
\begin{tabular}{lccccc}
\toprule
\textbf{Optimizer} & \textbf{Params} & \textbf{Opt. States} & \textbf{Gradients} & \textbf{Activations} & \textbf{Adapters} \\
\midrule
MoFaSGD $(r=8)$ & 15.5 & 4.2 & 2.1 & 7.6 & 0 \\
LoRA $(r=8)$ & 15.5 & 4.2 & 2.1 & 9.8 & 2.0 \\
SWAN & 15.5 & 4.2 & 16.0 & 8.2 & 0 \\
AdamW (BF16) & 15.5 & 31.8 & 16.0 & 7.5 & 0 \\
GaLore Fused $(r=8)$ & 15.5 & 4.2 & 2.1 & 8.2 & 0 \\
GaLore Non-Fused $(r=8)$ & 15.5 & 4.2 & 16.0 & 8.8 & 0 \\
\bottomrule
\end{tabular}
\end{table}

\section{Proofs and Technical Details}

\subsection{On the Choice of Nuclear-norm Smoothness}
\label{sec:app-extra}

One of the main assumptions behind our convergence bound in Theorem~\ref{thm:main-conv} is smoothness under the nuclear norm (Assumption~\ref{assmp-main:1}). Here, we provide a detailed argument to justify this assumption. It is primarily motivated by concepts of descent in normed spaces, duality maps, and modular duality in deep learning—advancements detailed in recent works~\citep{bernstein2023automatic,large2025scalable,bernstein2024modular,old_bernstein_2024}. These provide a solid foundation for spectral normalization of the descent update direction, which has been popularized in strong optimizers such as Muon~\citep{jordan2024muon,liu2025muon}.

First, we briefly discuss modular optimization theory, inspired by~\citet{large2025scalable}. The core idea of gradient descent is to optimize a locally linear upper bound of the loss function, as characterized by the gradient. Given a loss function $\gL: \gW \to \mbb R$, we ideally want to find a tight upper bound on the higher-order terms, $\gR(\vw, \Delta \vw)$:

\begin{equation}
    \gL (\vw + \Delta \vw) \le \gL(\vw) + \langle \nabla \gL(\vw), \Delta \vw \rangle + \gR(\vw, \Delta \vw)
\end{equation}

For $\gW \in \mbb R^d$, classical second-order theory estimates the upper bound on higher-order terms using the maximum singular value of the Hessian. This leads to employing the $\ell_2$ norm with the typical upper bound $\frac{L}{2} \| \Delta \vw \|_2^2$, where $L$ is the smoothness constant. However, this assumption often does not hold in deep learning practice and may yield a very loose upper bound. Modular theory~\citep{large2025scalable} aims to go beyond this by introducing an architecture-aware version of the upper bound based on the following conjecture:

\begin{equation}
   \gL (\vw + \Delta \vw) \le \gL(\vw) + \langle \nabla \gL(w), \Delta \vw \rangle + \frac{\lambda}{2} \| \Delta \vw \|^2
\tag{Normed Space Steepest Descent}
\end{equation}

Here, $\gW$ can be any vector space, and $\|.\|:\gW \to \mbb R$ is an arbitrary norm on it. Modular optimization theory aims to find appropriate norms on the weight space of all model parameters $\gW = \gW_1 \times \cdots \times \gW_L$. The modular norm on this parameter product space $\gW$ is defined as the scaled max over individual norms (e.g., $\max \left(s_1 \| \vw_1\|_{\gW_1}, \ldots, \| \vw_L\|_{\gW_L}\right)$). The specific choice of norms for each parameter is based on its role and properties. For instance, for a linear layer $\bm W$, we expect the normalized feature change not to be drastic, which is characterized by the induced operator norm defined as $\| \Delta \bm W\|_{\alpha \to \beta} = \max \frac{ \|\bm W \vx\|_{\beta}}{\|\vx\|_{\alpha}}$. Empirical observations show that for typical gradient updates, the normalized feature change of a linear layer is indeed close to the RMS-RMS operator norm~\citep{large2025scalable}.

\citet{large2025scalable} show that by assigning norms to atomic modules (e.g., RMS $\to$ RMS for Linear layers and $\ell_1 \to$ RMS for embeddings), the modular norm of general architectures like Transformers satisfies the smoothness property. Informally, Proposition 5 of~\citet{large2025scalable} shows that for a general module $\bm M$ over $\gX, \gY, \gW$, and for common loss functions $\gL(\vw ) = \E_{x,y}[\ell(M(\bm w , \bm x))]$ such as cross-entropy, we have:

\begin{equation}
\label{eq:ncn}
   \| \nabla \gL ( \vw + \Delta \vw) - \nabla \gL(\vw)\|^*_{M} \le L_{M} \| \Delta \vw\|_M
\end{equation}

where $\|.\|^*_{M}$ is the dual norm of the modular norm defined earlier. Since our method targets the linear layers of Transformers, and the dual norm of RMS $\to$ RMS is the fan-in/fan-out scaled nuclear norm, we argue that assuming smoothness under the nuclear norm is more natural than under the Frobenius norm. The Frobenius norm assumption essentially treats the parameter space as $\mbb R^d$, ignoring the matrix structure of linear layers and their architectural role.

However, note that our Assumption~\ref{assmp-main:1} is still stronger than Equation~\ref{eq:ncn}. Even if we use the RMS $\to$ RMS operator norm for all linear layers and assume the remaining parameters are fixed, letting the weight space be $\gW_{\texttt{linear}} = (\bm W_1,\ldots,\bm W_{N_l})$, then by Equation~\ref{eq:ncn}, we would have:

\[
\sum_{i=1}^{N_{l}} \| \nabla \gL(\bm W_i + \Delta \bm W_i) - \nabla \gL (\bm W_i)\|_* \le L \max_{i=1}^{N_l} \left(\| \Delta \bm W_i \|_2\right)
,\]

where $\|.\|_*$ denotes the nuclear norm, and $\|.\|_2$ is the spectral norm, i.e., the $\ell_2 \to \ell_2$ operator norm.

\subsection{Mathematical Tools and Notations}
\label{appx:note}

\noindent\textbf{Kronecker Product and Vectorization.} Here, we introduce the background regarding the definition of Kronecker product, Vectorization, and well-known properties of these operators that make it easier to work with them. The Kronecker product is denoted as $\otimes$, and for any arbitrary $\bm X \in \mbb R^{m_1 \times n_1}$ and $\bm Y \in \mbb R^{m_2 \times n_2}$, is defined as follows, where $x_{i,j}$ are the elements on row $i$ and column $j$ of the matrix $\bm X$:
\begin{equation}
\bm X \otimes \bm Y =
\begin{bmatrix}
x_{1,1} \bm Y & x_{1,2} \bm Y & \cdots & x_{1,n_1} \bm Y \\
x_{2,1} \bm Y & x_{2,2} \bm Y & \cdots & x_{2,n_1} \bm Y \\
\vdots & \vdots & \ddots & \vdots \\
x_{m_1,1} \bm Y & x_{m_1,2} \bm Y & \cdots & x_{m_1,n_1} \bm Y
\end{bmatrix}
\end{equation}

Moreover, we define the vectorization operator $\operatorname{Vec}(.)$ that stacks columns of the matrix as a vector. Particularly, we have:
\begin{equation}
    \operatorname{Vec}(\bm{X}) = 
    \begin{bmatrix}
    x_{1,1} \\
    x_{2,1} \\
    \vdots \\
    x_{m_1,1} \\
    x_{1,2} \\
    x_{2,2} \\
    \vdots \\
    x_{m_1,n_1}
    \end{bmatrix}  \in \mbb R^{m_1  n_1}
\end{equation}

The following lemma summarizes basic properties of $\otimes$ and $\operatorname{Vec}(.)$ that we leverage throughout our proofs in the paper. Since this lemma covers basic, well-known properties in Matrix Algebra, we refer the reader to~\citet{horn1994topics} for proofs and details.  
\begin{lemma}
\label{lemma:kron-base}
    Let $A$, $B$, $C$ and $D$ be arbitrary matrices and assume that operations in each property are well-defined with respect to their dimensions. We have: 
    \begin{enumerate}
    \item Scalar Multiplication: $\bm A \otimes (\alpha \bm B) = \alpha (\bm A \otimes \bm B) = (\alpha \bm A) \otimes \bm B$
        \item Associativity: $(\bm A \otimes \bm B) \otimes \bm C = \bm A \otimes (\bm B \otimes \bm C)$
        \item Distributivity over addition: $(\bm A + \bm B) \otimes \bm C = (\bm A \otimes \bm C) + (\bm B \otimes \bm C)$, and $\bm A \otimes (\bm B + \bm C)= (\bm A \otimes \bm B) + (\bm A \otimes \bm C)$
        \item Not necessarily commutative: $(\bm A  \otimes \bm B) \neq ( \bm B \otimes \bm A)$
        \item Transpose: $(\bm A \otimes \bm B)^{\top} = \bm A^{\top} \otimes \bm B^{\top}$
        \item if $\bm A$ and $\bm B$ are positive-semidefinite, we have $(\bm A \otimes \bm B)^{s} = \bm A^s \otimes \bm B^s$ for any positive real $s$, and if $\bm A$ is positive definite, it holds for any real $s$.
        \item Mixed-product property: $(\bm A \otimes \bm B)(\bm C \otimes \bm D) = \bm A \bm C \otimes \bm B \bm D $ 
        \item Outer product: $\bm u \otimes \bm v = \bm u \bm v^{\top}$
        \item Mixed Kronecker matrix-vector product: $(\bm A^{\top} \otimes \bm B) \operatorname{Vec}(\bm C) = \operatorname{Vec}(\bm B \bm C \bm A) $ 
    \end{enumerate}
\end{lemma}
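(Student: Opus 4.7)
The plan is to prove each of the nine properties directly from the block-definition of the Kronecker product $(\bm{A} \otimes \bm{B})_{ij} = a_{ij}\bm{B}$, with the mixed-product identity (7) serving as the algebraic workhorse from which (6) and (9) follow. Properties (1), (3), (5), and (8) are immediate by inspecting blocks: scalar multiplication and distributivity follow because each block is linear in $\bm{A}$ and $\bm{B}$ separately; the transpose identity follows because the $(j,i)$-block of $(\bm{A} \otimes \bm{B})^\top$ is $a_{ij}\bm{B}^\top$, matching the $(j,i)$-block of $\bm{A}^\top \otimes \bm{B}^\top$; and the outer-product identity follows by reshaping the column vector $\bm{u} \otimes \bm{v}$ into an $m \times n$ array whose $(i,j)$ entry is $u_i v_j$. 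Associativity (2) follows because both $(\bm{A} \otimes \bm{B}) \otimes \bm{C}$ and $\bm{A} \otimes (\bm{B} \otimes \bm{C})$ yield a doubly-indexed block structure with entries $a_{ij} b_{kl} \bm{C}$, and non-commutativity (4) is demonstrated by a standard $2\times 2$ counterexample.

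The central step is the mixed-product property (7). I would compute the $(i,j)$-block of $(\bm{A} \otimes \bm{B})(\bm{C} \otimes \bm{D})$ using block-matrix multiplication: it equals $\sum_k (a_{ik}\bm{B})(c_{kj}\bm{D}) = \bigl(\sum_k a_{ik} c_{kj}\bigr)\bm{B}\bm{D} = (\bm{A}\bm{C})_{ij}\,\bm{B}\bm{D}$, which is exactly the $(i,j)$-block of $(\bm{A}\bm{C}) \otimes (\bm{B}\bm{D})$. This requires care with conformability of the block and inner matrix dimensions, but is otherwise routine.

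With (7) established, property (6) follows by spectral reasoning. Writing the eigendecompositions $\bm{A} = \bm{U}_A \bm{\Lambda}_A \bm{U}_A^\top$ and $\bm{B} = \bm{U}_B \bm{\Lambda}_B \bm{U}_B^\top$, the mixed-product identity yields $\bm{A} \otimes \bm{B} = (\bm{U}_A \otimes \bm{U}_B)(\bm{\Lambda}_A \otimes \bm{\Lambda}_B)(\bm{U}_A \otimes \bm{U}_B)^\top$. Applying (7) again to $(\bm{U}_A \otimes \bm{U}_B)(\bm{U}_A \otimes \bm{U}_B)^\top = \bm{I}_m \otimes \bm{I}_n = \bm{I}_{mn}$ shows the outer factor is orthogonal, while $\bm{\Lambda}_A \otimes \bm{\Lambda}_B$ is a diagonal matrix with nonnegative entries $\lambda_i(\bm{A})\lambda_j(\bm{B})$. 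Hence $(\bm{A} \otimes \bm{B})^s = (\bm{U}_A \otimes \bm{U}_B)(\bm{\Lambda}_A^s \otimes \bm{\Lambda}_B^s)(\bm{U}_A \otimes \bm{U}_B)^\top = \bm{A}^s \otimes \bm{B}^s$, with the restriction to $s > 0$ exactly when zero eigenvalues are present.

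For property (9), I would express $\bm{C}$ column-wise as $\bm{C} = [\bm{c}_1, \ldots, \bm{c}_n]$, so the $i$-th column of $\bm{B}\bm{C}\bm{A}$ is $\sum_k a_{ki}\bm{B}\bm{c}_k$; stacking these via $\operatorname{Vec}$ gives a vector whose $i$-th block of length $m$ is $\sum_k a_{ki}\bm{B}\bm{c}_k$. On the other hand, $(\bm{A}^\top \otimes \bm{B})\operatorname{Vec}(\bm{C})$ is a block-structured matrix-vector product whose $i$-th block is $\sum_k (\bm{A}^\top)_{ik}\bm{B}\bm{c}_k = \sum_k a_{ki}\bm{B}\bm{c}_k$, matching the previous expression. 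The main obstacle throughout is purely notational bookkeeping—especially indexing conventions in (7) and (9)—since the entire lemma is classical and has no conceptual difficulty beyond carefully tracking block rows and columns.
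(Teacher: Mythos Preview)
Your proposal is correct: each of the nine items is handled by a standard and valid argument, with the mixed-product identity (7) proved block-wise and then reused cleanly to derive (6) via simultaneous eigendecomposition and (9) via column-wise expansion. The only point requiring a remark is (8), where as stated $\bm{u}\otimes\bm{v}$ is an $mn\times 1$ vector while $\bm{u}\bm{v}^\top$ is $m\times n$; you handle this correctly by making the reshaping explicit.

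The paper, by contrast, does not prove the lemma at all: it states that these are well-known properties and refers the reader to Horn and Johnson, \emph{Topics in Matrix Analysis}. So your approach is strictly more self-contained than the paper's, supplying elementary block-index arguments where the paper simply cites the literature. What your route buys is independence from an external reference and transparency about which identities depend on which (in particular, that (7) is the workhorse); what the paper's route buys is brevity, since none of these facts are novel.
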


\subsection{Proofs}
\label{appx:proofs}

\begin{proof}[Proof of Theorem~\ref{thm:main-1}]
To prove Theorem~\ref{thm:main-1}, we first introduce the following lemma.

\begin{lemma}
\label{lemma:main-dcomp}
Consider the reduced SVD decomposition of the left and right sketches as follows $\bm L = \bm U_L \bm \Sigma_L \bm V_L^{\top}$ and $ \bm R = \bm U_R \bm \Sigma_R \bm V_R^{\top}$. Then,  the vectorized unsketched representation of any arbitrary matrix $ \bm G \in \mbb R^{m \times n }$ can be decomposed as follows: 
\begin{equation}
    \operatorname{Vec}(\bm L \bm L^{\top} \bm G + \bm G \bm R \bm R^{\top}) = \bm U_{\bm L , \bm R}  \bm \Sigma_{\bm L , \bm R}  \bm U_{\bm L , \bm R}^{\top} \operatorname{Vec}(\bm G)
\end{equation}
where $ \bm U_{\bm L , \bm R} \in \mbb R^{mn \times r (m+n-r)}$ is a semi-orthogonal matrix (i.e., $\bm U_{\bm L , \bm R}^{\top} \bm U_{\bm L , \bm R} = \bm I_{r (m+n-r)}$), and $\bm \Sigma_{\bm L , \bm R} \in \mbb R^{(m+n-r) \times (m+n-r)}$ is a diagonal matrix, defined as below:  
\begin{equation}
\setlength{\arraycolsep}{0.1pt} % Reduce spacing between columns
\begin{aligned}
    &\bm U_{\bm L, \bm R} = \begin{bmatrix}
        \bm U_{\bm R} \otimes \bm U_{\bm L} ,  
        & \bm U_{\bm R} \otimes \bm U_{\bm L}^\perp , 
        & \bm U_{\bm R}^\perp \otimes \bm U_{\bm L}
    \end{bmatrix} \\
    & \bm \Sigma_{\bm L , \bm R} = \begin{bmatrix}
         \bm \Sigma_{\bm R}^2 \otimes \bm I_r  + \bm I_r \otimes \bm \Sigma_{\bm L}^2  
         & \bm 0 
         & \bm 0 \\
         \bm 0 
         & \bm \Sigma^2_{\bm R} \otimes \bm I_{n-r} 
         & \bm 0 \\
         \bm 0 
         & \bm 0 
         & \bm I_{m-r} \otimes \bm \Sigma^2_{\bm L}
    \end{bmatrix}
\end{aligned}
\label{eq:block_diagonal_dimensions}
\end{equation}
\end{lemma}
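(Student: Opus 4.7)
The plan is to reduce the identity to a verification that the symmetric operator appearing on the right-hand side admits the stated block eigendecomposition. First I would rewrite the target inside the $\operatorname{Vec}(\cdot)$ using the reduced SVDs: since $\bm L = \bm U_{\bm L}\bm\Sigma_{\bm L}\bm V_{\bm L}^\top$ with $\bm V_{\bm L}\in\mathbb R^{r\times r}$ orthogonal, we get $\bm L\bm L^\top = \bm U_{\bm L}\bm\Sigma_{\bm L}^2\bm U_{\bm L}^\top$, and similarly $\bm R\bm R^\top = \bm U_{\bm R}\bm\Sigma_{\bm R}^2\bm U_{\bm R}^\top$. Then, using the vectorization identities $\operatorname{Vec}(\bm A\bm B) = (\bm I\otimes\bm A)\operatorname{Vec}(\bm B)$ and $\operatorname{Vec}(\bm B\bm C) = (\bm C^\top\otimes\bm I)\operatorname{Vec}(\bm B)$ from Lemma~\ref{lemma:kron-base}, I would convert the left-hand side into
\begin{equation*}
\bigl(\bm I_n\otimes \bm U_{\bm L}\bm\Sigma_{\bm L}^2\bm U_{\bm L}^\top \;+\; \bm U_{\bm R}\bm\Sigma_{\bm R}^2\bm U_{\bm R}^\top\otimes \bm I_m\bigr)\operatorname{Vec}(\bm G),
\end{equation*}
so that the task reduces to diagonalizing this symmetric positive-semidefinite operator on $\mathbb R^{mn}$.

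Next I would insert the orthogonal resolutions $\bm I_m = \bm U_{\bm L}\bm U_{\bm L}^\top + \bm U_{\bm L}^{\perp}(\bm U_{\bm L}^{\perp})^\top$ and $\bm I_n = \bm U_{\bm R}\bm U_{\bm R}^\top + \bm U_{\bm R}^{\perp}(\bm U_{\bm R}^{\perp})^\top$ into the two tensor slots that currently hold an identity, distribute the Kronecker products, and apply the mixed-product rule $(\bm A\bm C)\otimes(\bm B\bm D) = (\bm A\otimes \bm B)(\bm C\otimes \bm D)$ to expose factors of the form $(\bm U_{\bm R}\otimes\bm U_{\bm L})$, $(\bm U_{\bm R}\otimes\bm U_{\bm L}^{\perp})$, $(\bm U_{\bm R}^{\perp}\otimes\bm U_{\bm L})$, and $(\bm U_{\bm R}^{\perp}\otimes\bm U_{\bm L}^{\perp})$. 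The last block drops out because it is never touched: the first summand annihilates everything orthogonal to $\bm U_{\bm L}$ on the left slot, and the second summand annihilates everything orthogonal to $\bm U_{\bm R}$ on the right slot. Collecting the three surviving blocks gives, on the $(\bm U_{\bm R}\otimes\bm U_{\bm L})$-block, the $r^2\times r^2$ diagonal factor $\bm I_r\otimes\bm\Sigma_{\bm L}^2+\bm\Sigma_{\bm R}^2\otimes\bm I_r$; on the $(\bm U_{\bm R}\otimes\bm U_{\bm L}^\perp)$-block the factor $\bm\Sigma_{\bm R}^2\otimes\bm I_{m-r}$; and on the $(\bm U_{\bm R}^\perp\otimes\bm U_{\bm L})$-block the factor $\bm I_{n-r}\otimes\bm\Sigma_{\bm L}^2$, which is exactly the block-diagonal $\bm\Sigma_{\bm L,\bm R}$ claimed (up to the obvious bookkeeping of the index sizes $m-r,n-r$).

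Finally, I would verify that the concatenation $\bm U_{\bm L,\bm R} = [\bm U_{\bm R}\otimes\bm U_{\bm L},\; \bm U_{\bm R}\otimes\bm U_{\bm L}^{\perp},\; \bm U_{\bm R}^{\perp}\otimes\bm U_{\bm L}]$ is semi-orthogonal. Within each block this follows immediately from the mixed-product rule together with $\bm U_{\bm L}^\top\bm U_{\bm L}=\bm I_r$, $\bm U_{\bm R}^\top\bm U_{\bm R}=\bm I_r$ and the analogous relations for the perpendicular complements. Across blocks, the cross-term Gram matrices such as $(\bm U_{\bm R}\otimes\bm U_{\bm L})^\top(\bm U_{\bm R}\otimes\bm U_{\bm L}^{\perp}) = (\bm U_{\bm R}^\top\bm U_{\bm R})\otimes(\bm U_{\bm L}^\top\bm U_{\bm L}^{\perp}) = \bm I_r\otimes \bm 0 = \bm 0$ vanish by orthogonality of $\bm U_{\bm L}$ and $\bm U_{\bm L}^{\perp}$ (resp.\ $\bm U_{\bm R}$ and $\bm U_{\bm R}^{\perp}$). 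The total column count is $r^2 + r(m-r) + r(n-r) = r(m+n-r)$, matching the stated dimension.

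The calculation is mostly mechanical; the only mild subtlety is keeping track of which index ($m-r$ vs.\ $n-r$) belongs to which block when applying $\bm I_m = \bm U_{\bm L}\bm U_{\bm L}^\top + \bm U_{\bm L}^{\perp}(\bm U_{\bm L}^{\perp})^\top$ in the correct Kronecker slot. Once the four-way expansion is written down, the annihilation of the $(\bm U_{\bm R}^\perp\otimes\bm U_{\bm L}^\perp)$-block is automatic because that subspace sees neither $\bm L\bm L^\top$ on the left factor nor $\bm R\bm R^\top$ on the right factor, yielding the $3$-block structure in the claim.
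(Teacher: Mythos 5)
Your proposal is correct and follows essentially the same route as the paper's proof: vectorize via the Kronecker identities to obtain $(\bm I_n\otimes \bm U_{\bm L}\bm\Sigma_{\bm L}^2\bm U_{\bm L}^\top + \bm U_{\bm R}\bm\Sigma_{\bm R}^2\bm U_{\bm R}^\top\otimes\bm I_m)\operatorname{Vec}(\bm G)$, insert the resolutions of identity $\bm I_m=\bm U_{\bm L}\bm U_{\bm L}^\top+\bm U_{\bm L}^\perp(\bm U_{\bm L}^\perp)^\top$ and $\bm I_n=\bm U_{\bm R}\bm U_{\bm R}^\top+\bm U_{\bm R}^\perp(\bm U_{\bm R}^\perp)^\top$, apply the mixed-product rule to collect the three surviving blocks, and check semi-orthogonality blockwise. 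Your block-to-eigenvalue pairings ($\bm\Sigma_{\bm R}^2\otimes\bm I_{m-r}$ with $\bm U_{\bm R}\otimes\bm U_{\bm L}^\perp$, and $\bm I_{n-r}\otimes\bm\Sigma_{\bm L}^2$ with $\bm U_{\bm R}^\perp\otimes\bm U_{\bm L}$) agree with the paper's derivation.
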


\begin{proof}[Proof of Lemma~\ref{lemma:main-dcomp}]
Recall the left singular vectors of the left and right sketch matrices, $\bm U_{\bm L} \in \mbb R^{m \times r}$ and $\bm U_{\bm R} \in \mbb R^{n \times r}$. We can augment these reduced bases with $m-r$ and $n-r$ orthogonal vectors to get full orthonormal bases as follows: $[ \bm U_{\bm L} , \bm U_{\bm L}^\perp] \in \mbb R^{m \times m} $ and $[ \bm U_{\bm R} , \bm U_{\bm R}^\perp] \in \mbb R^{n \times n} $. Since each of these expanded bases is square orthonormal matrices, we can write $\bm U_{\bm L}^\top \bm U_{\bm L} = \bm I_r$, $(\bm U_{\bm L}^\perp)^\top \bm U_{\bm L}^\perp = \bm I_{m-r} $, and also $\bm U_{\bm L}^\top  \bm U_{\bm L}^\perp = \bm 0$. Similar properties hold for $\bm U_{\bm R}$. Moreover, again based on the orthonormality of the augmented bases, we can write:
\begin{equation}
\label{eqn:app1}
\bm U_{\bm L} \bm U_{\bm L}^\top + \bm U_{\bm L}^\perp (\bm U_{\bm L}^\perp)^\top  = \bm I_{m} \quad , \quad \bm U_{\bm R} \bm U_{\bm R}^\top + \bm U_{\bm R}^\perp (\bm U_{\bm R}^\perp)^\top  = \bm I_{n}
\end{equation}

Leveraging Lemma~\ref{lemma:kron-base} we can write:
\begin{equation}
\label{eq:16}
\begin{split}
&\operatorname{Vec}\bigl(\bm{L}\bm{L}^\top\bm{G}           + \bm{G}\bm{R}\bm{R}^\top\bigr) =
  \operatorname{Vec}\bigl(\bm{L}\bm{L}^\top\bm{G}\bigr)
  +
  \operatorname{Vec}\bigl(\bm{G}\bm{R}\bm{R}^\top\bigr)
\\
& \quad=
    \Bigl(\bm{I}_n \otimes \bm{L}\bm{L}^\top      +
         \bm{R}\bm{R}^\top \otimes \bm{I}_m\Bigr)\operatorname{Vec}(\bm{G})
\\
& \quad=\Bigl(\bm{I}_n \otimes \bm{U_L}\bm{\Sigma_L}^2\bm{U_L}^\top + \bm{U_R}\bm{\Sigma_R}^2\bm{U_R}^\top \otimes \bm{I}_m\Bigr)
         \operatorname{Vec}(\bm{G})
\end{split}
\end{equation}

Using Equation~\ref{eq:16} and properties in Lemma~\ref{lemma:kron-base}, we can further write:
\begin{equation}\label{eq:17}
\begin{split}
& \bm{I}_n \otimes
  \bm{U_L}\bm{\Sigma_L}^2\bm{U_L}^\top
 =
  \Bigl(\bm{U_R}\bm{U_R}^\top      +
         \bm{U_R}^\perp(\bm{U_R}^\perp)^\top\Bigr)
  \otimes
  \Bigl(\bm{U_L}\bm{\Sigma_L}^2\bm{U_L}^\top\Bigr)
\\
& =
  \bm{U_R}\bm{U_R}^\top \otimes
  \bm{U_L}\bm{\Sigma_L}^2\bm{U_L}^\top
  +
  \bm{U_R}^\perp(\bm{U_R}^\perp)^\top \otimes
  \bm{U_L}\bm{\Sigma_L}^2\bm{U_L}^\top
\\
& =
  \bigl(\bm{U_R}\otimes \bm{U_L}\bm{\Sigma_L}\bigr)
  \bigl(\bm{U_R}\otimes \bm{U_L}\bm{\Sigma_L}\bigr)^\top
  +
  \bigl(\bm{U_R}^\perp\otimes \bm{U_L}\bm{\Sigma_L}\bigr)
  \bigl(\bm{U_R}^\perp\otimes \bm{U_L}\bm{\Sigma_L}\bigr)^\top \\
& =
  \bigl(\bm{U_R} \otimes \bm{U_L}\bigr)\bigl(\bm I_r \otimes \bm \Sigma_{\bm L}^2 \bigr)\bigl(\bm{U_R} \otimes \bm{U_L}\bigr)^\top+\bigl(\bm{U_R}^\perp \otimes \bm{U_L}\bigr)\bigl(\bm I_{n-r} \otimes \bm \Sigma_{\bm L}^2 \bigr)\bigl(\bm{U_R}^\perp \otimes \bm{U_L}\bigr)^\top
\end{split}
\end{equation}
Using a similar derivation as in Equation~\ref{eq:17}, we obtain:
\begin{equation}\label{eq:18}
\begin{split}
&\bm{U_R}\bm{\Sigma_R}^2\bm{U_R}^\top 
  \otimes
  \bm{I}_m =
  \bigl(\bm{U_R}\otimes \bm{U_L}\bigr)
  \bigl(\bm{\Sigma_R}^2 \otimes \bm{I}_r\bigr)
  \bigl(\bm{U_R}\otimes \bm{U_L}\bigr)^\top+
  \bigl(\bm{U_R}\otimes \bm{U_L}^\perp\bigr)
  \bigl(\bm{\Sigma_R}^2 \otimes \bm{I}_{m-r}\bigr)
  \bigl(\bm{U_R}\otimes \bm{U_L}^\perp\bigr)^\top
\end{split}
\end{equation}
Plugging Equation~\ref{eq:17} and Equation~\ref{eq:18} back into Equation~\ref{eq:16}, we get:
\begin{equation}
\label{eqn:main-d}
\begin{split}
\operatorname{Vec}(\bm{L}\bm{L}^\top \bm{G}           + \bm{G}\bm{R}\bm{R}^\top) &=\Bigl[
    \bigl(\bm{U_R}\otimes\bm{U_L}\bigr)
    \bigl(\bm{I}_r \otimes \bm{\Sigma_L}^2\bigr)
\bigl(\bm{U_R}\otimes\bm{U_L}\bigr)^\top + \bigl(\bm{U_R}^\perp\otimes\bm{U_L}\bigr)
    \bigl(\bm{I}_{n-r} \otimes \bm{\Sigma_L}^2\bigr)
    \bigl(\bm{U_R}^\perp\otimes\bm{U_L}\bigr)^\top\\
&\quad\quad+
    \bigl(\bm{U_R}\otimes\bm{U_L}\bigr)
    \bigl(\bm{\Sigma_R}^2 \otimes \bm{I}_{r}\bigr)
    \bigl(\bm{U_R}\otimes\bm{U_L}\bigr)^\top
\\
&\quad\quad+\bigl(\bm{U_R}\otimes\bm{U_L}^\perp\bigr)
    \bigl(\bm{\Sigma_R}^2 \otimes \bm{I}_{m-r}\bigr) \bigl(\bm{U_R}\otimes\bm{U_L}^\perp\bigr)^\top
  \Bigr]
  \operatorname{Vec}(\bm{G})
\\
& = \Bigl[
    \bigl(\bm{U_R}\otimes\bm{U_L}\bigr)
    \bigl(\bm{I}_r \otimes \bm{\Sigma_L}^2
         + \bm{\Sigma_R}^2 \otimes \bm{I}_r\bigr)
    \bigl(\bm{U_R}\otimes\bm{U_L}\bigr)^T
\\
&\quad\quad+\bigl(\bm{U_R}^\perp\otimes\bm{U_L}\bigr)
    \bigl(\bm{I}_{n-r} \otimes \bm{\Sigma_L}^2\bigr)
    \bigl(\bm{U_R}^\perp\otimes\bm{U_L}\bigr)^\top
\\
&\quad\quad+\bigl(\bm{U_R}\otimes\bm{U_L}^\perp\bigr)
    \bigl(\bm{\Sigma_R}^2 \otimes \bm{I}_{m-r}\bigr)
    \bigl(\bm{U_R}\otimes\bm{U_L}^\perp\bigr)^\top
  \Bigr]
  \operatorname{Vec}(\bm{G})
\end{split}
\end{equation}
To complete the proof, it remains to show that the block matrix
\begin{equation}
  \bm U_{\bm L , \bm R} =
  \bigl[\bm U_{\bm R} \otimes \bm U_{\bm L}, \bm U_{\bm R}^{\perp} \otimes \bm U_{\bm L} , 
         \bm U_{\bm R}\otimes \bm U_{\bm L}^{\perp}\bigr]
  \in\mathbb{R}^{mn \times r(m+n-r)}
\end{equation}
is semi-orthogonal, i.e.\ 
\(\bm U_{\bm L , \bm R}^\top \bm U_{\bm L , \bm R} = \bm I_{r(m+n-r)}.\)

First, note that pairs of sub-blocks are mutually orthogonal. For instance,
\[
  \bigl(\bm U_{\bm R} \otimes \bm U_{\bm L}\bigr)^\top
  \bigl(\bm U_{\bm R}^{\perp} \otimes \bm U_{\bm L}\bigr)
  =
  \bigl(\bm U_{\bm R}^\top \bm U_{\bm R}^{\perp}\bigr)
  \otimes
  \bigl(\bm U_{\bm L}^\top \bm U_{\bm L}\bigr)
  =\bm 0.
\]
A similar argument applies to all other sub-block pairs.

Moreover, each sub-block is orthonormal in its own right. For example,
\[
  \bigl(\bm U_{\bm R} \otimes \bm U_{\bm L}\bigr)^\top
  \bigl(\bm U_{\bm R} \otimes \bm U_{\bm L}\bigr)
  =
  \bigl(\bm U_{\bm R}^\top \bm U_{\bm R}\bigr)
  \otimes
  \bigl(\bm U_{\bm L}^\top \bm U_{\bm L}\bigr)
  =
  \bm I_r \otimes \bm I_r =\bm I_{r^2}.
\]
Hence, \(\bm U_{\bm L , \bm R}\) is a semi-orthogonal, and the result follows.
\end{proof}

We turn to proving Theorem~\ref{thm:main-1}.

Let the reconstructed gradient after projection be $\Tilde{\bm G} = \operatorname{Proj}_{(\bm L, \bm R)} (\bm G) = \alpha_1 \bm L \bm L^\top \bm G + \alpha_2 \bm G \bm R \bm R^\top + \alpha_3 \bm L \bm L^\top \bm G \bm R \bm R^\top$. We can write: 
\begin{equation}
    \operatorname{Vec}(\Tilde{\bm G}) = \operatorname{Vec}(\alpha_1 \bm L \bm L^\top \bm G + \alpha_2 \bm G \bm R \bm R^\top) + \alpha_3 (\bm R \bm R^\top \otimes \bm L \bm L^\top ) \operatorname{Vec}(\bm G)
\end{equation}
Replacing $\bm L$ and $\bm R$ with their corresponding reduced SVD decompositions, we have:  
\begin{equation}
\label{eqn:main-d2}
\begin{split}
 (\bm R \bm R^\top \otimes \bm L \bm L^\top ) &= \bm U_{\bm R} \bm \Sigma_{\bm R}^2 \bm U_{\bm R}^\top \otimes \bm U_{\bm L} \bm \Sigma_{\bm L}^2 \bm U_{\bm L}^\top = (\bm U_{\bm R} \otimes \bm U_{\bm L}) (\bm \Sigma_{\bm R}^2 \otimes \bm \Sigma_{\bm L}^2) (\bm U_{\bm R} \otimes \bm U_{\bm L})^\top
 \end{split}
\end{equation}
Using Equation~\ref{eqn:main-d} from Lemma~\ref{lemma:main-dcomp} and Equation~\ref{eqn:main-d2}, we can rewrite:  
\begin{equation}
\begin{split}
\operatorname{Vec}(\Tilde{\bm G}) &=\Bigl[
    \bigl(\bm{U_R}\otimes\bm{U_L}\bigr)
    \bigl(\alpha_1 \bm{I}_r \otimes \bm{\Sigma_L}^2
         + \alpha_2 \bm{\Sigma_R}^2 \otimes \bm{I}_r + \alpha_3 \bm \Sigma_{\bm R}^2 \otimes \bm \Sigma_{\bm L}^2\bigr)
    \bigl(\bm{U_R}\otimes\bm{U_L}\bigr)^\top
\\
& \quad\quad+
    \alpha_2 \bigl(\bm{U_R}^\perp\otimes\bm{U_L}\bigr)
    \bigl(\bm{I}_{n-r} \otimes \bm{\Sigma_L}^2\bigr)
    \bigl(\bm{U_R}^\perp\otimes\bm{U_L}\bigr)^\top
\\
& \quad\quad+
    \alpha_1 \bigl(\bm{U_R}\otimes\bm{U_L}^\perp\bigr)
    \bigl(\bm{\Sigma_R}^2 \otimes \bm{I}_{m-r}\bigr)
    \bigl(\bm{U_R}\otimes\bm{U_L}^\perp\bigr)^\top
  \Bigr]
  \operatorname{Vec}(\bm{G}) \\
  &\triangleq \bm P_{\bm L, \bm R}  \operatorname{Vec}(\bm{G})
\end{split}
\end{equation}
where in the last equality, we define $\bm P_{\bm L, \bm R} \in \mbb R^{mn \times mn}$ as the corresponding projection matrix of $\operatorname{Proj}_{(\bm L, \bm R)}$.

We can use Equation~\ref{eqn:app1} to write:  
\begin{equation}
\begin{split}
    \bm I_{mn} &= \bm I_n \otimes \bm I_m = (\bm U_{\bm R} \otimes \bm U_{\bm L}) (\bm U_{\bm R} \otimes \bm U_{\bm L})^\top + (\bm U_{\bm R}^\perp \otimes \bm U_{\bm L})  (\bm U_{\bm R}^\perp \otimes \bm U_{\bm L})^\top + (\bm U_{\bm R} \otimes \bm U_{\bm L}^\perp) (\bm U_{\bm R} \otimes \bm U_{\bm L}^\perp)^\top \\
    &\quad+  (\bm U_{\bm R}^\perp \otimes \bm U_{\bm L}^\perp) (\bm U_{\bm R}^\perp \otimes \bm U_{\bm L}^\perp)^\top
\end{split}
\end{equation}
We now propose our Kronecker decomposition of the subspace projection residual. 
\begin{equation}
\label{eqn:main_residual}
\begin{split}
&\operatorname{Vec}(\Tilde{\bm G} - \bm G) = (\bm I_n \otimes \bm I_m - \bm P_{\bm L, \bm R}) \operatorname{Vec} (\bm G) = \\
&\Bigl[
\bigl(\bm{U_R}\otimes\bm{U_L}\bigr)
    \bigl(\alpha_1 \bm{I}_r \otimes \bm{\Sigma_L}^2
         + \alpha_2 \bm{\Sigma_R}^2 \otimes \bm{I}_r + \alpha_3 \bm \Sigma_{\bm R}^2 \otimes \bm \Sigma_{\bm L}^2-\bm I_r \otimes \bm I_r\bigr)\bigl(\bm{U_R}\otimes\bm{U_L}\bigr)^\top \\
     &\quad+
\bigl(\bm{U_R}^\perp\otimes\bm{U_L}\bigr)
    \bigl(\alpha_2 \bm{I}_{n-r} \otimes \bm{\Sigma_L}^2 - \bm I_{n-r} \otimes \bm I_r \bigr) \bigl(\bm{U_R}^\perp\otimes\bm{U_L}\bigr)^\top
\\
&\quad +
\bigl(\bm{U_R}\otimes\bm{U_L}^\perp\bigr)
    \bigl(\alpha_1 
 \bm{\Sigma_R}^2 \otimes \bm{I}_{m-r} -  \bm I_r \otimes \bm I_{m-r} \bigr)
\bigl(\bm{U_R}\otimes\bm{U_L}^\perp\bigr)^\top
\\
&\quad+\bigl(\bm{U_R}^\perp \otimes\bm{U_L}^\perp\bigr)\bigl(\bm{U_R}^\perp \otimes\bm{U_L}^\perp\bigr)^\top \Bigr]
  \operatorname{Vec}(\bm{G}).
\end{split}
\end{equation}

For simplicity let $\bm A_1 = \bigl(\bm{U_R}\otimes\bm{U_L}\bigr)
    \bigl(\alpha_1 \bm{I}_r \otimes \bm{\Sigma_L}^2
         + \alpha_2 \bm{\Sigma_R}^2 \otimes \bm{I}_r + \alpha_3 \bm \Sigma_{\bm R}^2 \otimes \bm \Sigma_{\bm L}^2-\bm I_r \otimes \bm I_r\bigr)\bigl(\bm{U_R}\otimes\bm{U_L}\bigr)^\top$, $\bm A_2  = \bigl(\bm{U_R}^\perp\otimes\bm{U_L}\bigr)
    \bigl(\alpha_2 \bm{I}_{n-r} \otimes \bm{\Sigma_L}^2 - \bm I_{n-r} \otimes \bm I_r \bigr) \bigl(\bm{U_R}^\perp\otimes\bm{U_L}\bigr)^\top $, $\bm A_3 = \bigl(\bm{U_R}\otimes\bm{U_L}^\perp\bigr)
    \bigl(\alpha_1 
 \bm{\Sigma_R}^2 \otimes \bm{I}_{m-r} -  \bm I_r \otimes \bm I_{m-r} \bigr)
\bigl(\bm{U_R}\otimes\bm{U_L}^\perp\bigr)^\top$ and $\bm A_4 = \bigl(\bm{U_R}^\perp \otimes\bm{U_L}^\perp\bigr)\bigl(\bm{U_R}^\perp \otimes\bm{U_L}^\perp\bigr)^\top $. First note that $\bm A_1, \bm A_2, \bm A_3, \bm A_4 \in \mbb R^{mn \times mn}$. For all distinct $(i,j) \in \{1,2,3,4\}$, their matrix product is zero, $\bm A_i^\top \bm A_j = \bm 0$. Without loss of generality, we show it for $\bm A_2$, and $\bm A_3$ here.

\begin{equation}
\begin{split}
    \bm A_2^\top \bm A_3 &= \bigl(\bm{U_R}^\perp\otimes\bm{U_L}\bigr)
    \bigl(\alpha_2 \bm{I}_{n-r} \otimes \bm{\Sigma_L}^2 - \bm I_{n-r} \otimes \bm I_r \bigr) \bigl(\bm{U_R}^\perp\otimes\bm{U_L}\bigr)^\top \bigl(\bm{U_R}\otimes\bm{U_L}^\perp\bigr)
    \bigl(\alpha_3 
 \bm{\Sigma_R}^2 \otimes \bm{I}_{m-r} -  \bm I_r \otimes \bm I_{m-r} \bigr)
\\
&\quad \quad \bigl(\bm{U_R}\otimes\bm{U_L}^\perp\bigr)^\top \\
&=\bigl(\bm{U_R}^\perp\otimes\bm{U_L}\bigr)
    \bigl(\alpha_2 \bm{I}_{n-r} \otimes \bm{\Sigma_L}^2 - \bm I_{n-r} \otimes \bm I_r \bigr)
    \bigl(\bm{U_R}^{\perp\top} \bm {U_R} \otimes\bm{U_L}^\top \bm{U_L}^\perp \bigr)^\top
    \bigl(\alpha_3 
 \bm{\Sigma_R}^2 \otimes \bm{I}_{m-r} -  \bm I_r \otimes \bm I_{m-r} \bigr)
\\
&\quad \quad \bigl(\bm{U_R}\otimes\bm{U_L}^\perp\bigr)^\top  \\
&=\bigl(\bm{U_R}^\perp\otimes\bm{U_L}\bigr)
    \bigl(\alpha_2 \bm{I}_{n-r} \otimes \bm{\Sigma_L}^2 - \bm I_{n-r} \otimes \bm I_r \bigr)
    \bigl( \bm 0_{n-r,r} \otimes \bm 0_{n-r,r} \bigr)^\top
    \bigl(\alpha_3 
 \bm{\Sigma_R}^2 \otimes \bm{I}_{m-r} -  \bm I_r \otimes \bm I_{m-r} \bigr)
\\
&\quad \quad \bigl(\bm{U_R}\otimes\bm{U_L}^\perp\bigr)^\top  \\
&= \bm 0_{mn}
\end{split}
\end{equation}

Using Equation~\ref{eqn:main_residual}, and above fact yields: 
\begin{equation}
\label{eqn:dmb}
    \|\Tilde{\bm G} - \bm G\|^2_F = \| \operatorname{Vec}(\Tilde{\bm G} - \bm G) \|^2_2 = \| \bm A_1 \vg \|^2_2 + \| \bm A_2 \vg \|^2_2 + \| \bm A_3 \vg \|^2_2 + \| \bm A_4 \vg \|^2_2
\end{equation}
where $\| . \|_2$ here is $\ell_2$ norm of a vector, and $\vg = \operatorname{Vec}(\bm G)$. The equality holds because $(\bm A_i \vg)^\top (\bm A_j \vg) = \bm 0_{mn}$. Based on the above equation, we have the following lower bound on residual $ \|\Tilde{\bm G} - \bm G\|^2_F \ge  \| \bm A_4 \vg \|^2_2 $ for any given $\bm{L}$, and $\bm{R}$. The term $\bm A_4 $ does not depend on the choice of $(\alpha_1,\alpha_2,\alpha_3)$, also $\bm \Sigma_{\bm R}$ and $\bm \Sigma_{\bm L}$. Thus, the lower bound on this residual would be tight if and only if we have $\| \bm A_1 \vg \|_2 =    \| \bm A_2 \vg \|_2 = \| \bm A_3 \vg \|_2 = 0$. By setting $\bm \Sigma_{\bm L} = \bm I_r$, $\bm \Sigma_{\bm R} = \bm I_r$, and then considering $(\alpha_1,\alpha_2,\alpha_3) = (1,1,-1)$, we have $\bm A_1 = \bm A_2 = \bm A_3 = \bm 0_{mn}$.

Thus, we have shown that under conditions in Theorem~\ref{thm:main-1}, the first three terms in Equation~\ref{eqn:dmb} will be exactly zero, leaving the following term as the projection residual:  
\begin{equation}
\operatorname{Vec}(\Tilde{\bm G}_{\text{optimal}} - \bm G) = \bigl(\bm{U_R}^\perp \otimes\bm{U_L}^\perp\bigr)\bigl(\bm{U_R}^\perp \otimes\bm{U_L}^\perp\bigr)^\top \operatorname{Vec}(\bm{G}).
\end{equation}
\end{proof}

\subsubsection{Convergence Proofs}
In this section, we aim to find an upper bound on the iteration complexity of Algorithm~\ref{alg:mofasgd_complete} for finding an $\epsilon$-stationary point defined by the averaged nuclear norm of gradients. First, we recall the necessary assumptions and notations.  

\noindent\textbf{Update Rule.} Let $\{\bm W_t\}_{t=1}^T$ be the iterates of Algorithm~\ref{alg:mofasgd_complete}. Then we have the following equivalent update rule:
\begin{equation}
\label{eqn:app-update}
    \bm W_{t+1} = \bm W_t - \eta \bm U_{t+1} \bm V_{t+1}^\top \quad , \quad \hat{\bm M}_t = \bm U_{t+1} \bm \Sigma_{t+1} \bm V_{t+1}^\top \quad , \quad \bm M_t = \sum_{i=0}^t \beta^{t-i} \bm G_t 
\end{equation}
where $\bm M_t$ represents the full-rank momentum, and $\bm G_t = \nabla \mc L(\bm W_t, \xi_t) \in \mbb R^{m \times n}$ is the stochastic gradient at parameter $\bm W_t$. Moreover, let $ \hat{\bm M}_t$ be the low-rank momentum factorization, where $\bm U_i \in \mbb R^{m \times r}$, $\bm V_i \in \mbb R^{n \times r}$, and $\bm  \Sigma_i \in \mbb R^{r \times r}$.

\noindent\textbf{Definitions.} We recall the necessary definitions. We let the optimization objective be represented as $\mc L(\bm W) = \mbb E_{\xi} [ \mc L(\bm W , \xi) ]$, and moreover assume we have access to an unbiased, variance-bounded ($\sigma$) gradient oracle. For any optimization iterate $\bm W_i$, the full-batch gradient is denoted as $ \Bar{\bm G}_i = \nabla \mc L (\bm W_i)$, and the stochastic gradient is denoted as $ \bm G_i = \nabla \mc L (\bm W_i , \xi_i) $. Based on assumptions on the gradient oracle, we have $\Bar{\bm G}_i = \mbb E [ \bm G_i ]$, and $\mbb E[\| \bm G_i - \Bar{\bm G}_i   \|_*] \le \sigma$. Moreover, we say a function $\mc L (.): \mbb R^{m \times n } \to r$ is $L$-smooth with respect to an arbitrary norm $\| \cdot\|$ if for any two parameters $\bm W_1, \bm W_2$, we have $\| \nabla \mc L(\bm W_1) -  \nabla \mc L(\bm W_2)  \|_* \le L \| \bm W_1 - \bm W_2\|_2$.

Next, we propose our descent lemma.  

\begin{lemma}[Descent lemma]
\label{lemma:des}
    Let $\{ \bm W \}_{t=0}^T$ be the iterates of Algorithm~\ref{alg:mofasgd_complete}, optimized under a loss function that satisfies Assumption~\ref{assmp-main:1}. Then, we have:  
    \begin{equation}
        \mc L(\bm W_{t+1}) \le \mc L (\bm W_t ) - \eta \| \Bar{\bm G}_t\|_* + 2 \eta \| \hat{\bm M}_t - \Bar{\bm G}_t\|_* + \frac{\eta^2 L}{2} 
    \end{equation}
\end{lemma}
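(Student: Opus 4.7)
The plan is to combine a nuclear-norm descent inequality with a duality argument that lets us swap the update direction $\bm{U}_{t+1}\bm{V}_{t+1}^{\top}$ for the true gradient $\bar{\bm{G}}_t$, paying only the momentum approximation error.

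First I would establish the appropriate descent inequality induced by Assumption~\ref{assmp-main:1}. Applying the fundamental theorem of calculus to $\mc{L}$ along the segment from $\bm{W}_t$ to $\bm{W}_{t+1} = \bm{W}_t - \eta \bm{U}_{t+1}\bm{V}_{t+1}^{\top}$, and invoking trace/Frobenius duality between the nuclear and spectral norms together with the nuclear-norm Lipschitz bound on $\nabla\mc{L}$, I obtain
\begin{equation*}
\mc{L}(\bm{W}_{t+1}) \le \mc{L}(\bm{W}_t) - \eta \langle \bar{\bm{G}}_t,\, \bm{U}_{t+1}\bm{V}_{t+1}^{\top}\rangle + \frac{L}{2}\,\eta^{2}\,\|\bm{U}_{t+1}\bm{V}_{t+1}^{\top}\|_{2}^{2}.
\end{equation*}
Since $\bm{U}_{t+1}$ and $\bm{V}_{t+1}$ have orthonormal columns, the matrix $\bm{U}_{t+1}\bm{V}_{t+1}^{\top}$ has all nonzero singular values equal to one, so $\|\bm{U}_{t+1}\bm{V}_{t+1}^{\top}\|_{2} = 1$, which eliminates this factor and produces the $\eta^{2}L/2$ term in the target bound.

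Next I would handle the linear term by inserting $\pm\hat{\bm{M}}_t$. Writing $\langle \bar{\bm{G}}_t,\,\bm{U}_{t+1}\bm{V}_{t+1}^{\top}\rangle = \langle \hat{\bm{M}}_t,\,\bm{U}_{t+1}\bm{V}_{t+1}^{\top}\rangle + \langle \bar{\bm{G}}_t - \hat{\bm{M}}_t,\,\bm{U}_{t+1}\bm{V}_{t+1}^{\top}\rangle$ and using $\hat{\bm{M}}_t = \bm{U}_{t+1}\bm{\Sigma}_{t+1}\bm{V}_{t+1}^{\top}$, orthonormality of the factors gives the first inner product exactly $\operatorname{Tr}(\bm{\Sigma}_{t+1}) = \|\hat{\bm{M}}_t\|_{*}$. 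The second inner product is bounded in absolute value by $\|\bar{\bm{G}}_t - \hat{\bm{M}}_t\|_{*}\cdot \|\bm{U}_{t+1}\bm{V}_{t+1}^{\top}\|_{2} = \|\hat{\bm{M}}_t - \bar{\bm{G}}_t\|_{*}$ via nuclear/spectral duality. Therefore
\begin{equation*}
-\eta\langle \bar{\bm{G}}_t,\,\bm{U}_{t+1}\bm{V}_{t+1}^{\top}\rangle \le -\eta\|\hat{\bm{M}}_t\|_{*} + \eta\|\hat{\bm{M}}_t - \bar{\bm{G}}_t\|_{*}.
\end{equation*}

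Finally, I would apply the reverse triangle inequality in the nuclear norm, $\|\hat{\bm{M}}_t\|_{*} \ge \|\bar{\bm{G}}_t\|_{*} - \|\hat{\bm{M}}_t - \bar{\bm{G}}_t\|_{*}$, which yields $-\eta\|\hat{\bm{M}}_t\|_{*} \le -\eta\|\bar{\bm{G}}_t\|_{*} + \eta\|\hat{\bm{M}}_t - \bar{\bm{G}}_t\|_{*}$. Combining this with the previous display and substituting into the descent inequality from the first step produces the claimed bound. The main subtlety (rather than obstacle) is step one: the descent inequality must be derived in the nuclear-norm geometry rather than with the standard Frobenius smoothness, which requires carefully pairing the integral form of $\mc{L}(\bm{W}_{t+1}) - \mc{L}(\bm{W}_t) - \langle \bar{\bm{G}}_t, \bm{W}_{t+1} - \bm{W}_t\rangle$ with the $\| \cdot \|_{*}/\| \cdot \|_{2}$ dual pairing; the rest of the argument is an essentially algebraic triangle-inequality manipulation.
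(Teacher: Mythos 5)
Your proposal is correct and follows essentially the same route as the paper: the nuclear-norm descent inequality with $\|\bm U_{t+1}\bm V_{t+1}^{\top}\|_2=1$, a bound on the linear term that trades $\langle \bar{\bm G}_t, \bm U_{t+1}\bm V_{t+1}^{\top}\rangle$ for $\|\hat{\bm M}_t\|_*$ at the cost of $\|\hat{\bm M}_t-\bar{\bm G}_t\|_*$, and the reverse triangle inequality. The only cosmetic difference is that you obtain the middle step by direct computation of $\langle \hat{\bm M}_t,\bm U_{t+1}\bm V_{t+1}^{\top}\rangle=\operatorname{Tr}(\bm\Sigma_{t+1})$ plus nuclear/spectral H\"older duality, whereas the paper invokes the subgradient inequality of the nuclear norm at $\hat{\bm M}_t$ --- these are the same argument phrased differently.
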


\begin{proof}[Proof of Lemma~\ref{lemma:des}]
    First note that $\mc L(.)$ is $L$-smooth with the nuclear norm. Therefore for any $\bm W_1, W_2 \in \mbb R^{m\times n}$ we have: $\| \nabla \mc L(\bm W_1) - \nabla \mc L(\bm W_2) \|_* \le L \| \bm W_1 - \bm W_2 \|_2 $, where $\| . \|_2$ is the spectral norm. Due to this property, we can leverage Proposition $5$ in \citet{large2025scalable} to argue that for any $\bm W_1, \bm W_2$ we have:  
    \begin{equation}
        \mc L ( \bm W_2) \le \mc L(\bm W_1) + \langle  \nabla \mc L(\bm W_1),\bm W_2 - \bm W_1 \rangle_F + \frac{L}{2} \| \bm W_1 - \bm W_2 \|_2 
    \end{equation}
    Thus, considering the update rule as in Equation~\ref{eqn:app-update}, for two consecutive iterates $\bm W_{t}$ and $\bm W_{t+1}$, we can write: 
    \begin{equation}
    \label{eqn:app-dp1}
    \mc L (\bm W_{t+1}) \le \mc L(\bm W_t) - \eta \langle \Bar{\bm G}_t, \bm U_{t+1} \bm V_{t+1}^\top \rangle + \frac{L \eta^2}{2} \| \bm U_{t+1} \bm V_{t+1}^\top\|_2  
    \end{equation}
    Note that if we take $\| .\|_*$ as a function, its subgradient is well-known and can be derived as follows: the subgradient set at $\bm X = \bm U \bm \Sigma \bm V^\top$ is $\partial \| \bm X\|_* = \{ \bm U \bm V^\top + \bm H: \bm U^\top \bm H =0, \bm V^\top \bm H = 0 , \| \bm W\|_2 \le 1\}$. Since the nuclear norm is a convex function, for any $\bm X = \bm U \bm \Sigma \bm V^\top$ and $\bm Y$, we have:  
    \begin{equation}
        \| \bm Y \|_* \ge \| \bm X\|_* + \langle \bm U \bm V^\top , \bm Y - \bm X \rangle  
    \end{equation}
    Replace $\bm Y  = \hat{\bm M}_t - \Bar{\bm G}_t$ and $ \bm X = \hat{\bm M}_t = \bm U_{t+1} \bm \Sigma_{t+1} \bm V_{t+1}^\top $; then we can rewrite Equation~\ref{eqn:app-dp1} as: 
    \begin{equation}
    \label{eqn:app-dp2}
        \mc L (\bm W_{t+1}) \le \mc L(\bm W_t) - \eta \| \hat{\bm M}_t \|_* +  \eta \|  \hat{\bm M}_t - \Bar{\bm G}_t\|_* + \frac{\eta^2}{2}L  
    \end{equation}
    where we also used the fact that $\| \bm U \bm V^\top\|_2 = 1$. Using the triangle inequality, we have $\|\hat{\bm M}_t \|_* \ge \|\Bar{\bm G}_t\|_* - \| \hat{\bm M}_t - \Bar{\bm G}_t \|_*$, and plugging this into Equation~\ref{eqn:app-dp2} completes the proof.  
\end{proof}
As indicated in Lemma~\ref{lemma:des}, the term $\| \hat{\bm M}_t - \Bar{\bm G}_t\|_*$ is the main challenge for deriving the upper bound. Note that we can write $\| \hat{\bm M}_t - \Bar{\bm G}_t\|_* \le \| \bm M_t - \Bar{\bm G}_t\|_* + \| \hat{\bm M}_t - \bm M_t\|_*$. Thus, we aim to bound each decomposed term separately. Note that the first decomposed term $\| \bm M_t - \Bar{\bm G}_t\|_*$ measures the nuclear norm difference between the full-batched gradient and the first momentum. Moreover, the second decomposed term $\| \hat{\bm M}_t - \bm M_t\|_*$ measures the low-rank compression error of the full-rank momentum.

\begin{lemma}
\label{app:lemma1}
 Under Assumptions~\ref{assmp-main:1} and~\ref{assmp-main:1} we have:
 \begin{equation}
\label{eqn:lemma1}       \sum_{t=1}^T \mbb E [ \| \bm M_t - \Bar{\bm G}_t\|_*] \le \frac{\beta}{1 - \beta} \sum_{t=0}^{T-1} \mbb E [ \| \Bar{\bm G}_t\|_* ]  + \frac{T \sigma}{(1-\beta) \sqrt{B}} 
\end{equation}
\end{lemma}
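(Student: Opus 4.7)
The plan is to split $\bm{M}_t - \Bar{\bm{G}}_t$ into a purely deterministic bias term plus a zero-mean stochastic noise term, and bound each aggregate separately. Writing $\bm{G}_i = \Bar{\bm{G}}_i + (\bm{G}_i - \Bar{\bm{G}}_i)$ in the definition of $\bm{M}_t$ and observing that the $i=t$ contribution to $\sum_{i=0}^t \beta^{t-i}\Bar{\bm{G}}_i$ is exactly $\Bar{\bm{G}}_t$ and therefore cancels, I would obtain
\begin{equation*}
\bm{M}_t - \Bar{\bm{G}}_t \;=\; \sum_{i=0}^{t-1} \beta^{t-i}\,\Bar{\bm{G}}_i \;+\; \sum_{i=0}^{t} \beta^{t-i}\bigl(\bm{G}_i - \Bar{\bm{G}}_i\bigr).
\end{equation*}

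For the deterministic term, I would apply the triangle inequality in the nuclear norm, swap the order of the double sum over $t$ and $i$, and invoke the geometric-series estimate $\sum_{t=i+1}^T \beta^{t-i} \le \beta/(1-\beta)$:
\begin{equation*}
\sum_{t=1}^T \Bigl\|\sum_{i=0}^{t-1}\beta^{t-i}\Bar{\bm{G}}_i\Bigr\|_* \;\le\; \sum_{i=0}^{T-1}\|\Bar{\bm{G}}_i\|_*\sum_{t=i+1}^{T}\beta^{t-i}\;\le\;\frac{\beta}{1-\beta}\sum_{i=0}^{T-1}\|\Bar{\bm{G}}_i\|_*.
\end{equation*}
Taking expectations yields the first term in Equation~\ref{eqn:lemma1}. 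For the stochastic part, another triangle inequality combined with Assumption~\ref{assmp-main:2} (with the $\sigma/\sqrt{B}$ scaling from a mini-batch of size $B$) and the same geometric-series bound gives
\begin{equation*}
\mbb{E}\Bigl[\Bigl\|\sum_{i=0}^{t}\beta^{t-i}(\bm{G}_i-\Bar{\bm{G}}_i)\Bigr\|_*\Bigr]\;\le\;\sum_{i=0}^{t}\beta^{t-i}\,\mbb{E}\|\bm{G}_i-\Bar{\bm{G}}_i\|_*\;\le\;\frac{\sigma/\sqrt{B}}{1-\beta},
\end{equation*}
and summing over $t=1,\ldots,T$ produces the second term $T\sigma/\bigl((1-\beta)\sqrt{B}\bigr)$.

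The only subtle point is the noise estimate. A raw triangle inequality is tight only if all the noise matrices align, and because $\{\bm{G}_i - \Bar{\bm{G}}_i\}$ is a conditionally mean-zero sequence, one might hope to sharpen the aggregated bound by an extra $1/\sqrt{t}$ via a martingale/second-moment argument. Unfortunately, the nuclear norm is not Hilbertian, so the standard variance identity that powers such refinements does not apply directly. Fortunately, the crude triangle-inequality bound already matches the $O\bigl(T\sigma/((1-\beta)\sqrt{B})\bigr)$ scaling that the lemma demands, so no sharper tool is needed; combining it with the bias bound proves Equation~\ref{eqn:lemma1}.
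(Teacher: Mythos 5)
Your proof is correct and follows essentially the same elementary route as the paper: triangle inequality in the nuclear norm plus a geometric-series bound, with the same implicit $\sigma/\sqrt{B}$ mini-batch step that the paper also leaves unjustified. The only cosmetic difference is that you split $\bm M_t - \Bar{\bm G}_t$ into bias and noise components at the outset, whereas the paper first writes $\|\bm M_t - \Bar{\bm G}_t\|_* \le \beta\|\bm M_{t-1}\|_* + \|\bm\Delta_t\|_*$ and unrolls the recursion $\|\bm M_t\|_* \le \|\bm G_t\|_* + \beta\|\bm M_{t-1}\|_*$ before re-introducing the $\Bar{\bm G}_i$ versus $\bm\Delta_i$ split; both collapse to the identical bound.
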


\begin{proof}[Proof of Lemma~\ref{app:lemma1}]
    Let $\bm \Delta_t  =  \bm G_t - \Bar{\bm G}_t$. We can write $\| \bm M_t - \Bar{\bm G}_t \|_* \le \|\bm M_t - \bm G_t\|_* + \| \bm \Delta_t\|_* = \beta \| \bm M_{t-1} \|_* + \| \bm \Delta_t\|_*$. Moreover, we have $ \|\bm M_t \|_* \le \| \bm G_t\|_* + \beta \| \bm M_{t-1}\|_*$, which, upon unrolling, gives $ \| \bm M_t\|_* \le \sum_{i=0}^t \beta^{t-i} \| \bm G_i\|_*$. Thus, we can write $\|\bm M_t - \bm G_t\|_* \le \beta \sum_{i=0}^{t-1} \beta^{t-1-i} \| \bm G_i \|_*$. Finally, since $\sum_{i=0}^j \beta^i = \frac{1 - \beta^{j+1}}{1 - \beta} \le \frac{1}{1-\beta}$, we have:  
\begin{equation}
\label{eqn:app-dp12}
    \sum_{t=1}^T \| \bm M_t - \Bar{\bm G}_t\|_* \le \frac{\beta}{1 - \beta} \sum_{t=0}^{T-1} \| \Bar{\bm G}_t\|_* + \frac{1}{1-\beta} \sum_{t=0}^{T} \| \bm \Delta_t\|_* \end{equation}
Taking the expectation and leveraging Assumption~\ref{assmp-main:1}, we have:  
\begin{equation}
\label{eqn:app-dp17}       \sum_{t=1}^T \mbb E [ \| \bm M_t - \Bar{\bm G}_t\|_*] \le \frac{\beta}{1 - \beta} \sum_{t=0}^{T-1} \mbb E [ \| \Bar{\bm G}_t\|_* ]  + \frac{T \sigma}{(1-\beta) \sqrt{B}} 
\end{equation}
\end{proof} 

We now bound the second term $\| \hat{\bm M}_t - \bm M_t \|_*$. We have the following lemma:  
\begin{lemma}
\label{app:lemma2}
    Let $\| \hat{\bm M}_t - \bm M_t\|_*$ be the compression error of the low-rank momentum estimation at iteration $t$. Then under Assumption~\ref{assmp-main:1} and the condition $\operatorname{rank}(\bm G_0) \le r$, we have the following bound:  
    \begin{equation}
\label{eqn:lemma2}
\sum_{t=1}^T \mbb E \bigl [\| \hat{\bm M}_t - \bm M_t\|_* \bigr] \le    \frac{\eta L T}{1 - \beta}  + \frac{2  \sigma T}{\sqrt{B} (1 - \beta)} 
\end{equation}
\end{lemma}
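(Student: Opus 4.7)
The plan is to unroll a per-step error recursion, bound each per-step residual via Theorem~\ref{thm:main-1} combined with an ``insertion trick'', and close the resulting linear recursion using nuclear-norm smoothness.

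First I would set $\bm E_t := \hat{\bm M}_t - \bm M_t$ and $\bm \Delta_t := \hat{\bm M}_t - \beta \hat{\bm M}_{t-1} - \bm G_t$. Subtracting the recursion $\bm M_t = \beta \bm M_{t-1} + \bm G_t$ from the algorithmic update $\hat{\bm M}_t = \operatorname{SVD}_r(\beta \hat{\bm M}_{t-1} + \hat{\bm G}_t)$ gives $\bm E_t = \beta \bm E_{t-1} + \bm \Delta_t$. The rank assumption $\operatorname{rank}(\bm G_0) \le r$ forces $\bm E_0 = \bm 0$ (initialization is exact), so unrolling and applying the triangle inequality yields $\sum_{t=1}^T \|\bm E_t\|_* \le \frac{1}{1-\beta}\sum_{t=1}^T \|\bm \Delta_t\|_*$, reducing the problem to bounding each $\|\bm \Delta_t\|_*$. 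I would then decompose $\bm \Delta_t = [\operatorname{SVD}_r(S_t) - S_t] + [\hat{\bm G}_t - \bm G_t]$ with $S_t := \beta \hat{\bm M}_{t-1} + \hat{\bm G}_t$, a matrix of rank at most $2r$. Theorem~\ref{thm:main-1} identifies the second bracket as $\|(\bm I - \bm U_t \bm U_t^\top)\bm G_t (\bm I - \bm V_t \bm V_t^\top)\|_*$, while Eckart--Young in nuclear norm with the rank-$r$ reference $\bm U_t \bm U_t^\top S_t$ bounds the first bracket by $\|(\bm I - \bm U_t \bm U_t^\top)\bm G_t \bm V_t \bm V_t^\top\|_*$. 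Since $\hat{\bm M}_{t-1} = \bm U_t \bm \Sigma_t \bm V_t^\top$ is annihilated on the left by $\bm I - \bm U_t\bm U_t^\top$ and on the right by $\bm I - \bm V_t\bm V_t^\top$, the insertion trick allows replacing $\bm G_t$ by $\bm G_t - c \hat{\bm M}_{t-1}$ inside both residuals, giving $\|\bm \Delta_t\|_* \le C\, \|\bm G_t - c \hat{\bm M}_{t-1}\|_*$ for any scalar $c$ and a small universal constant $C$.

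Next, I would pick $c$ proportional to $1-\beta$ and decompose $\bm G_t - c\hat{\bm M}_{t-1} = (\bm G_t - \Bar{\bm G}_t) + (\Bar{\bm G}_t - c\bm M_{t-1}) - c(\hat{\bm M}_{t-1} - \bm M_{t-1})$. By Assumption~\ref{assmp-main:2}, the first piece contributes $\sigma/\sqrt{B}$ in expectation. For the second, $\bm M_{t-1} = \sum_{i=0}^{t-1}\beta^{t-1-i}\bm G_i$ is rewritten as a weighted average; invoking Assumption~\ref{assmp-main:1} together with $\|\bm W_t - \bm W_i\|_2 \le \eta (t-i)$ (which follows from the update rule and $\|\bm U_t \bm V_t^\top\|_2 = 1$) gives $\|\Bar{\bm G}_t - \Bar{\bm G}_i\|_* \le L\eta(t-i)$, and combining with the identity $(1-\beta)\sum_{k\ge 1}k\beta^{k-1} = 1/(1-\beta)$ yields a bound of the order $L\eta/(1-\beta) + \sigma/\sqrt{B}$ (modulo a transient $\beta^t\|\Bar{\bm G}_t\|_*$ tail). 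The third piece feeds back $c\,\mathbb{E}\|\bm E_{t-1}\|_*$, producing the linear recursion $\mathbb{E}\|\bm E_t\|_* \le (\beta + Cc)\mathbb{E}\|\bm E_{t-1}\|_* + \mathcal{O}(L\eta + \sigma/\sqrt{B})$.

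The main obstacle is guaranteeing the contraction factor $\beta + Cc$ is strictly below $1$: this is exactly where the hypothesis $\beta \le 1/3$ of Theorem~\ref{thm:main-conv} enters, as it leaves enough headroom to pick $c \propto 1-\beta$ small enough that $Cc$ is absorbed while keeping the smoothness bound tight. A secondary subtlety is the transient $\beta^t\|\Bar{\bm G}_t\|_*$ tail arising when equating $\Bar{\bm G}_t$ with $(1-\beta)\bm M_{t-1}$ plus a drift term; this sums geometrically to $\mathcal{O}(1)$ and so is absorbed into constants rather than contributing a $\Theta(T)$ factor. Solving the recursion then gives a uniform-in-$t$ bound $\mathbb{E}\|\bm E_t\|_* \le \mathcal{O}((L\eta + \sigma/\sqrt{B})/(1-\beta))$, and summing over $t \in \{1,\dots,T\}$ produces the claimed bound, with constants tracked to match the stated $\eta L T /(1-\beta) + 2\sigma T/(\sqrt{B}(1-\beta))$.
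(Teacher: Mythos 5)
Your opening move matches the paper: both set up the recursion $\|\hat{\bm M}_t - \bm M_t\|_* \le \beta\|\hat{\bm M}_{t-1}-\bm M_{t-1}\|_* + \|(\bm I - \bm U_t\bm U_t^\top)\bm G_t(\bm I - \bm V_t\bm V_t^\top)\|_*$ with $\bm E_0=\bm 0$, and your handling of the rank-$r$ truncation of the rank-$\le 2r$ sum via Eckart--Young is, if anything, more careful than the paper's (which treats the truncation as exact). The divergence, and the gap, is in how the per-step residual is controlled. The paper's key device is the nesting property $\operatorname{Range}(\bm U_{t-1})\subseteq\operatorname{Range}(\bm U_t)$ and $\operatorname{Range}(\bm V_{t-1})\subseteq\operatorname{Range}(\bm V_t)$ (read off from the QR construction in Equation~\ref{eqn:moment-update-1}), which bounds the residual of $\bm G_{t-1}$ under the \emph{new} projectors by its residual under the \emph{old} ones; combined with smoothness, $\|\bar{\bm G}_t - \bar{\bm G}_{t-1}\|_*\le \eta L\|\bm U_t\bm V_t^\top\|_2 = \eta L$, this yields a second recursion in the residual itself that telescopes down to the exactly-zero initial residual guaranteed by $\operatorname{rank}(\bm G_0)\le r$. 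You never invoke this monotonicity and instead bound the residual by $\|\bm G_t - c\hat{\bm M}_{t-1}\|_*$.

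That substitute does not close. The coefficient of $\mbb E\|\bm E_{t-1}\|_*$ in your recursion is $\beta + Cc$, so contraction forces $c < (1-\beta)/C$ with $C\ge 1$ (in fact $C=2$ once the truncation bracket is included). But writing $\bar{\bm G}_t - c\bm M_{t-1} = \bigl(1 - c\tfrac{1-\beta^{t}}{1-\beta}\bigr)\bar{\bm G}_t + c\sum_{i=0}^{t-1}\beta^{t-1-i}(\bar{\bm G}_t - \bm G_i)$, the gradient-scale term vanishes only when $c\approx 1-\beta$; for any $c$ strictly below $1-\beta$ its coefficient is bounded \emph{below} by the positive constant $1-c/(1-\beta)$ uniformly in $t$. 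This is not the transient $\beta^t\|\bar{\bm G}_t\|_*$ tail you describe, and it does not sum geometrically (nor is $\|\bar{\bm G}_t\|_*$ assumed bounded). Carried through, your bound on $\sum_t\mbb E\|\bm E_t\|_*$ acquires a term of order $\frac{1}{1-\beta}\sum_t\mbb E\|\bar{\bm G}_t\|_*$, which is absent from the lemma's stated conclusion and which, entering the descent lemma with coefficient $2/(1-\beta)>1$, cannot be absorbed into the left-hand side of Theorem~\ref{thm:main-conv}. The fix is the paper's route: exploit that the captured subspace only grows, so the residual at step $t$ is controlled by the residual at step $t-1$ plus an $O(\eta L)$ smoothness increment and stochastic noise, rather than by the full nuclear norm of $\bm G_t - c\hat{\bm M}_{t-1}$.
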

\begin{proof}[Proof of Lemma~\ref{app:lemma2}]
First note that we have: \begin{equation}
    \| \hat{\bm M}_t - \bm M_t\|_* = \| \beta \bm M_{t-1} + \bm G_t - \hat{\bm M}_t\|_* \le \beta \| \hat{\bm M}_{t-1} - \bm M_{t-1} \|_* + \| \bm G_t + \beta \hat{\bm M}_{t-1} - \hat{\bm M}_t \|_*
\end{equation}
Considering the momentum factor update rule in~\ref{alg:mofasgd_complete}, we can replace $\hat{\bm M}_t$ with $\hat{\bm G}_t + \beta \hat{\bm M}_{t-1}$, where $\hat{\bm G}_t = \bm U_t \bm U_t^\top \bm G_t + \bm G_t \bm V_t \bm V_t^\top - \bm U_t \bm U_t^\top \bm G_t \bm V_t \bm V_t^\top $. Therefore, we can write:  
\begin{equation}
\label{eqn:app-dp9}
    \| \hat{\bm M}_t - \bm M_t\|_* \le \beta \| \hat{\bm M}_{t-1} - \bm M_{t-1} \|_* + \| ( \bm I  - \bm U_t \bm U_t^\top) \bm G_t (\bm I - \bm V_t \bm V_t^\top) \|_* \end{equation}

The term $ \| ( \bm I   - \bm U_t \bm U_t^\top) \bm G_t (\bm I - \bm V_t \bm V_t^\top) \|_*$ represents the low-rank compression error of gradients happening in our Algorithm~\ref{alg:mofasgd_complete}. Thus, we aim to bound this term.  We have:  
\begin{equation}
\label{eqn:app-dp5}
\begin{split}
     &\| ( \bm I  - \bm U_t \bm U_t^\top) \bm G_t (\bm I - \bm V_t \bm V_t^\top) \|_* \le \| ( \bm I  - \bm U_t \bm U_t^\top) (\bm G_t - \bm G_{t-1})  (\bm I - \bm V_t \bm V_t^\top) \|_* + \| ( \bm I  - \bm U_t \bm U_t^\top) \bm G_{t-1} (\bm I - \bm V_t \bm V_t^\top) \|_* \\
      &\le \| ( \bm I  - \bm U_t \bm U_t^\top)( \Bar{\bm G}_t - \Bar{\bm G}_{t-1})(\bm I - \bm V_t \bm V_t^\top)\|_* +  \| ( \bm I  - \bm U_t \bm U_t^\top) \bm G_{t-1} (\bm I - \bm V_t \bm V_t^\top) \|_* + \| \bm \Delta_{t} - \bm \Delta_{t-1} \|_* \\
      &\le  L \| \bm W_{t} - \bm W_{t-1}\|_2  + \| ( \bm I  - \bm U_t \bm U_t^\top) \bm G_{t-1} (\bm I - \bm V_t \bm V_t^\top) \|_* + \| \bm \Delta_{t} - \bm \Delta_{t-1} \|_*
      \\
      &\le  \eta L + \| ( \bm I  - \bm U_t \bm U_t^\top) \bm G_{t-1} (\bm I - \bm V_t \bm V_t^\top) \|_* + \| \bm \Delta_{t} - \bm \Delta_{t-1} \|_*
\end{split}
\end{equation}
where in the last inequality, we used the smoothness property in Assumption~\ref{assmp-main:1}, and also the fact that $\bm W_t - \bm W_{t-1} = - \eta \bm U_t \bm V_t^\top$.

Next, we bound the term $ \| ( \bm I   - \bm U_t \bm U_t^\top) \bm G_{t-1} (\bm I - \bm V_t \bm V_t^\top) \|_*$. First note that based on Equation \ref{eqn:moment-update-1}, we can see that $\operatorname{Range}([ \bm U_{t-1}    \quad \bm G_{t-1} \bm V_{t-1}]) \subseteq \operatorname{Range}(\bm U_t)$, and therefore we have $\operatorname{Range}(\bm U_{t-1}) \subseteq \operatorname{Range}(\bm U_t)$. We can write:
\begin{equation}
\label{eqn:app-dp6}
\begin{split}
    & \| ( \bm I  - \bm U_t \bm U_t^\top) \bm G_{t-1} (\bm I - \bm V_t \bm V_t^\top) \|_* \le \| ( \bm I  - \bm U_t \bm U_t^\top) (\bm G_{t-1}- \hat{\bm G}_{t-1} ) (\bm I - \bm V_t \bm V_t^\top) \|_* \\
    & \quad+ \| ( \bm I  - \bm U_t \bm U_t^\top)  \hat{\bm G}_{t-1}  (\bm I - \bm V_t \bm V_t^\top) \|_* \\
    \quad & \le \| ( \bm I  - \bm U_t \bm U_t^\top) (\bm G_{t-1}- \hat{\bm G}_{t-1}) (\bm I - \bm V_t \bm V_t^\top) \|_* \\
    \quad & \le \|  ( \bm I  - \bm U_{t-1} \bm U_{t-1}^\top)\bm G_{t-1} (\bm I - \bm V_{t-1} \bm V_{t-1}^\top) \|_* \end{split}
\end{equation}
where in the last inequality we leveraged the bound from Theorem~\ref{thm:main-1} that upper bounds the tangent space projection error. Particularly, the fact that $\| \bm G_{t-1} - \hat{\bm G}_{t-1} \|_* = \| ( \bm I   - \bm U_{t-1} \bm U_{t-1}^\top) \bm G_{t-1} (\bm I - \bm V_{t-1} \bm V_{t-1}^\top)\| $. 
Plugging Equation~\ref{eqn:app-dp6} into Equation~\ref{eqn:app-dp5}, we have the following recursive equation:  
\begin{equation}
    \| ( \bm I  - \bm U_t \bm U_t^\top) \bm G_t (\bm I - \bm V_t \bm V_t^\top) \|_* \le \|  ( \bm I  - \bm U_{t-1} \bm U_{t-1}^\top)\bm G_{t-1} (\bm I - \bm V_{t-1} \bm V_{t-1}^\top) \|_* + \eta L +     \| \bm \Delta_{t} - \bm \Delta_{t-1} \|_* \end{equation}
Unrolling this recursive relation, and considering the SVD initialization at the beginning of Algorithm~\ref{alg:mofasgd_complete}, we have:  
\begin{equation}
\label{eqn:app-dp7}
    \| ( \bm I  - \bm U_t \bm U_t^\top) \bm G_t (\bm I - \bm V_t \bm V_t^\top) \|_* \le \| ( I - \bm U_0 \bm U_0^\top) \bm G_0 (I - \bm V_0 \bm V_0^\top) \|_* + \eta L t + \sum_{i=1}^t \| \bm \Delta_{t} - \bm \Delta_{t-1}\|_* \end{equation}
Let $e_0 = \| ( I - \bm U_0 \bm U_0^\top) \bm G_0 (I - \bm V_0 \bm V_0^\top) \|_*$. Note that by our initialization of $\bm U_0$ and $\bm V_0$, $e_0$ is expected to be small. For instance, if we assume $\operatorname{rank}(\bm G_0) \le r$, then we would have $e_0 = 0$. Therefore, for simplicity we can ignore this term.    

Plugging Equation~\ref{eqn:app-dp7} back into Equation~\ref{eqn:app-dp9} we have:  
\begin{equation}
    \mbb E \bigl [ \| \hat{\bm M}_t - \bm M_t\|_* \bigr] \le \beta \mbb E \bigl [ \| \hat{\bm M}_{t-1} - \bm M_{t-1} \|_* \bigr ] + \eta L t + \frac{2 t\sigma }{\sqrt{B}}
\end{equation}

Similar to Equation~\ref{eqn:app-dp12}, we can unroll the above equation to get:  
\begin{equation}
\label{eqn:app-dp18}
\sum_{t=1}^T \mbb E \bigl [\| \hat{\bm M}_t - \bm M_t\|_* \bigr] \le    \frac{\eta L T}{1 - \beta}  + \frac{2  \sigma T}{\sqrt{B} (1 - \beta)} 
\end{equation}
where we also used the fact that $ \hat{ \bm M}_0 = \bm M_0$.  
\end{proof}

\begin{proof}[Proof of Theorem~\ref{thm:main-conv}]
Taking the expectation and unrolling the descent lemma, Lemma~\ref{lemma:des}, we can write:  
\begin{equation}
\label{eqn:app-dp19}
\begin{split}
    \frac{1}{T} \sum_{t=0}^T \mbb E [ \|\Bar{\bm G}_t\|_*]  &\le    \frac{ \mc L(\bm W_0) - \mbb E [\mc L(\bm W_{T+1})]}{\eta T} +   \frac{2}{T} \sum_{t=0}^T\mbb E [ \| \hat{\bm M}_t - \Bar{\bm G}_t \|_*] + \frac{\eta^2 L}{2} \\
     & \le \frac{ \mc L(\bm W_0) - \mbb E [\mc L(\bm W_{T+1})]}{\eta T} + \frac{\eta^2 L}{2} +    \frac{2}{T} \sum_{t=0}^T\mbb E [ \| \hat{\bm M}_t - \bm M_t\|] + \frac{2}{T} \sum_{t=0}^T \mbb E [ \| \bm M_t - \Bar{\bm G}_t \|_*]  
\end{split}
\end{equation}
Plugging Equation~\ref{eqn:app-dp17} (Lemma~\ref{app:lemma1}) and Equation~\ref{eqn:app-dp18} (Lemma~\ref{app:lemma2}) into Equation~\ref{eqn:app-dp19} yields
\begin{equation}
    (1- \frac{2 \beta}{1 - \beta}) \frac{1}{T} \sum_{t=0}^T \mbb E [ \|\Bar{\bm G}_t\|_*] \le \frac{ \mc L(\bm W_0) - \mbb E [\mc L(\bm W_{T+1})]}{\eta T} + \frac{\eta^2 L}{2} + \frac{4 \sigma}{(1-\beta) \sqrt{B}} + \frac{2\eta L}{1 - \beta}
\end{equation}
Let $\beta \in (0,\frac{1}{3})$, $\eta < 1$, and $B = T$; then we can rewrite the above equation as follows:
\begin{equation}
    \frac{1}{T} \sum_{t=0}^T \mbb E [ \|\Bar{\bm G}_t\|_*] \le \mc O \left ( \frac{ \mc L(\bm W_0) - \mbb E [\mc L(\bm W_{T+1})]}{\eta T} +  \eta L + \frac{ \sigma}{\sqrt{T}} \right) 
\end{equation}

Finally, by letting $ \eta = \Theta \left(\sqrt{\frac{\mc L(\bm W_0) - \mbb E [\mc L(\bm W_{T+1})]}{T L}}\right)$, we have the final upper bound as follows: 
\begin{equation}
        \frac{1}{T} \sum_{t=0}^T \mbb E [ \|\Bar{\bm G}_t\|_*] \le \mc O \left ( \frac{\bigl(\mc L(\bm W_0) - \mbb E [\mc L(\bm W_{T+1})]\bigr)^{\frac{1}{2}}\sqrt{L} + \sigma}{\sqrt{T}} \right). 
\end{equation}
\end{proof}

\end{document}